\documentclass[11pt]{article}
\usepackage[margin=1in]{geometry}
\usepackage{more_styles}

\usepackage[utf8]{inputenc} % allow utf-8 input
\usepackage[T1]{fontenc}    % use 8-bit T1 fonts
\usepackage{hyperref}       % hyperlinks
\usepackage{url}            % simple URL typesetting
\usepackage{booktabs}       % professional-quality tables
\usepackage{amsfonts}       % blackboard math symbols
\usepackage{nicefrac}       % compact symbols for 1/2, etc.
\usepackage{microtype}      % microtypography
\usepackage{xcolor}         % colors

\newenvironment{phase}[1][htb]
  {% Update algorithm name
   \begin{algorithm}[#1]%
  }{\end{algorithm}}

\title{Global Convergence of Federated Learning for Mixed Regression 
}

\author{
Lili Su \\
Electrical and Computer Engineering \\
Northeastern University \\ 
{l.su@northeastern.edu}
\and 
Jiaming Xu\\
The Fuqua School of Business\\
Duke University \\
{jiaming.xu868@duke.edu}
\and 
Pengkun Yang \\
Center for Statistical Science \\ 
Tsinghua University \\
{yangpengkun@tsinghua.edu.cn} 
}

\date{\today}

\begin{document}

\maketitle

\begin{abstract}

This paper studies the problem of model training under Federated Learning when clients exhibit cluster structure. 
We contextualize this problem in mixed regression, where each client has limited local data generated from one of $k$ unknown regression models. We design an algorithm that achieves global convergence from any initialization, and works even when local data volume is highly unbalanced  -- there could exist clients that contain $O(1)$ data points only. Our algorithm first runs moment descent on a few anchor clients (each with $\tilde{\Omega}(k)$ data points) to obtain coarse model estimates. Then each client alternately estimates its cluster labels  and refines the model estimates based on FedAvg or FedProx. A key innovation in our analysis is a uniform estimate on the clustering errors, which we prove by bounding the VC dimension of general polynomial concept classes based on the theory of algebraic geometry.

\end{abstract}

\section{Introduction}
\label{sec: intro}
Federated learning (FL) \cite{mcmahan2017communication} %is a novel machine learning framework that 
enables a massive number of %heterogeneous 
clients to  collaboratively 
train models without disclosing raw data. Heterogeneity in clients renders local data non-IID and highly unbalanced. %, which are among the fundamental challenges of FL. 
%Many applications (e.g.\,recommendation systems) call for carefully exploiting such heterogeneity. 
For example, smartphone users have different preferences in article categories (e.g. politics, sports or entertainment) and have a wide range of reading frequencies. In fact, distribution of the local dataset sizes % of clients % in such massive distributed system 
is often heavy-tailed \cite{dooms2013movietweetings,FACebook2020,feuerverger2012statistical}.   
% Federated Averaging (FedAvg) \cite{mcmahan2017communication} is widely adopted FL algorithm due to its simplicity and low communication cost. However, 

%Based on the number of models trained, 
Existing methods to deal with data heterogeneity 
can be roughly classified into three categories:  % categorized into three groups: 
a common model, fully personalized models, % (one for each client), 
and clustered personalized models; % (one for each cluster of clients); 
see Section \ref{sec: related work} for detailed discussion.  
Using a common model to serve highly heterogeneous clients has fundamental drawbacks;  recent work \cite{su2021achieving} rigorously quantified the heterogeneity level that the common model can tolerate with. %  only a moderate level of model response heterogeneity. 
Fully personalized models  \cite{smith2017federated,marfoq2021federated} often do not come with performance guarantees as the underlying optimization problem is generally hard to solve. 
In this work, we focus on clustered personalized models \cite{sattler2020clustered},  
%(aka Clustered Federated Learning \cite{sattler2020clustered}), 
i.e., clients within the same cluster share the same underlying model and clients across clusters have relatively different underlying models. The main challenge is that the cluster identities of the clients are unknown. We target at designing algorithms that %iteratively 
simultaneously 
learn the clusters and train models for each cluster. 

A handful of simultaneous clustering and training algorithms % based on interleaving clustering and local updates
have been proposed \cite{ mansour2020three,xie2021multi,li2021federatedsoftlearning,ghosh2020efficient}, mostly of which are heuristic and lack of convergence guarantees \cite{mansour2020three,xie2021multi,li2021federatedsoftlearning}.  Towards formal assurance, \cite{ghosh2020efficient} studied this problem through the lens of statistical learning yet postulated a number of strong assumptions such as the initial model estimates are very close to the true ones, linear models, strong convexity, balanced and high-volume of local data,
and sample splitting across iterations. 
%, which collectively limit the applicability of the techniques to practice.
Their numerical results \cite{ghosh2020efficient} suggested that sufficiently many random initializations would lead to at least one good realization satisfying the required closeness assumption. However, the necessary number of random initializations scales exponentially in both the input dimension and the number of clusters. Besides, in practice, it is hard to recognize which initialization is good and to winnow out high quality trained models. 
%as it i
%due to the fact that good initialization is barely recognizable.
% 
In this work, following \cite{ghosh2020efficient}, we adopt a statistical learning setup. In particular, we contextualize this problem as the canonical mixed regression, where each client has a set of local data generated from one of $k$ unknown regression models. Departing from standard mixed regression, in which each client keeps one data point only \cite{mccullagh2019generalized,li2018learning}, in our problem the sizes of local datasets can vary significantly across clients. We only make a mild assumption that there exist a few anchor clients (each with $\tilde{\Omega}(k)$ data points) 
and sufficiently many clients with at least two data points. % Such data heterogeneity has been
Similar mixed regression setup with data heterogeneity has been considered in \cite{kong2020meta}  in a different context of meta-learning; 
%where 
the focus there is on exploring structural similarities among a large number of tasks in centralized learning. %datasets for few-shot learning. 
Please refer to~\prettyref{rmk:comparision_warm_start} for more detailed technical  comparisons. 

%-- ranging from anchor clients (each with $\tilde{\Omega}(k)$ data points) to light clients (each with $O(1)$ data points) \footnote{Similar mixed regression setup is considered in \cite{kong2020meta} where the focus is on exploring structural similarities among a large number of tasks in the context of meta-learning.}. 

\paragraph{Contributions}
The main contributions of this work are summarized as follows:
\begin{itemize}
\item We design a two-phase federated learning algorithm to learn clustered personalized models in the context of mixed regression problems. In Phase 1, the parameter server runs a federated moment descent on a few anchor clients to obtain coarse model estimates based on subspace estimation. In each global round of Phase 2, each client alternately estimates its cluster label  and refines the model estimates based on FedAvg or FedProx.
The algorithm works even when local data volume is highly unbalanced  -- there could exist clients that contain $O(1)$ data points only. 
\item We theoretically prove the global convergence of our algorithm from any initialization. The proof is built upon two key ingredients: 1) We develop a novel eigengap-free bound to control the projection errors in subspace estimation; 2)
To deal with the sophisticated interdependence between the two phases and across iterations, 
we develop a novel uniform estimate on the clustering errors, which we derive by bounding the VC dimension of general polynomial concept classes based on the theory of algebraic geometry.
 Our analysis reveals that the final estimation error is dominated by the uniform deviation of the clustering errors, which is largely overlooked by the previous work. %~\cite{ghosh2020efficient}. 
\end{itemize}

\section{Related Work}
\label{sec: related work}
 %Federated Averaging  
FedAvg \cite{mcmahan2017communication} is a widely adopted FL algorithm due to its simplicity and low communication cost. However, severe data heterogeneity could lead to unstable training trajectories and land in suboptimal models \cite{li2020federated,zhao2018federated,karimireddy2020scaffold}. 
% Based on the number of models trained, existing methods to deal with data heterogeneity can be roughly categorized into three groups: a single common model, fully personalized models (one for each client), and clustered personalized models (one for each cluster of clients).   

\noindent{\bf A Common Model:}
% \paragraph{A Common Model:} 
To limit the negative impacts of data heterogeneity on the obtained common model, a variety of techniques based on variance reduction \cite{li2020federated,li2019rsa,karimireddy2020scaffold} and normalization \cite{wang2020tackling} have been introduced. Their convergence results mostly are derived under strong technical assumptions such as bounded gradient and/or bounded Hessian similarity which do not hold % in the statistical learning setup where 
when the underlying truth in the data generation is taken into account \cite{li2020federated,li2019rsa,karimireddy2020scaffold}.
In fact, none of them strictly outperform others in different instances of data heterogeneity \cite{li2021federated}.
Besides, the generalization errors of the common model with respect to local data  % distributions at individual clients 
are mostly overlooked except for a recent work \cite{su2021achieving}, which shows that the common model can tolerate a moderate level of model heterogeneity.

\noindent{\bf Fully Personalized Models: }
% \paragraph{Personalized Models:}  
% Observing the potential drawbacks of having a common model in serving highly heterogeneous clients,
% Smith et al.\,
\cite{smith2017federated} proposed Federated Multi-Task Learning (MTL) wherein different models are learned for each of the massive population of clients \cite{smith2017federated,marfoq2021federated}. Though conceptually % natural and 
attractive, the convergence behaviors of Federated MTL is far from well-understood because % that 
the objective is not jointly convex in the model parameters and the model relationships \cite{smith2017federated,argyriou2008convex,ando2005framework}. 
Specifically, \cite{smith2017federated} focused on solving the subproblem of updating the model parameters only. %relationships variables fixed. 
Even in the centralized setting, convergence is only shown under rather restricted assumptions such as equal dataset sizes for different tasks \cite{ando2005framework} and small number of common features 
% (i.e., $\le d$, where $d$ is the input dimension)
\cite{argyriou2008convex}. 
%Specifically, \cite{argyriou2008convex} proposed a novel regularization method for global convergence. 
Moreover,  the average excess error % over all tasks 
rather than the error of individual tasks is shown to decay with the dominating term $O(1/\sqrt{n})$, where $n$ is the homogeneous local dataset size \cite{ando2005framework}. 
%The benefits of multi-task representation learning over learning in isolated are characterized in \cite{maurer2016benefit} mostly in an implicit form; explicit benefits are derived for the subspace learning only. 
Despite recent progress \cite{tripuraneni2021provable,du2020few}, their results are mainly for linear representation learning and for balanced local data.  %, i.e., all tasks have equal number of training data. 
Parallel to Federated MTL, model personalization is also studied under the Model-Agnostic Meta-Learning (MAML) framework \cite{fallah2020personalized,jiang2019improving} where the global objective is modified to account for the cost of fine-tuning a global model at individual clients. However, they focused on studying the convergence in training errors only-- no characterization of the generalization error is given.

\noindent{\bf Clustered Personalized Models:} 
%\paragraph{Clustered Personalized Models:}
Clustered Federated Learning (CFL) \cite{sattler2020clustered,ghosh2020efficient,ghosh2019robust,xie2021multi,mansour2020three,li2021federatedsoftlearning} can be viewed as a special case of Federated MTL where tasks across clients form cluster structures. 
In addition to the algorithms mentioned in Section \ref{sec: intro}, i.e., which  simultaneously learn clusters and models for each cluster, 
% \cite{ghosh2020efficient, mansour2020three,xie2021multi}, 
other attempts have been made to integrate clustering with model training.  
%, i.e., clients within the same cluster share the same or similar underlying models and client across clusters have sharply different underlying models. 
\cite{sattler2020clustered} hierarchically clustered the clients in a post-processing fashion. % , i.e., after a FL algorithm arrives at a stationary point of the objective. 
To recover the $k$ clusters, $\Omega(k)$  empirical risk minimization problems need to be solved sequentially -- which is time-consuming. 
\cite{ghosh2019robust} proposed a modular algorithm that contains one-shot clustering stage,  
% on the locally trained models, 
 followed by $k$ individual adversary-resilient model training. % within each of the estimated client clusters. 
Their algorithm scales poorly in the input dimension, and requires local datasets to be balanced and sufficiently large (i.e., $n\ge d^2$). Moreover, % in this algorithm 
each client sequentially solves two empirical risk minimization problems. %, leading to significant computation waste. 
To utilize information across different clusters, \cite{li2021federatedsoftlearning} proposed soft clustering. Unfortunately, the proposed algorithm is only tested on the simple MNIST and Fashion-MNIST datasets, and no theoretical justifications are given.

\section{Problem Formulation}
\label{sec: problem formulation}
% 
% \paragraph{System model.}
% 

% \subsection{Notations}

% \nbr{only define notations used in the main paper. The rest of them move to appendices}

% \red{notation layout: $\theta_{ij,t}$: use $i$ for clients, $j$ for parameters, $t$ for iterations. }

%\noindent{\bf Setup:}
The FL system consists of a parameter server (PS) and $M$ clients. 
Each client $i\in [M]$  keeps a dataset $\calD_i=\sth{\pth{x_{ij}, y_{ij}}}_{j=1}^{n_i}$ that are generated from one of $k$ unknown regression models. 
Let % $n_i = \abth{\calS_i}$ and 
$N = \sum_{i=1}^M n_i$. 
The local datasets are highly unbalanced with varying $n_i$ across $i$.
%i.e.,  $n_i\not=n_{i^{\prime}}$ for some $i\not=i^{\prime}$. 
If $n_i = \tilde{\Omega}(k)$, we refer to client $i$ as \emph{anchor} client, which corresponds to active user in practice.  Anchor clients play a crucial rule in our algorithm design. We consider the challenging yet practical scenario wherein a non-anchor client may have $O(1)$ data points only. 

% \paragraph{Data heterogeneity.}
 
We adopt a canonical mixture model setup: 
% involving a mixture of multiple regression models: 
For each client $i\in[M]$,
\begin{equation}
\label{eq:model}
y_i=\ph{\bx_i} \theta_{z_i}^*+\zeta_i,
\end{equation}
where
$z_i\in[k]$ is the \emph{hidden} local cluster label, 
$\theta_1^*, \cdots, \theta_k^*$ are the true models of the clusters,  
$\ph{\bx_i} \in \reals^{n_i\times d}$ is the %$n_i\times d$ 
feature matrix with rows given by $\ph{x_{ij}}$, 
$y_i=(y_{ij}) \in \reals^{n_i}$ 
is the response vector, and $\zeta_i=(\zeta_{ij}) \in \reals^{n_i}$ is the 
noise vector. Note that the feature map $\phi$ can be non-linear (e.g.\,polynomials). 
%and hence our setup extends beyond the simple linear model.
The cluster label of client $i$ is randomly generated from one of the $k$ components from some unknown $p=(p_1,\dots,p_k)$ in probability simplex $ \psimplex^{k-1}$.
%, where $\psimplex^{k-1}$ is the $(k-1)$-dimensional probability simplex. 
That is, $\prob{z_i=j}=p_j$ for $j\in [k]$. 
%with probability $p_j$. 
In addition, %The model parameters
$\Norm{\theta_{j}^*}\le R$ for each component.
% $z_i=j$ with probability $p_j$ for some $p=(p_1,\dots,p_k) \in \psimplex^{k-1}$. 
The feature covariate $\ph{x_{ij}}$ is independent and sub-Gaussian 
$\alpha I_d\preceq \Expect[\ph{x_{ij}}\ph{x_{ij}}^\top] \preceq \beta I_d$.
%and $\alpha,\beta\asymp 1$. 
We assume that the covariance matrix is identical within the same cluster but may vary across different clusters,
\ie, $ \Expect[\ph{x_{ij}}\ph{x_{ij}}^\top] = \Sigma_{j}$ if $z_i=j.$
The noise $\zeta_{ij}$ is independent and sub-Gaussian with $\Expect[\zeta_{ij}]=0$ and $\Expect[\zeta_{ij}^2]\le \sigma^2$. 

Our formulation accommodates  heterogeneity in feature covariates, underlying local  models,  %parameter can shift from client to client, 
and observation noises \cite{kairouz2019advances}. 
For the identifiability of the true cluster models $\theta_j^*$'s, %a mixture model, 
we assume a minimum proportion and a pairwise separation of clusters. 
Formally, let $\Delta=\min_{j\ne j'}\Norm{\theta_j^*-\theta_{j'}^*}$ and $p_{\min}=\min_{j \in [k]} p_j$.
For ease of presentation, we assume the parameters $\alpha, \beta = \Theta(1)$, $\sigma/\Delta=O(1),$ and $R/\Delta=O(1)$, while our main results can be extended to show more explicit dependencies on these parameters with careful bookkeeping calculations.

\noindent{\bf Notations:}
% The following notations will be used. 
Let $[n]\triangleq \{1,\dots,n\}$. 
% Let $\lnorm{x}{\Sigma}\triangleq \sqrt{u^\top \Sigma u}$ denote the Mahalanobis norm.
For two sets $A$ and $B$, let $A \ominus B$ denote the symmetric difference $(A- B)\cup (B- A)$.
We use standard asymptotic notation: for two positive sequences $\{a_n\}$ and $\{b_n\}$, we write $a_n = O(b_n)$ (or $a_n \lesssim b_n$) if $a_n \le C b_n$ for some constant $C$ and sufficiently large  $n$; $a_n = \Omega(b_n)$ (or $a_n \gtrsim b_n$) if $b_n = O(a_n)$; $a_n = \Theta(b_n)$ (or $a_n \asymp b_n$) if $a_n = O(b_n)$ and $a_n=\Omega(b_n)$; 
%$a_n = o(b_n)$ or $b_n = \omega(a_n)$, if $a_n / b_n \to 0$ as $n\diverge$. 
Poly-logarithmic factors are hidden in $\tilde \Omega $. Given a matrix $A \in \reals^{n\times d}$, let $A=\sum_{i=1}^r \sigma_i u_i v_i^\top$ denote its singular value decomposition, where $r=\min\{n,d\}$, $\sigma_1 \ge \cdots \ge \sigma_r\ge 0$ are the singular values, and $u_i$ ($v_i$) are the corresponding left (right) singular vectors. We call $U=[u_1, u_2, \ldots, u_k]$ as the top-$k$ left singular matrix of $A.$ Let $\spn(U)=\spn\{u_1, \ldots, u_k\}$ denote the $k$-dimensional subspace spanned by $\{u_1, \ldots, u_k\}.$

\section{Main Results}
\label{sec: main results}
% 

% \subsection{Algorithm}
We propose a two-phase FL algorithm that enables clients to learn the 
% A key challenge is how to 
model parameters ${\theta}_1^*,\dots,{\theta}_k^*$ and their clusters simultaneously. 
% $\theta^*=(\theta_1^*,\dots,\theta_k^*)$ and the clusters simultaneously. 
% We propose the following two-phase FL algorithm: 
%given in~\prettyref{alg:FL_clustering}.
\begin{itemize}
\item[Phase 1:] \emph{(Coarse estimation via FedMD.)}
Run the federated moment descent algorithm to obtain
coarse estimates of model parameters $\theta_i^*$'s. 
\item[Phase 2:] \emph{(Fine-tuning via iterative FedX+clustering.)} 
In each round, each client first estimates its cluster label %$z_{i,t}\in[k]$ 
and then refines its local model estimate 
%of $\theta_{z_{i,t}}^*$ using local data %$(\bx_i, y_i)$ 
via either FedAvg or FedProx (which we refer to as FedX) \cite{mcmahan2017communication,li2020federated}. 
% \nbr{add references for these?}
% The central server aggregates all local estimates and updates the global model $\theta_t$.
% From $\theta_{t-1}=(\theta_{1,t-1},\dots,\theta_{k,t-1})$, each worker first estimate the cluster label $z_{i,t}\in[k]$ and then refines its local estimate of $\theta_{z_{i,t}}^*$ using the local data $(\bx_i, y_i)$ via either FedAvg or FedProx.
% The central server aggregates all local estimates and updates the global model $\theta_t$.
\end{itemize}
The details of FedMD and FedX+clustering are specified in \prettyref{phase:moment_descent} and \prettyref{phase:iterative-FL}, respectively.
% ~\prettyref{phase:moment_descent} and~\prettyref{phase:iterative-FL}, respectively. 

% where $\pi$ is permutation over $[k]$.
%\ls{Mention that this is a canonical metric and provide reference? Why permutation only?}
% \begin{algorithm}[htb]
% \caption{Two-phase Federated Learning Algorithm}\label{alg:FL_clustering}
% \begin{algorithmic}[1]
% \STATE \emph{(Warm start via FedMD)}
% Run the moment descent algorithm to obtain
% coarse estimates of cluster centers $\theta_i^*$'s. 

% \STATE \emph{(Boosting via FedX+clustering)} 
% % \FOR{$t=1,\dots,T$}
% % \STATE ...
% % \ENDFOR
% From $\theta^{t-1}=(\theta_1^{t-1},\dots,\theta_k^{t-1})$, each worker first estimate the cluster label $z_i^t\in[k]$ and then refines its local estimate of $\theta_{z_i^t}^*$ based on the local data $(\bx_i, y_i)$ based on either FedAvg or FedProx.
% The central server aggregates all local estimates and updates the global model $\theta^t$.
% \end{algorithmic}
% \end{algorithm}

\subsection{Federated moment descent}
\label{sec:start}
The key idea of the first phase of our algorithm is to leverage the existence of anchor clients. 
%the data heterogeneity in FL systems and utilize a few \emph{anchor} clients who have $\tilde{\Omega}(k)$ data points.
Specifically, the PS chooses a set $H$ of $n_H$ anchor clients uniformly at random. Each selected anchor client $i \in H$ maintains a sequence of estimators $\{ \theta_{i,t}\}$ 
that approaches  $\theta^*_{z_i}$, achieving
$\Norm{{\theta}_{i,t} - \theta^*_{z_i}} \le \epsilon \Delta$
for some small constant $\epsilon>0$ when $t$ is sufficiently large. 
%\ls{Here, mention informally what the output $\hat{\theta^*_{z_i}}$ can achieve. In particular, $\norm{\hat{\theta}_{z_i} - \theta^*_{z_i}}$?}

At high-level, we hope to have $\theta_{i,t}$ move along
a well calibrated direction $r_{i,t}$ that decreases the residual estimation error $\Norm{\Sigma_{z_i}(\theta^*_{z_i}-\theta_{i,t})}$, \ie, the variance of the residual
$\iprod{\phi(x_{ij})}{\theta^*-\theta_{i,t}}$. As such, we like to choose $r_{i,t}$ to be positively correlated with $\Sigma_{z_i}(\theta^*_{z_i}-\theta_{i,t})$. However, to estimate $\Sigma_{z_i}(\theta^*_{z_i}-\theta_{i,t})$ solely based on the local data of anchor client $i$, it requires  $n_i=\tilde{\Omega}(d)$, which is unaffordable in typical FL systems with high model dimension $d$ and limited local data. To resolve the curse of dimensionality, we decompose the estimation task at each chosen anchor client into two subtasks: we first estimate a $k$-dimensional subspace that $\Sigma_{z_i}(\theta^*_{z_i}-\theta_{i,t})$ lies in by pooling local datasets across sufficiently many non-anchor clients; then we project the local data of anchor client $i$ onto the estimated subspace and reduce the estimation problem from $d$-dimension to $k$-dimension. 

The precise description of our \prettyref{phase:moment_descent} procedure is given as below. 
For ease of notation, let $\varepsilon(x,y,\theta)\triangleq(y-\iprod{\ph{x}}{\theta})\ph{x}$.

\begin{phase}[htb]
%\begin{algorithm}[htb]
\caption{Federated Moment Descent (FedMD)}\label{phase:moment_descent}
\textbf{Input:} $n_H, k, m, \ell, T, T_1, T_2 \in \naturals$, $\alpha, \beta, \epsilon, \Delta \in \reals$, $\theta_0 \in \reals^d$ with $\norm{\theta_0} \le R$ \\
\textbf{Output:} $\hat{\theta}_1, \ldots, \hat{\theta}_k$

\vskip 0.5\baselineskip

PS chooses a set $H$ of $n_H$ anchor clients uniformly at random\; 

\For{each anchor client $i\in H$}
{$\theta_{i,0}\gets \theta_0$\;}

\For{$t=0,1, \ldots, T-1$}
{
PS selects a set $\calS_t$ of $m$ clients from $[M]\setminus \pth{H\cup \pth{\cup_{\tau=0}^{t-1} \calS_{\tau}}}$\; 
PS broadcasts $\{\theta_{i,t}, i \in H\}$ to all clients $i'$ in $\calS_t$;  \tcc*[h]{\color{blue} where $\cup_{\tau=0}^{-1} \calS_{\tau} = \emptyset$}\;   
PS calls {\sf federated-orthogonal-iteration} ($\calS_t$, $\{ 
%\ph{x_{i'1}} \left( y_{i'1} - \iprod{\ph{x_{i'1}}}{\theta_{i,t}} \right), 
%\ph{x_{i'2}} \left( y_{i'2} - \iprod{\ph{x_{i'2}}}{\theta_{i,t}} \right)
\varepsilon(x_{i'1},y_{i'1},\theta_{i,t}),\varepsilon(x_{i'2},y_{i'2},\theta_{i,t})
\}_{i' \in \calS_t}$, $k$, $T_1$) to output $\hat{U}_{i,t}$ for each anchor client $i \in H$;
\quad   \tcc*[h]{\color{blue}described in~\prettyref{alg:OI}}   

PS sends $\hat{U}_{i,t}$ to each anchor client $i \in H$;

Each anchor client $i$ calls {\sf power-iteration}($A_{i,t}A^\top_{i,t}$, $T_2$) to output $(\hat{\beta}_{i,t}, \hat{\sigma}^2_{i,t})$ with $A_{i,t}$ defined in~\prettyref{eq:A_i_t}; 

\uIf{$\hat{\sigma}_{i,t}> \epsilon \Delta$}
{
$\theta_{i, t+1} \gets \theta_{i,t} + r_{i,t} \eta_{i,t}$ and reports $\theta_{i, t+1}$, where $r_{i,t}=\hat{U}_{i,t} \hat{\beta}_{i,t}$ and $
\eta_{i,t} =  \alpha \hat{\sigma}_{i,t}/(2\beta^2)$\; 

}\Else{$\theta_{i,t}\gets \theta_{i,t+1}$ and reports $\theta_{i, t+1}$\;
}

}
%The parameter server runs a  distance-based clustering algorithm on $\{\theta_{i,T}\}_{i \in H}$ to output $k$ clusters and output $\hat{\theta}_j$ be the center of the $j$-the cluster for $j \in [k]$

PS computes the pairwise distance $\norm{\theta_{i,T}-\theta_{i',T}}$ for every pair of anchor clients $i, i' \in H$, assigns them in the same cluster when the pairwise distance is smaller than $\Delta/2$, and outputs $\hat{\theta}_j$ to be the center of the estimated $j$-th cluster for $j \in [k]$.

\end{phase}

% {\color{red} Federated orthogonal iteration~\prettyref{alg:OI} with $T_1$ iterations on $Y_{i,t}Y_{i,t}^\top$ to output $\hat{U}_{i,t} \in \reals^{d\times k}$ as the approximation of the top-$k$ left singular matrix of $Y_{i,t}$.

% \STATE (Subspace estimation)

% Run the Federated orthogonal iteration~\prettyref{alg:OI} with $T_1$ iterations on $Y_{i,t}Y_{i,t}^\top$ to output $\hat{U}_{i,t} \in \reals^{d\times k}$ as the approximation of the top-$k$ left singular matrix of $Y_{i,t}$.
% \STATE (Moment descent) 
% %For each client $i \in H$,
% % choose a set $\calD_{i,t}$ of $2\ell$ local data points $(x_j, y_j)$ and $(\tilde{x}_j, \tilde{y}_j)$ from $\calD_i \setminus (\calD_{i,0} \cup \cdots \cup \calD_{i,t-1}).$
% % Define 
% % $$
% % A_{i,t}= \frac{1}{\ell} \sum_{j \in \calD_{i,t}} \left( y_j - \iprod{\ph{x_j}}{\theta_{i,t}} \right) 
% % \left( \tilde{y}_j-\iprod{\ph{\tilde{x}_j}}{\theta_{i,t}} \right)  
% % \ph{x_j} \ph{\tilde{x}_j}^\top.
% % $$
% Run the power iteration with $T_2$ iterations to output $\hat{\beta}_{i,t}$ and $\hat{\sigma}^2_{i,t}$ as approximations of the leading left singular vector
% and singular value of $\hat{U}_{i,t}^\top A_{i,t} \hat{U}_{i,t}$, respectively. 

% where $r_{i,t}=\hat{U}_{i,t} \hat{\beta}_{i,t}$, and 
% $
% \eta_{i,t} =  \alpha \hat{\sigma}_{i,t}/(2\beta^2)$ if $\hat{\sigma}_{i,t}> \epsilon \Delta$;

% } 

In Step 9, PS estimates the subspace that the residual estimation errors $\{\Sigma_j (\theta_j^*-\theta_{i,t})\}_{j=1}^k$
lie in, in collaboration with clients in $\calS_t$.
%In particular, let $\calS_t$ consist of  $m$ clients with at least two data points freshly drawn at iteration $t.$
 In particular,   for each anchor client $i \in H$,  define $$
Y_{i,t}= \frac{1}{m} \sum_{i' \in \calS_t} \varepsilon(x_{i'1},y_{i'1},\theta_{i,t})
\varepsilon(x_{i'2},y_{i'2},\theta_{i,t})^\top
%\left( y_{i'1} - \iprod{\ph{x_{i'1}}}{\theta_{i,t}} \right) 
%\left( \tilde{y}_{i'2}-\iprod{\ph{\tilde{x}_{i'2}}}{\theta_{i,t}} \right)  \ph{x_j} \ph{\tilde{x}}_j^\top,
$$
%where $(x_j, y_j)$ and $(\tilde{x}_j,\tilde{y}_j)$ are two data points on client $j$.
We approximate the subspace spanned by $\{\Sigma_j (\theta_j^*-\theta_{i,t})\}_{j=1}^k$ via that spanned by the top-$k$ left singular vectors of $Y_{i,t}.$ To compute the latter, we adopt the following multi-dimensional generalization of the power method, known as {\sf orthogonal-iteration}~\cite[Section 8.2.4]{Golub-VanLoan2013}. 
%The output $\hat{U}_{i,t}$ serves as an approximation to 
%$U_{i,t}$ - the top-$k$ left singular matrix of $Y_{i,t}.$ 
In general, given a symmetric matrix $Y \in \reals^{d \times d}$,
the orthogonal iteration generates a sequence of matrices $Q_t \in \reals^{d \times k}$ as follows: $Q_0 \in \reals^{d\times k}$ is initialized as a random orthogonal matrix $Q_0^\top Q_0=I$ and $Y Q_t = Q_{t+1} R_{t+1}$ with QR factorization. When $t$ is large,
$Q_t$ approximates the top-$k$ left singualr matrix of $Y$, provided the existence of an eigen-gap $\lambda_k>\lambda_{k+1}$.
When $k=1$, this is just the the {\sf power-iteration} and we can further approximate the leading eigenvalue of $Y$
by the Raleigh quotient $Q_t^\top Y Q_t.$ When $Y$ is asymmetric, by running 
the orthogonal iteration on $YY^\top$, we can compute the top-$k$ left singular matrix of $Y$.
In our setting, the orthogonal iteration can be implemented in a distributed manner in FL systems as shown in~\prettyref{alg:OI} in the Appendix. 

In Step 11, each anchor client $i$ estimates the residual error $\Sigma_{z_i} (\theta_{z_i}^*-\theta_{i,t})$ by projecting $\varepsilon(x_{ij},y_{ij},\theta_{i,t})$ onto the previously estimated subspace, that is,  $\hat{U}_{i,t}^\top \varepsilon(x_{ij},y_{ij},\theta_{i,t})$. %and running the power iteration.
This reduces the estimation from 
%the $d$-dimensional vector $\Sigma_{z^*(j)}(\theta_{z^*(j)}-\theta_{i,t})$ to a $k$-dimensional vector 
$d$-dimension to $k$-dimension and hence
$\tilde{\Omega}(k)$ local data points suffice. 
Specifically, define
 \begin{align}
A_{i,t}= \frac{1}{\ell} \sum_{j\in \calD_{i,t}}  \left( \hat{U}_{i,t}^\top \varepsilon(x_{ij},y_{ij},\theta_{i,t}) \right)
\left( \hat{U}_{i,t}^\top \varepsilon(\tilde{x}_{ij},\tilde{y}_{ij},\theta_{i,t}) \right)^\top , \label{eq:A_i_t}
%\left( y_j - \iprod{\ph{x_j}}{\theta_{i,t}} \right) 
%\left( \tilde{y}_j-\iprod{\ph{\tilde{x}_j}}{\theta_{i,t}} \right)  \ph{x_j} \ph{\tilde{x}_j}^\top,
\end{align}
 where $\calD_{i,t}$ consists of $2\ell$ local data points $(x_{ij}, y_{ij})$ and $(\tilde{x}_{ij}, \tilde{y}_{ij})$ freshly drawn from $\calD_i$ at iteration $t$. Client $i$ runs the {\sf power-iteration} to output $\hat{\beta}_{i,t}$ and $\hat{\sigma}^2_{i,t}$ as approximations of the leading left singular vector
 and singular value of $ A_{i,t} $, 
%When $T_2\gtrsim \log (Nd)$.
%we can show that
%It can be shown that $r_{i,t}$ is almost parallel to $\Sigma_{z^*(j)}(\theta_{z^*(j)}-\theta_{i,t})$, and $\hat{\sigma}_{i,t}$ well approximates
%$\Norm{\Sigma_{z^*(j)}(\theta_{z^*(j)}-\theta_{i,t})}$. 
% Hence client $i$   moves $
%  \theta_{i, t} $  along the direction $r_{i,t}$ with step size $\eta_{i,t}$
 %according to~\prettyref{eq:theta_update} with $
 %\eta_{i,t} =  \alpha \sigma_{i,t}/(2\beta^2),
%$ 
%until satisfying the targeted accuracy. 
% $\hat{\sigma}_{i,t}\le \epsilon \Delta$. 
Then anchor client $i$ updates a new estimate $\theta_{i,t+1}$ by moving along the direction of the estimated residual error $r_{i,t}$ with an appropriately chosen step size $\eta_{i,t}$. 

We show that 
$\theta_{i,T}$
 is  close to $\theta^*_{z_i}$ for every anchor client $i \in H$ 
 and 
the outputs $\hat{\theta}_j$ are close to $\theta^*_j$
up to a permutation of cluster indices.
\begin{theorem}\label{thm:warm_start}
Let $\epsilon \in (0,1/4)$ be a  small but fixed constant. Suppose  that
\begin{align}
 m \ge p_{\min}^{-2}  \tilde{\Omega}(d) ,   \; \ell =\tilde{\Omega}( k),    \; T=\Omega\left(1 \right),  \; T_1=\Omega\left( k \log (Nd) \right), \; T_2=\Omega\left( \log(Nd) \right). \label{eq:warm_start_cond}
\end{align}
With probability at least $1-O(N^{-9})$, for all initialization $\theta_0$ with $\norm{\theta_0}\le R,$
\begin{align}
\sup_{i \in H} \norm{\theta_{i,T} - \theta^*_{z_i}} \le \epsilon \Delta.  \label{eq:warm_start_error}
\end{align}
Furthermore, when $n_H \ge \log N/p_{\min}, $  with probability at least $1-O(N^{-9})$, 
\begin{align}
%\min_{\pi} \norm{\hat{\theta}_{\pi(j)} - \theta^*_j} 
d(\hat{\theta}, \theta^* )\le \epsilon \Delta, \label{eq:warm_start_cluster}
\end{align}
where 
\begin{equation}
 \label{eq:distance}    
 d(\hat\theta, \theta^*)
 \triangleq \min_\pi \max_{j\in[k]}\norm{\hat\theta_{\pi(j)}-\theta_j^*}, 
 ~~~ \text{where} ~ \pi ~ \text{is permutation over $[k]$.}
 \end{equation}
\end{theorem}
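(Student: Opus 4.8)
The plan is to decouple a purely deterministic ``descent'' argument for the anchor iterates from the probabilistic control of the subspace‑ and residual‑estimation primitives, and then append a short combinatorial argument for the final clustering step. I would run the argument in four steps.

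\emph{Step 1 (deterministic contraction at a fixed anchor).} I would fix $i\in H$ with hidden label $z_i=j$, set $e_t=\theta_j^*-\theta_{i,t}$ and $w_t=\Sigma_j e_t$, and first record that — using conditional independence of the two fresh samples in each summand and $\Expect[\zeta]=0$ — $\Expect[Y_{i,t}\mid\theta_{i,t}]=M_t:=\sum_{l\in[k]}p_l\,\Sigma_l(\theta_l^*-\theta_{i,t})(\theta_l^*-\theta_{i,t})^{\top}\Sigma_l$, a PSD matrix of rank $\le k$ whose column space contains $w_t$, and $\Expect[A_{i,t}\mid\hat U_{i,t},\theta_{i,t}]=(\hat U_{i,t}^{\top}w_t)(\hat U_{i,t}^{\top}w_t)^{\top}$, a rank‑one $k\times k$ matrix with leading eigenpair $\big(\|\hat U_{i,t}^{\top}w_t\|^2,\ \hat U_{i,t}^{\top}w_t/\|\hat U_{i,t}^{\top}w_t\|\big)$. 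I would then grant three ``good‑estimate'' events, to be verified later: (a) $\|(I-P_{\hat U_{i,t}})w_t\|\le\delta\|w_t\|$ whenever $\|w_t\|\gtrsim\epsilon\Delta$; (b) $\hat\sigma_{i,t}=(1\pm\delta)\|P_{\hat U_{i,t}}w_t\|\pm\delta'$; and (c) $r_{i,t}=\hat U_{i,t}\hat\beta_{i,t}$ lies within angle $O(\sqrt\delta)$ of $P_{\hat U_{i,t}}w_t$. On these events, if $\hat\sigma_{i,t}>\epsilon\Delta$ then $r_{i,t}$ is within angle $O(\sqrt\delta)$ of $w_t/\|w_t\|=\Sigma_j e_t/\|\Sigma_j e_t\|$, so $\alpha I\preceq\Sigma_j\preceq\beta I$ gives $\iprod{e_t}{r_{i,t}}\ge(\alpha/\beta-O(\sqrt\delta))\|e_t\|$ and $\|w_t\|\in[\alpha\|e_t\|,\beta\|e_t\|]$; since $\eta_{i,t}=\alpha\hat\sigma_{i,t}/(2\beta^2)\asymp(\alpha/\beta^2)\|w_t\|$ is then of the right order, expanding $\|e_{t+1}\|^2=\|e_t\|^2-2\eta_{i,t}\iprod{e_t}{r_{i,t}}+\eta_{i,t}^2\|r_{i,t}\|^2$ yields $\|e_{t+1}\|\le\rho\|e_t\|$ for an absolute $\rho=\rho(\alpha/\beta)<1$ (in the idealized update, $\rho^2=1-3\alpha^2/(4\beta^2)$), up to an additive $O(\delta+\delta')$. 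If instead $\hat\sigma_{i,t}\le\epsilon\Delta$, then (a) forbids $\|w_t\|\gg\epsilon\Delta$, so $\|e_t\|\le\|w_t\|/\alpha=O(\epsilon\Delta)$; moreover $\theta_{i,t}$ is then frozen and the residual stays $O(\epsilon\Delta)$, so the threshold is never exceeded again. As $\|e_0\|\le 2R$ with $R/\Delta=O(1)$, $T=\Omega(1)$ contractions (or an earlier freeze) give $\|e_T\|\le\epsilon\Delta$ after rescaling $\epsilon$ by an absolute constant.

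\emph{Step 2 (the eigengap‑free subspace estimate --- the crux).} The obstacle is that $M_t$ has eigenvalues $\mu_1\ge\cdots\ge\mu_k\ge0$ with $\mu_k$ possibly arbitrarily small --- exactly when $\theta_{i,t}$ drifts near some $\theta_l^*$, $l\ne j$ --- so there is no guaranteed eigengap at position $k$, and Davis–Kahan cannot pin down the whole top‑$k$ subspace. My remedy is that only the single vector $w_t$ needs to be captured, and only when it is ``large''. I would set $N_t=\|Y_{i,t}-M_t\|$ and pick a threshold $\tau$ from a geometric grid inside $[c_1\epsilon\Delta\sqrt{p_{\min}},\,c_2R]$ that, by pigeonhole over the $\le k$ eigenvalues, avoids all eigenvalues of $M_t$ up to a constant multiplicative buffer; letting $k'\le k$ be the number of eigenvalues of $M_t$ exceeding $\tau^2$, the matrix $M_t$ then has a genuine constant‑factor eigengap at position $k'$. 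Since orthogonal iteration with target dimension $k\ge k'$ provably captures the top‑$k'$ invariant subspace, $T_1=\Omega(k\log(Nd))$ steps together with $N_t\ll\tau^2$ (at the level of $Y_{i,t}Y_{i,t}^\top$, whose mean is $M_t^2$ up to an $O(d/m)$ floor) give $\|(I-P_{\hat U_{i,t}})P_{V_{k'}}\|\le(Nd)^{-10}$, where $V_{k'}$ is the top‑$k'$ eigenspace of $M_t$. Finally, if $\|w_t\|\gtrsim\epsilon\Delta$ then $w_t$ contributes an eigenvalue $\gtrsim p_{\min}(\epsilon\Delta)^2\gtrsim\tau^2$ to $M_t$ — which requires controlling the non‑orthogonality of the generating vectors $\Sigma_l(\theta_l^*-\theta_{i,t})$ via their Gram matrix — so $w_t$ essentially lies in $V_{k'}$ and hypothesis (a) follows. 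The price is $N_t\ll p_{\min}(\epsilon\Delta)^2$, which is where the $p_{\min}^{-2}$ in $m\ge p_{\min}^{-2}\tilde{\Omega}(d)$ is incurred.

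\emph{Step 3 (probabilistic control, uniformly in $t$ and $\theta_0$).} Because $T=\Omega(1)$ and each step moves $\theta$ by $O(R)$, every iterate $\theta_{i,t}$ reachable from some $\theta_0$ with $\|\theta_0\|\le R$ lies in a fixed ball of radius $O(R)$; and since $\calS_t$ and $\calD_{i,t}$ are disjoint across $t$ and drawn obliviously to $\theta_0$, I would control $N_t$ \emph{uniformly} over $\theta$ in that ball by a covering argument together with a matrix Bernstein / sub‑exponential tail bound for $\frac1m\sum_{i'\in\calS_t}\varepsilon(x_{i'1},y_{i'1},\theta)\varepsilon(x_{i'2},y_{i'2},\theta)^{\top}$ (each summand an outer product of sub‑exponential vectors of scale $O(R)$); the net contributes a factor $e^{O(d\log(Nd))}$ that is absorbed by $m\ge p_{\min}^{-2}\tilde{\Omega}(d)$. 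An analogous but $k$‑dimensional estimate with $\ell=\tilde{\Omega}(k)$ gives $\|A_{i,t}-\Expect[A_{i,t}]\|\ll(\epsilon\Delta)^2$, and since the target is rank‑one, $T_2=\Omega(\log(Nd))$ power‑iteration steps then yield (b)–(c). Convergence of the orthogonal/power iterations from random initialization holds with failure probability $(Nd)^{-\omega(1)}$ for each fixed target, and pushing this through the covering of $\theta_0$ and the $O(n_HT)$ primitive calls makes it hold for all $\theta_0$ simultaneously. Intersecting all these events --- total failure probability $O(N^{-9})$ --- makes (a)–(c) hold at every anchor and for every initialization, so Step 1 gives $\sup_{i\in H}\|\theta_{i,T}-\theta_{z_i}^*\|\le\epsilon\Delta$, i.e.\ \prettyref{eq:warm_start_error}.

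\emph{Step 4 (clustering the anchor estimates).} On the event of \prettyref{eq:warm_start_error}, any two anchors with equal label are within $2\epsilon\Delta<\Delta/2$ (using $\epsilon<1/4$), while any two with distinct labels are at distance $\ge\Delta-2\epsilon\Delta>\Delta/2$; hence the pairwise‑distance clustering at threshold $\Delta/2$ recovers exactly the label classes occurring in $H$. With $n_H\ge\log N/p_{\min}$ (with a large enough absolute constant), the probability that some label $l\in[k]$ is absent from $H$ is $\le k(1-p_{\min})^{n_H}\le k\,e^{-p_{\min}n_H}=O(N^{-9})$ (as $k\le N$), so all $k$ classes appear; since the center of class $l$ averages points each within $\epsilon\Delta$ of $\theta_l^*$, the induced matching $\pi$ satisfies $\max_{l\in[k]}\|\hat\theta_{\pi(l)}-\theta_l^*\|\le\epsilon\Delta$, which is \prettyref{eq:warm_start_cluster}. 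The technically demanding part throughout is Step 2 --- turning the eigengap‑free situation into the clean pointwise guarantee (a), lower‑bounding the eigenvalue carried by $w_t$ against the non‑orthogonality of the generating vectors, and doing so uniformly in $\theta_0$ --- whereas Steps 1, 3 (modulo the uniform‑concentration bookkeeping) and 4 are comparatively standard.
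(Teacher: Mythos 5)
Your overall architecture --- pigeonhole over eigenvalue ratios to manufacture an eigengap, capture of the residual direction by orthogonal/power iteration on $Y_{i,t}Y_{i,t}^\top$, a deterministic geometric contraction with the stopping rule, and the coupon-collector plus $\Delta/2$-threshold clustering --- is essentially the paper's. However, Step 3 has a genuine quantitative gap. You replace the paper's mechanism for handling the dependence between the iterate $\theta_{i,t}$ and the data by a covering argument over a ball of radius $O(R)$ and assert that the net factor $e^{O(d\log(Nd))}$ is ``absorbed'' by $m\ge p_{\min}^{-2}\tilde{\Omega}(d)$. It is not: a union bound over $e^{\Theta(d\log(Nd))}$ net points requires per-point failure probability $e^{-\Theta(d\log(Nd))}$, and at that confidence level a matrix-Bernstein bound for $\frac1m\sum_{i'}\varepsilon\,\tilde{\varepsilon}^\top$ (summands of operator norm $\approx d$ after truncation, variance proxy $\approx md$) only yields a deviation of order $d\sqrt{\log(Nd)/m}$; the requirement $\|Y_{i,t}-\mathbb{E}[Y_{i,t}]\|\ll p_{\min}(\epsilon\Delta)^2=\Theta(p_{\min})$ then forces $m=\tilde{\Omega}(d^2/p_{\min}^2)$, a full factor of $d$ beyond the theorem's hypothesis. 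The paper avoids this entirely: since $\mathcal{S}_t$ and $\mathcal{D}_{i,t}$ are freshly drawn each round, one conditions on $\theta_{i,t}$ (independent of the round-$t$ data), so concentration is needed only at that single point, per-call failure probability $N^{-10}$ suffices, and the union bound is only over the $O(n_H T)$ calls. To repair your route you must either strengthen the hypothesis on $m$ to $p_{\min}^{-2}\tilde{\Omega}(d^2)$ or replace the plain net with this conditioning-on-fresh-data argument.

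A second, smaller soft spot is the end of Step 2: the inference ``$\|w_t\|\gtrsim\epsilon\Delta$ implies $w_t$ contributes an eigenvalue $\gtrsim p_{\min}(\epsilon\Delta)^2$, hence $w_t$ essentially lies in $V_{k'}$,'' with the promised Gram-matrix control of non-orthogonality, is not a complete argument: a large Rayleigh quotient along $w_t$ does not by itself place $w_t$ near the above-threshold eigenspace. The paper's gap-free lemma shows none of this is needed: take $u$ to be the unit vector along $(I-V_{k'}V_{k'}^\top)w_t$; positivity of the remaining rank-one terms gives $p_{z_i}\|(I-V_{k'}V_{k'}^\top)w_t\|^2\le u^\top M_t u\le \lambda_{k'+1}(M_t)\le \tau^2$ (plus the concentration error), which bounds the projection error of every $\Sigma_j(\theta_j^*-\theta_{i,t})$ simultaneously, irrespective of its size and with no Gram-matrix analysis. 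With that substitution (and the correlated fact that the bound holds for small residuals too, so no case distinction on $\|w_t\|$ is needed), Step 2 becomes correct and coincides with the paper's Lemma on gap-free projection errors; Steps 1 and 4 match the paper's argument as written.
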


Note that in~\prettyref{eq:distance} we take a minimization over permutation, as
 the cluster indices are unidentifiable. 
 
\prettyref{phase:moment_descent} %~\prettyref{phase:moment_descent} 
uses fresh data at every iteration.
In total we need
$p_{\min}^{-2}\tilde{\Omega}(d)$ clients with at least two data points and $\tilde{\Omega}(1/p_{\min})$ anchor clients. This requirement is relatively mild,  as 
typical FL systems 
% in high-dimensions,  the dimension of the model parameter, $d$, is often much larger than the number of clusters, $k$.
 have a large number of clients with $O(1)$ data points and a few anchor clients with moderate data volume.

 We defer the detailed proof of \prettyref{thm:warm_start} to~\prettyref{app:warm_start}.
A key step in our proof is to show 
the residual estimation errors $\{ \Sigma_j(\theta_j^*- \theta_{i,t}) \}_{j=1}^k$ approximately lie in $\spn(\hat{U}_{i,t}).$
Unfortunately, the eigengap of $Y_{i,t}$ could be small, especially when $\theta_{i,t}$ gets close to  %approaches 
$\theta^*_{z_i}$; and hence the standard Davis-Kahan theorem~\cite{DavisKahan70} cannot be be applied. This issue is
further exaggerated by the fact that the convergence rate of the orthogonal iteration also crucially depends on the eigengaps~\cite{Golub-VanLoan2013}. For these reasons, $\spn(\hat{U}_{i,t})$ may not be close to $\spn\{ \Sigma_j(\theta_j^*- \theta_{i,t}) \}_{j=1}^k$ at all. To resolve this issue, we develop a novel gap-free bound to show that
projection errors $ \hat{U}_{i,t}^\top \Sigma_j(\theta_j^*- \theta_{i,t})$ are small for every $j \in [k]$
%onto the subspace spanned by $k$ columns of 
(cf.~\prettyref{lmm:DK_Power}).

%  expect the existence of a few anchor clients with moderate data volume while there are many clients with $O(1)$ data points.

\begin{remark}[Comparison to previous work]\label{rmk:comparision_warm_start}
Our algorithm is partly inspired by~\cite{li2018learning} which focuses on the noiseless mixed linear regression, but
deviates in a number of crucial aspects. %there are several crucial differences. %which simplify our algorithm design:
 First, our algorithm crucially utilizes the fact that each  client chosen in $S_t$ has at least two data points and hence the space of the singular vectors of $\expect{Y_{i,t}}$ is spanned by 
    $\{\Sigma_j (\theta_j^*-\theta_{i,t})\}_{j=1}^k$. In contrast, \cite{li2018learning} relies on the sophisticated method of moments which only works under the Gaussian features  and requires exponential in $k^2$ many data points. Second, our algorithm further crucially exploits the existence of anchor clients and greatly simplifies the moment descent algorithm in~\cite{li2018learning}.

Our algorithm also bears similarities with the meta-learning algorithm in~\cite{kong2020meta}, which also uses clients collectively
for subspace estimation and anchor clients for estimating cluster centers. However, there are several key differences. First, \cite{kong2020meta} focuses on the centralized setting and relies on one-shot estimation,
    under the additional assumption that the covariance matrix of features across all clusters are identical.  Instead, our moment descent algorithm is iterative, amenable to a distributed implementation in FL systems, and allows for covariance matrices varying across clusters. Second, in the fine-tuning phase, \cite{kong2020meta} uses the centralized least squares to refine the clusters estimated with anchor clients, under the additional assumption that $\Omega(\log k)$ data points for every client. 
    In contrast, as we will show later, we use the FedX+clustering to iteratively cluster clients and
    refine cluster center estimation. 
\end{remark}

% \begin{proof}
% Note that by standard coupon collector's problem, we deduce that if $n_H \ge \log (k/\delta)/p_{\min}, $ then with probability at least $1-\delta$, 
% $H \cap \{ i: z_i=j \} \neq \emptyset$ for all $ j \in [k]$. To see this, note that
% \begin{align*}
% \prob{ H \cap \{ i: z_i=j \} \neq \emptyset, \, \forall j \in [k] }
% & \ge 1- \sum_{ j \in [k] }\prob{ H \cap \{ i: z_i=j \} = \emptyset} \\
% & \ge 1- k \left(1-p_{\min}\right)^{n_H} \\
% & \ge 1- k \exp \left( - p_{\min} n_H \right) \ge 1-\delta.
% \end{align*}
% Therefore, as long as $\epsilon < \Delta/2$, from $\theta_{i,t}$, we can exactly recover the $k$ clusters of the heavy users. Let $\hat{\theta}_j$ denote the center of the recovered cluster $j$.
% Then we have $\Norm{\theta_j - \theta^*_j}\le \epsilon$ for all $j \in [k].$
% \end{proof}
%\nbr{Question: what is the warm-start condition needed for the boosting phase?}
\subsection{FedX+clustering}
\label{sec:iterative}
% \nbr{algorithm, phase mess up}\ls{To resolve this, do not use preety ref}
At the end of \prettyref{phase:moment_descent}, only the selected anchor clients in $H$ obtained coarse estimates of their local true models (characterized in \eqref{eq:warm_start_error}). % while the local model estimates at other clients are unchanged. 
%have not been updated yet. 
In \prettyref{phase:iterative-FL}, both anchor clients in $H$ and 
%continue to boost the accuracy of their local model estimates and 
all the other clients (anchor or not) will % be able to
participate and update their local model estimates. 

\prettyref{phase:iterative-FL} is stated in a generic form for any loss function $L(\theta, \lambda; \calD)$, where $\theta = (\theta_1,\dots,\theta_k)\in \reals^{dk}$ is the cluster parameters, 
$\lambda\in\psimplex^{k-1}$ represents the likelihood of the cluster identity of a client,
% the cluster identity of the client, 
and $\calD$ denotes the client's dataset. This generic structure covers the idea of soft clustering \cite{li2021federatedsoftlearning}. 
Note that unlike \prettyref{phase:moment_descent} where each anchor client $i$ only maintains an estimate $\theta_{i,t}$
of its own model, in~\prettyref{phase:iterative-FL}, each client $i$ maintains model estimates
 $\theta_{i\cdot,t}=(\theta_{i1,t},\dots,\theta_{ik,t})$ for all clusters. 
 
\begin{phase}[htb]
%\begin{algorithm}[htb]
\caption{FedX+clustering}\label{phase:iterative-FL}
\textbf{Input:} $\theta=(\theta_1,\dots,\theta_k)$ from the output of \prettyref{phase:moment_descent}, 
% loss function $L(\theta,z;\calD)$, 
$\eta, T'$.\\
% dataset $\calD_i$ (stored locally) for every client $i\in[M]$ 
\textbf{Output:} {$\hat\theta=(\hat\theta_1,\dots,\hat\theta_k)$}

% \begin{algorithmic}[1]
 PS sets $\theta_T \gets \theta$.
 
\For{$t=T+1,\dots,T+T'$}
{
 PS broadcasts $\theta_{t-1}$ to all clients\;  
 
 Each client $i$ estimates the likelihood of its local cluster label by
\begin{equation}
\label{eq:estimate-label}
\lambda_{i,t}\gets\arg\min_{\lambda\in \psimplex^{k-1}}L(\theta_{t-1},\lambda;\calD_i); 
\end{equation}

Each client $i$ refines its local model based on either FedAvg or FedProx with 
%  the local empirical risk function 
$L_i(\theta)=L(\theta,\lambda_{i,t};\calD_i)$, and reports the updated local parameters $\theta_{i\cdot,t}=(\theta_{i1,t},\dots,\theta_{ik,t})$.
% \nb{add $\theta_{i\cdot,t}$}.
\begin{itemize}
\item FedAvg-based: it runs $s$ steps of local gradient descent:
\begin{align*}
%\label{eq: fedavg phase 2}
\theta_{i\cdot,t} \gets \calG_i^s (\theta_{t-1}),\quad \text{where } \calG_i(\theta) = \theta- \eta \nabla L_i(\theta)\; 
\end{align*}
% \[
% \theta_{i\cdot,t} = \calG_i^s (\theta_{t-1}),\quad \calG_i(\theta)=\theta- \eta \nabla L_i(\theta)\; 
% \]
\item FedProx-based: it solves the local proximal optimization:
\begin{align*}
%\label{eq: fedavg phase 2}
\theta_{i\cdot,t} \gets \arg\min_{\theta}L_i(\theta)+\frac{1}{2\eta}\Norm{\theta-\theta_{t-1}}^2\; 
\end{align*}
% \[
% \theta_{i\cdot,t}=\arg\min_{\theta}L_i(\theta)+\frac{1}{2\eta}\Norm{\theta-\theta_{t-1}}^2\; 
% \]
\end{itemize}

 PS updates the global model as 
$
\theta_t \gets \sum_{i=1}^M w_i \theta_{i\cdot,t},
$
where $w_i=n_i/N$.
}
% $w_i=\frac{|\calD_i|}{\sum_j |\calD_j|}$.
% \ENDFOR
% \end{algorithmic}
%\end{algorithm}
\end{phase}

% In each global round, participating clients first locally estimate $\lambda$ and then refine $\theta$, and the parameter server aggregates all local estimates.
% The local estimation on the clients has a flavor of alternating minimization, but instead of minimizing over $\theta$, we run a few steps of gradient descent in FedAvg or solving a proximal optimization in FedProx. 
% Similar framework is analyzed in \cite{ghosh2020efficient}, which uses a single gradient step over fresh data from a Gaussian model in each round. 
% % system perspective: sampling clients/data
% % save communication 
% In our iterative clustering phase, at every iteration the clients are allowed to reuse all local data, including those used in the first phase.

In \prettyref{phase:iterative-FL}, the local estimation at each client has a flavor of alternating minimization: It first runs a minimization step to estimate its cluster,  and then runs a FedAvg or FedProx update to refine model estimates.  % $\hat\theta$. 
% In our \prettyref{phase:iterative-FL}, 
To allow the participation of clients with $O(1)$ data points only, at every iteration the clients are allowed to reuse all local data, including those used in the first phase.
Similar alternating update is analyzed in \cite{ghosh2020efficient} yet under the strong assumption that the update in each round is over fresh data with  Gaussian distribution. Moreover, the analysis therein is restricted to the setting where the model refinement at each client is via running a single gradient step, which is barely used in practice but much simpler to analyze than FedAvg or FedProx update. %\ls{They do have average over models!}

%In the mixture of multiple regressions problem, 
%each client has a local dataset $\calD_i=(\bx_i,y_i)$ of size $n_i$. 
In our analysis, we consider the square loss
\[
L(\theta,\lambda;\calD_i)=\frac{1}{2n_i}\sum_{j=1}^k \lambda_j \norm{y_i-\ph{\bx_i}\theta_j}^2. 
\]
In this context, %  With this loss function, 
% In this case, 
\eqref{eq:estimate-label} yields a vertex of the probability simplex $\lambda_{ij,t}=\indc{j=z_{i,t}}$, where
\begin{equation}
\label{eq:hat-z}
z_{i,t} = \arg\min_{j\in[k]}\Norm{ y_i - \ph{\bx_i} \theta_{j,t-1} }.
\end{equation}
The estimate $z_{i,t}$ provides a hard clustering label.
Hence, in each round, only one regression model will be updated per client.

To capture the tradeoff between communication cost and statistical accuracy using FedAvg or FedProx, we introduce the following quantities from \cite{su2021achieving}:
\[
\gamma\triangleq \eta \max_{i\in[M]}\frac{1}{n_i}\norm{\ph{\bx_i}}^2,
\qquad
\kappa  \triangleq  
\begin{cases}
\frac{\gamma s}{1- (1-\gamma)^s} & \text{for FedAvg},\\
1+\gamma & \text{for FedProx}.
\end{cases}
\]
We choose a properly small learning rate $\eta$ such that $\gamma<1$. Here, $\kappa\ge 1$ quantifies the stability of local updates.
% , and measures the increase of estimation error .
Notably, $\kappa \approx 1$ using a relatively small $\eta$.

% Although each client may have few data points, 
For the learnability of model parameters, we assume that collectively there are sufficient data in each cluster. 
In particular, we assume $N_j \gtrsim d $, where $N_j=\sum_{i:z_i=j}n_i$ denotes the number of data points in cluster $j$.
% and $N=\sum_{i=1}^M n_i$ the total number of data.
To further characterize the \emph{quantity skewness} (i.e., the imbalance of data partition $n=(n_1,\dots,n_M)$ across clients), we adopt the $\chi^2$-divergence, which is defined as $\chi^2(P\| Q)=\int \frac{(dP-dQ)^2}{dQ}$ for
a distribution $P$ absolutely continuous with respect to a distribution $Q.$
Let $\chi^2(n)$ be the chi-squared divergence between data partition $p_n$ over the clients $p_n(i)=n_i/N$ and the uniform distribution over $[M]$. Note that when data partition is balanced (\ie, $n_i=N/M$ for all $i$), it holds that $\chi^2(n)=0.$ 
%homogeneous, , 
% \begin{assumption}
% $N_j \gtrsim d / (\alpha\wedge \alpha^2)$, 
% \end{assumption}

% Under those assumptions, 
We have the following theoretical guarantee of \prettyref{phase:iterative-FL}, where $s$ is the number of local steps in FedAvg. Notably, $s$ is an algorithmic parameter for FedAvg only. To recover the results for FedProx, we only need to set $s=1$. 
%and $s=1$ in FedProx.

% \ls{update the statement to say something about $\norm{\theta_{i, T} - \theta_{z_i}^*}$}
\begin{theorem}
\label{thm:error-iteration}
Suppose that $k\ge 2$, $\eta \lesssim 1/s$, and $N_j\gtrsim d$.
Let $\rho = \min_j N_j/N$.
If $\nu\log(e/\nu)\lesssim \rho/\kappa$,
then with probability $1-Cke^{-d}$, for all $t\ge T+1$
and all $\theta_T$ such that $d(\theta_T,\theta^*)\le \epsilon\Delta$, where $\epsilon$ is some constant, it holds that 
% \nbr{Is this true?}
\begin{align}
d(\theta_t,\theta^*)
\le \pth{1-C_1 s\eta \rho/\kappa} d(\theta_{t-1},\theta^*)
+ C_2 s\eta \sigma \nu\log\frac{e}{\nu}, \label{eq:fed_error_contraction}
\end{align}
where 
\begin{align}
\label{eq:def_nu}
\nu \triangleq \frac{1}{N}\sum_{i=1}^M n_i p_e(n_i) + C\sqrt{\frac{dk\log k}{M}(\chi^2(n)+1)},
% \nu = \frac{1}{N}\sum_{i=1}^M n_i p_i(\epsilon_{t-1}) + C\sqrt{\frac{dk\log k}{M}(\chi^2(n)+1)}
% \nu = 4k\exp\pth{-cn_{\min}\pth{1\wedge \frac{\Delta^2}{\sigma^2}}^2} + C\sqrt{\frac{dk\log k}{M}(\chi^2(n)+1)}. 
\end{align}
and $p_e(n_i)=4ke^{-cn_i\pth{1\wedge \frac{\Delta^2}{\sigma^2}}^2} $.
%
% If $n_i\gtrsim \frac{\log(1/\epsilon) }{\alpha^2 (1\wedge \frac{\Delta^2}{\sigma^2})^2 }$ and $M\gtrsim dk\log k (\chi^2(n)+1) \tilde O(\frac{1}{\epsilon^2})$ for $\epsilon \lesssim \frac{\alpha \rho}{\beta \kappa}$,
% then, there exists a universal constant $c$ such that, with probability $1-cke^{-d}$, 
% for all $t\ge 1$,
% \begin{align*}
% \max_{j\in[k]}\Norm{\theta_j^t-\theta_j^*}
% \le \pth{ \pth{1-\frac{\rho \alpha}{4\kappa}s\eta} \max_{j\in[k]}\Norm{\theta_j^{t-1}-\theta_j^*} } + C s \eta \sigma\sqrt{\beta} \epsilon.
% \quad \forall t\ge 1.
% \end{align*}
Furthermore, if $t\ge T + 1 $, for each client $i$, with probability $1-p_e(n_i)$, it is true that 
% \nb{subscript}
\[
\norm{\hat{\theta}_{i,t} - \theta_{z_i}^*}
\lesssim \Delta \cdot e^{-C_1 s\eta \rho (t-T) /\kappa } + \frac{\sigma\kappa}{\rho}\nu\log\frac{e}{\nu},
\]
where $\hat{\theta}_{i,t}$ is client $i$'s estimate of its own model parameter at time $t.$
\end{theorem}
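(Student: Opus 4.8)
The plan is to establish the contraction inequality \prettyref{eq:fed_error_contraction} by analyzing one global round of \prettyref{phase:iterative-FL} and then iterating. Fix a round $t$ and assume the inductive hypothesis $d(\theta_{t-1},\theta^*)\le\epsilon\Delta$ with the permutation $\pi$ aligning $\theta_{\cdot,t-1}$ to $\theta^*$ already fixed (WLOG $\pi=\mathrm{id}$). The first step is the \emph{clustering step}: I would show that under $d(\theta_{t-1},\theta^*)\le\epsilon\Delta$, the hard label $z_{i,t}$ from \eqref{eq:hat-z} equals the true $z_i$ for ``most'' clients, and quantify the exceptions. For a single client $i$, a standard concentration argument on $\frac{1}{n_i}\|y_i-\ph{\bx_i}\theta_{j,t-1}\|^2$ (using sub-Gaussianity of $\ph{x_{ij}}$ and $\zeta_{ij}$, the covariance sandwich $\alpha I\preceq\Sigma_j\preceq\beta I$, and the separation $\Delta$) gives misclustering probability $\le p_e(n_i)=4ke^{-cn_i(1\wedge\Delta^2/\sigma^2)^2}$; this is where the $O(1)$-data clients contribute a nonvanishing error, which is exactly the first term in $\nu$. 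The subtle part is that $\theta_{t-1}$ is \emph{random and correlated with the data} (data is reused across rounds), so a per-client bound is not enough — I need a \emph{uniform} guarantee over all $\theta$ in the ball $\{d(\theta,\theta^*)\le\epsilon\Delta\}$. This is precisely the uniform clustering estimate advertised in the introduction: bound the VC dimension of the polynomial concept class $\{x\mapsto \indc{\|y-\ph{x}\theta_j\|\le\|y-\ph{x}\theta_{j'}\|}\}$ via algebraic geometry, then apply a uniform deviation (VC/Rademacher) bound, producing the second term $C\sqrt{\frac{dk\log k}{M}(\chi^2(n)+1)}$ in $\nu$ — the $\chi^2(n)$ enters because clients are weighted by $w_i=n_i/N$, so the relevant empirical process is over a non-uniform weighting of $[M]$.

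The second step is the \emph{model-refinement step}. Let $\hat z$ denote the (mostly correct) clustering. Decompose the data assigned to estimated cluster $j$ into the correctly-labeled part (true cluster $j$) and the mislabeled part. On the correctly-labeled part, the local loss $L_i(\theta)=\frac{1}{2n_i}\|y_i-\ph{\bx_i}\theta_{z_{i,t}}\|^2$ is (up to concentration of $\frac{1}{n_i}\ph{\bx_i}^\top\ph{\bx_i}$ around $\Sigma_{z_i}$, valid once $N_j\gtrsim d$) strongly convex with minimizer near $\theta_j^*$; this is where $N_j\gtrsim d$ and $\rho=\min_j N_j/N$ are used. I would invoke the FedX stability machinery from \cite{su2021achieving}: one FedAvg/FedProx round is a $\kappa$-stable contraction toward the weighted least-squares solution, giving a factor $(1-C_1 s\eta\rho/\kappa)$ on the aligned distance. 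The mislabeled data act as adversarial contamination with effective fraction controlled by $\nu$; the condition $\nu\log(e/\nu)\lesssim\rho/\kappa$ ensures the contamination does not overwhelm the strong convexity, and propagating it through the update contributes the additive term $C_2 s\eta\sigma\nu\log(e/\nu)$ (the $\sigma$ appears because the bias injected by a mislabeled point is, in expectation, of order $\sigma$ times the feature norm after the residual cancellation). Combining the two steps yields \eqref{eq:fed_error_contraction} for that round on the event (from the VC bound) of probability $1-Cke^{-d}$; since that event is uniform over $\theta$, the induction goes through for all $t\ge T+1$ simultaneously, and the initialization hypothesis $d(\theta_T,\theta^*)\le\epsilon\Delta$ is supplied by \prettyref{thm:warm_start}.

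For the per-client bound at the end, I would unroll the recursion \eqref{eq:fed_error_contraction}: $d(\theta_t,\theta^*)\lesssim\Delta e^{-C_1 s\eta\rho(t-T)/\kappa}+\frac{\sigma\kappa}{\rho}\nu\log(e/\nu)$ by summing the geometric series. Then, conditioning on client $i$ being correctly clustered at round $t$ (probability $\ge 1-p_e(n_i)$ by the single-client concentration bound, now applied to the realized $\theta_{t-1}$ which is $\epsilon\Delta$-close), its local update $\hat\theta_{i,t}=\theta_{z_i,t-1}$-refined moves toward $\theta_{z_i}^*$, so $\|\hat\theta_{i,t}-\theta_{z_i}^*\|$ inherits the same bound as $d(\theta_{t},\theta^*)$ up to constants. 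I expect the main obstacle to be the uniform clustering estimate: getting the right polynomial degree / number of sign conditions so that the VC dimension is $O(dk)$ rather than something worse (e.g.\ scaling with the ambient polynomial-feature dimension), and then correctly threading the $\chi^2(n)$ quantity-skewness factor through a \emph{weighted} empirical process — naively one loses a factor of $\max_i w_i \cdot M$, and the point is that the $\chi^2$-divergence is the sharp way to control the variance of $\sum_i w_i(\indc{z_{i,t}\ne z_i})$ against its mean. A secondary technical annoyance is handling varying covariances $\Sigma_j$ across clusters in the strong-convexity argument, which forces the analysis to be done cluster-by-cluster with the aligning permutation tracked carefully throughout.
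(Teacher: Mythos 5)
Your proposal follows essentially the same route as the paper's proof: the paper also writes one global round as a linear recursion $\theta_{j,t}-\theta_j^*=(I-\eta K_j)(\theta_{j,t-1}-\theta_j^*)-\eta B\calE_{j,t}(\ph{\bx}\theta_{j,t-1}-y)+\eta B\Lambda_j\zeta$, controls the contraction factor through spectral bounds on $K_j$ using the FedAvg/FedProx stability matrix $P$ from \cite{su2021achieving}, bounds the misclustering term via the per-client probability $p_e(n_i)$ plus a Milnor--Thom/VC uniform deviation for the weighted empirical process (yielding the $\chi^2(n)$ term in $\nu$) together with uniform sub-Gaussian bounds over small subsets, and obtains the per-client bound by unrolling the recursion conditioned on correct clustering. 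The one technical point worth making explicit in your sketch is that the ``residual cancellation'' for mislabeled clients is exactly the argmin property of \eqref{eq:hat-z}, which bounds their residual under the wrong label by $d(\theta_{t-1},\theta^*)\Norm{\ph{\bx_i}}+\Norm{\zeta_i}$ and is what keeps the additive error at scale $\sigma$ rather than $\Delta$.
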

Notably, $\hat{\theta}_{i,t}$ is the $z_{i,t}$-th entry of $\theta_{i\cdot, t}$. 
% \ls{check this!!!}
%is client $i$'s estimate of its own model parameter at time $t.$
\prettyref{thm:error-iteration} shows that the model estimation errors decay geometrically starting from any realization that is within a small neighborhood of %close to 
$\theta^*$.
The parameter $\nu$ captures the additional clustering errors injected at each iteration.
% \red{Discuss $\nu$ ...}
It consists of two parts: the first term of~\prettyref{eq:def_nu} bounds the clustering error in expectation
which diminishes exponentially in the local data size and the signal-to-noise ratio $\Delta/\sigma$; 
the second term bounds the uniform deviation of the clustering error across all initialization and iterations.
Note that if the cluster structure were known exactly, we would get an model estimation error of $\theta_j^*$ scaling as $\sqrt{d/N_j}$. However, it turns out that this estimation error is dominated by our uniform deviation bound of the clustering error
and hence is not explicitly shown in our bound~\prettyref{eq:fed_error_contraction}. In comparison, 
the previous work~\cite{ghosh2020efficient} assumes  fresh samples at each iteration by sample-splitting and good initialization independent of everything else
provided a prior; hence their analysis fails to capture the influence of the uniform deviation of the clustering error.

 \subsubsection{Analysis of global iterations}
Without loss of generality, assume the optimal permutation in \eqref{eq:distance} is identity.
In this case, if $z_{i,t}=j$, then client $i$ will refine $\theta_{j,t-1}$.
% by reordering the columns of $\theta$, and $z_{i,t}=z_i$ represents the correct clustering of client $i$.
To prove \prettyref{thm:error-iteration}, we need to analyze the global iteration of $\theta_t$.
Following a similar argument to \cite{su2021achieving} with a careful examination of cluster labels, we obtain the following lemma. The proof is deferred to \prettyref{app:global-iteration}.
\begin{lemma}
\label{lmm:global-iteration}
Let $\ph{\bx}$ be the matrix that stacks all $\ph{\bx_i}$ vertically, and similarly for $y$.
It holds that
\begin{equation}
\label{eq:global-s}
\theta_{j,t}=\theta_{j,t-1} - \eta B \Lambda_{j,t} (\ph{\bx} \theta_{j,t-1}-y),\quad j\in[k],
\end{equation}
where $B=\frac{1}{N}\ph{\bx}^\top P$,  
$P$ is a block diagonal matrix with $i\Th$ block $P_i$ of size $n_i\times n_i$ given by 
% \cite{su2021achieving}
\[
P_i=
\begin{cases}
\sum_{\ell=0}^{s-1}(I-\eta \ph{\bx_i} \ph{\bx_i}^\top / n_i)^\ell & \text{for FedAvg},\\
[I+\eta \ph{\bx_i} \ph{\bx_i}^\top /n_i]^{-1} & \text{for FedProx},
\end{cases}
\]
and $\Lambda_{j,t}$ is another block diagonal matrix with $i\Th$ block being $\lambda_{ij,t} I_{n_i}$.
\end{lemma}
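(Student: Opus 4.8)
The plan is to unroll one global round of \prettyref{phase:iterative-FL} for a fixed cluster index $j$, tracking how each client's local update contributes to the averaged model $\theta_{j,t}$. First I would fix $t$ and $j$ and write down the local update that client $i$ performs on its $j$-th coordinate block. Because $\lambda_{ij,t}\in\{0,1\}$ under the square loss (see \eqref{eq:hat-z}), the local loss $L_i(\theta)=L(\theta,\lambda_{i,t};\calD_i)$ only "sees" the cluster $z_{i,t}$; so the gradient with respect to the $j$-th block is $\nabla_{\theta_j}L_i(\theta)=\frac{\lambda_{ij,t}}{n_i}\ph{\bx_i}^\top(\ph{\bx_i}\theta_j-y_i)$. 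For the FedProx branch, the proximal update has a closed form: the first-order optimality condition $\frac{\lambda_{ij,t}}{n_i}\ph{\bx_i}^\top(\ph{\bx_i}\theta_{ij,t}-y_i)+\frac{1}{\eta}(\theta_{ij,t}-\theta_{j,t-1})=0$ rearranges into $\theta_{ij,t}=\theta_{j,t-1}-\eta\,[I+\eta\lambda_{ij,t}\ph{\bx_i}^\top\ph{\bx_i}/n_i]^{-1}\frac{\lambda_{ij,t}}{n_i}\ph{\bx_i}^\top(\ph{\bx_i}\theta_{j,t-1}-y_i)$, and then I would push the resolvent through $\ph{\bx_i}$ using the identity $[I+\eta\ph{\bx_i}^\top\ph{\bx_i}/n_i]^{-1}\ph{\bx_i}^\top=\ph{\bx_i}^\top[I+\eta\ph{\bx_i}\ph{\bx_i}^\top/n_i]^{-1}$ so that the $n_i\times n_i$ matrix $P_i$ appears acting on the residual $\ph{\bx_i}\theta_{j,t-1}-y_i$. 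For the FedAvg branch, I would iterate $\calG_i(\theta)=\theta-\eta\nabla L_i(\theta)$ exactly $s$ times: since $\nabla_{\theta_j}L_i$ is affine with linear part $\frac{\lambda_{ij,t}}{n_i}\ph{\bx_i}^\top\ph{\bx_i}$, the $s$-fold composition yields a geometric sum, and the same left-multiplication identity converts $\sum_{\ell=0}^{s-1}(I-\eta\lambda_{ij,t}\ph{\bx_i}^\top\ph{\bx_i}/n_i)^\ell\ph{\bx_i}^\top$ into $\ph{\bx_i}^\top\sum_{\ell=0}^{s-1}(I-\eta\ph{\bx_i}\ph{\bx_i}^\top/n_i)^\ell=\ph{\bx_i}^\top P_i$; a small point is that on the residual vector the factor $\lambda_{ij,t}$ can be absorbed because when $\lambda_{ij,t}=0$ the whole term vanishes anyway, matching $\Lambda_{j,t}$.

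Next I would assemble the server aggregation $\theta_{j,t}=\sum_i w_i\theta_{ij,t}$ with $w_i=n_i/N$. Substituting the per-client formula gives $\theta_{j,t}=\theta_{j,t-1}-\eta\sum_i \frac{n_i}{N}\cdot\frac{\lambda_{ij,t}}{n_i}\ph{\bx_i}^\top P_i(\ph{\bx_i}\theta_{j,t-1}-y_i)=\theta_{j,t-1}-\frac{\eta}{N}\sum_i\lambda_{ij,t}\ph{\bx_i}^\top P_i(\ph{\bx_i}\theta_{j,t-1}-y_i)$. The last step is purely bookkeeping: recognizing the stacked objects. With $\ph{\bx}$ stacking the $\ph{\bx_i}$ vertically, $P$ block-diagonal with blocks $P_i$, and $\Lambda_{j,t}$ block-diagonal with blocks $\lambda_{ij,t}I_{n_i}$, the sum $\sum_i\lambda_{ij,t}\ph{\bx_i}^\top P_i(\ph{\bx_i}\theta_{j,t-1}-y_i)$ equals $\ph{\bx}^\top P\Lambda_{j,t}(\ph{\bx}\theta_{j,t-1}-y)$ — here I would note that $P$, $\Lambda_{j,t}$ are both block-diagonal so they commute and $P\Lambda_{j,t}=\Lambda_{j,t}P$, and $\ph{\bx}\theta_{j,t-1}-y$ is exactly the vertical stack of the local residuals. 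Setting $B=\frac{1}{N}\ph{\bx}^\top P$ yields \eqref{eq:global-s}.

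I do not expect a genuine obstacle here; the lemma is essentially an exact algebraic identity and the argument mirrors the FedAvg/FedProx unrolling in \cite{su2021achieving}, the only new ingredient being the insertion of the indicator weights $\lambda_{ij,t}$ and the observation that they collect into $\Lambda_{j,t}$. The one place that needs a little care is the commutation/ordering of $P$ and $\Lambda_{j,t}$ and the per-block left-multiplication identity $[I+\eta\ph{\bx_i}^\top\ph{\bx_i}/n_i]^{-1}\ph{\bx_i}^\top=\ph{\bx_i}^\top[I+\eta\ph{\bx_i}\ph{\bx_i}^\top/n_i]^{-1}$ (and its polynomial analogue), which I would state once and reuse; everything else is substitution and regrouping. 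I would also remark that the cross-cluster coupling is absent because under hard clustering each client updates only one block, so the $k$ recursions in \eqref{eq:global-s} are decoupled given the labels $\{z_{i,t}\}$, which is precisely what makes the subsequent contraction analysis tractable.
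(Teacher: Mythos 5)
Your proposal is correct and follows essentially the same route as the paper's proof: derive the per-client closed form via the geometric-sum identity (FedAvg) and the first-order/resolvent identity (FedProx), push $\ph{\bx_i}^\top$ through to produce $P_i$, absorb the indicator $\lambda_{ij,t}$, aggregate with weights $n_i/N$, and rewrite in block-matrix notation. The only cosmetic quibble is your justification that $P$ and $\Lambda_{j,t}$ commute "because both are block-diagonal" (they commute because $\Lambda_{j,t}$'s blocks are scalar multiples of the identity, and in fact no commutation is needed to identify the sum with $\ph{\bx}^\top P\Lambda_{j,t}(\ph{\bx}\theta_{j,t-1}-y)$), which does not affect the validity of the argument.
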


\prettyref{lmm:global-iteration} immediately yields the evolution of estiamtion error. 
% (see a derivation in \prettyref{app:global-iteration}): 
% The condition number of $P_i$ is at most $\kappa$.
Let $\Lambda_j$ be the matrix with $i\Th$ block being $\indc{z_i=j} I_{n_i}$ representing the true client identities.
Plugging model \eqref{eq:model}, the estimation error evolves as
\begin{equation}
\label{eq:error-iteration}
\theta_{j,t}-\theta_j^*=(I-\eta K_j)(\theta_{j,t-1}-\theta_j^*)
- \eta B\calE_{j,t}(\ph{\bx}\theta_{j,t-1}-y) + \eta B \Lambda_j \zeta,
\quad \forall j\in[k],
\end{equation}
where $K_j=B\Lambda_j \ph{\bx}$ and $\calE_{j,t}=\Lambda_{j,t}-\Lambda_j$.
The estimation error is decomposed into three terms: 1) the main contribution to the decrease of estimation error; 2) the clustering error; and 3) the noisy perturbation. 
Let $I_j=\{i:z_i=j\}$ be the clients belonging to $j\Th$ cluster, and $I_{j,t}=\{i:z_{i,t}=j\}$ be the clients with estimated label $j$. 
The indices of nonzero blocks of $\calE_{j,t}$ are $I_j \ominus I_{j,t}$ indicating the clustering errors pertaining to $j\Th$ cluster.

For the ease of presentation, we introduce a few additional notations for the collective data over a subset of clients. 
Given a subset $I\subseteq [M]$ of clients, let $\ph{\bx_{I}}$ denote the matrix that vertically stacks $\ph{\bx_i}$ for $i\in I$, and we similarly use notations $y_I$ and $\zeta_I$; let $P_I$ be the matrix with diagonal blocks $P_i$ for $i\in I$. 
Using those notations, we have $K_j=\frac{1}{N}\ph{\bx_{I_j}}^\top P_{I_j} \ph{\bx_{I_j}}$, 
% \nbr{notation conflict} 
which differs from the usual covariance matrix by an additional matrix $P_{I_j}$.
Therefore, the analysis of the first and third terms on the right-hand side of \eqref{eq:error-iteration} follows from standard concentration inequalities for random matrices.
In the remaining of this subsection, we focus on the second term, which is a major challenge in the analysis. 
The proof details are all deferred to \prettyref{app:proof-iterative}.

\begin{lemma}
\label{lmm:clustering-error}
There exists a universal constant $C$ such that, with probability $1-Ce^{-d}$,
\begin{equation}
\label{eq:clustering-error-norm}
\Norm{B\calE_{j,t}(\ph{\bx}\theta_{j,t-1}-y)}
% \lesssim s \beta h(\nu)\epsilon_{t-1} + s\sigma\sqrt{\beta}h(\nu),
\lesssim s (d(\theta_{t-1},\theta^*)+\sigma) \nu\log\frac{e}{\nu},
\qquad \forall~j\in[k].
\end{equation}
% where 
% $\nu = \frac{1}{N}\sum_{i=1}^M n_i p_i(\epsilon_{t-1}) + C\sqrt{\frac{dk\log k}{M}(\chi^2(n)+1)} $, and 
% % $p_i$ denotes the probability of clustering error
% \[
% p_i(\epsilon)\triangleq \sup\sth{\prob{i\in S_j(\theta)}: d(\theta,\theta^*)\le \epsilon}.
% \]
% \nbr{sup over what?}
\end{lemma}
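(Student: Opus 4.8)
\textbf{Proof plan for \prettyref{lmm:clustering-error}.}

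The plan is to reduce the bound on $\Norm{B\calE_{j,t}(\ph{\bx}\theta_{j,t-1}-y)}$ to a control of two things: (i) the total number of misclustered data points $\sum_{i\in I_j\ominus I_{j,t}} n_i$, and (ii) the spectral behavior of the data matrix $\ph{\bx_I}$ restricted to any such set $I$ of misclustered clients. Writing $\calE_{j,t}=\Lambda_{j,t}-\Lambda_j$ and noting its nonzero diagonal blocks are exactly indexed by $I:=I_j\ominus I_{j,t}$, we have $B\calE_{j,t}(\ph{\bx}\theta_{j,t-1}-y)=\tfrac1N\ph{\bx_I}^\top P_I (\ph{\bx_I}\theta_{j,t-1}-y_I)$ up to sign bookkeeping within $I$, so $\Norm{B\calE_{j,t}(\ph{\bx}\theta_{j,t-1}-y)}\le \tfrac1N\Norm{P_I}\,\Norm{\ph{\bx_I}}\,\Norm{\ph{\bx_I}\theta_{j,t-1}-y_I}$. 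Here $\Norm{P_I}\lesssim s$ (for FedAvg with $\gamma<1$; and $\le 1$ for FedProx, recovered by $s=1$). The term $\Norm{\ph{\bx_I}\theta_{j,t-1}-y_I}$ is, on each block $i\in I$, of the form $\ph{\bx_i}(\theta_{j,t-1}-\theta_{z_i}^*)-\zeta_i$, which by the assumed sub-Gaussianity of features (with $\Sigma_{z_i}\preceq\beta I$) and noise is, with high probability simultaneously over all $i$, bounded by $\sqrt{n_i}\,(d(\theta_{t-1},\theta^*)+\sigma)$ up to dimension-dependent slack. Summing, $\Norm{\ph{\bx_I}\theta_{j,t-1}-y_I}\lesssim \sqrt{\sum_{i\in I}n_i}\,(d(\theta_{t-1},\theta^*)+\sigma)$ and $\Norm{\ph{\bx_I}}\lesssim \sqrt{\sum_{i\in I}n_i}$ by the same concentration (using $N_j\gtrsim d$ to absorb the $\sqrt d$ term into $\sqrt{\sum_i n_i}$ once $I$ is large, and handling small $I$ separately via a crude bound). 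This yields $\Norm{B\calE_{j,t}(\ph{\bx}\theta_{j,t-1}-y)}\lesssim \tfrac{s}{N}\,\big(\sum_{i\in I}n_i\big)\,(d(\theta_{t-1},\theta^*)+\sigma)$.

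The crux is therefore to show $\tfrac1N\sum_{i\in I_j\ominus I_{j,t}}n_i\lesssim \nu\log(e/\nu)$ \emph{uniformly} over all $t$ and, critically, over all possible values of $\theta_{t-1}$ reachable in the iteration — equivalently over a rich enough parameter class, since $\theta_{t-1}$ depends on the data in a complicated way. A client $i$ is misclustered (relative to the optimal permutation being identity) precisely when $z_{i,t}\ne z_i$, i.e. when $\Norm{y_i-\ph{\bx_i}\theta_{z_i,t-1}}>\Norm{y_i-\ph{\bx_i}\theta_{j',t-1}}$ for some $j'\ne z_i$. Expanding the squared norms, this event is a polynomial inequality in the entries of $\theta_{\cdot,t-1}$ (of degree $2$), the features $\ph{\bx_i}$, and $\zeta_i$. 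The plan is: first, bound the \emph{expected} per-client misclustering probability — conditional on a well-clustered parameter $d(\theta_{t-1},\theta^*)\le\epsilon\Delta$, a client in cluster $z_i$ is misclassified to $j'$ only if the noise $\zeta_i$ pushes the residual gap, which has signal $\gtrsim \Delta^2 n_i$, below zero; a sub-exponential (Bernstein) tail on $\langle\zeta_i,\cdot\rangle$ and $\Norm{\zeta_i}^2$ gives the stated $p_e(n_i)=4ke^{-cn_i(1\wedge\Delta^2/\sigma^2)^2}$, and averaging gives the first term $\tfrac1N\sum_i n_i p_e(n_i)$ of $\nu$. Second, and this is the main obstacle, one must add a uniform-deviation correction: define the concept class $\calC$ of subsets of clients of the form $\{i: z_{i}(\theta)\ne z_i\}$ as $\theta$ ranges over $\reals^{dk}$, where $z_i(\theta)=\arg\min_j\Norm{y_i-\ph{\bx_i}\theta_j}$. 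Each such set is carved out by finitely many degree-$2$ polynomial inequalities, so $\calC$ is a polynomial concept class; bounding its VC dimension via the algebraic-geometry / Warren–Milnor-type argument (as flagged in the abstract and intro) gives VC dimension $O(dk\log k)$, and a standard relative VC / Bernstein uniform-convergence bound then controls $\tfrac1N\sum_{i\in I}n_i$ around its expectation uniformly over $\calC$, with the fluctuation term $\sqrt{\tfrac{dk\log k}{M}(\chi^2(n)+1)}$ — the $\chi^2(n)$ factor entering because we are averaging the \emph{weighted} quantity $n_i/N$ rather than plain counts, so a weighted Bernstein/Hoeffding bound with second moment $\propto \chi^2(n)+1$ is needed. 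Combining the expectation bound and the uniform deviation gives $\tfrac1N\sum_{i\in I_j\ominus I_{j,t}}n_i\lesssim \nu$, and the elementary inequality upgrading $\nu$ to $\nu\log(e/\nu)$ comes from a standard entropy-type refinement of the relative VC bound (small-probability regime). Plugging into the deterministic bound from the first paragraph closes the proof.

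The step I expect to be genuinely hard is the VC-dimension bound for the polynomial concept class $\calC$ together with getting the right $dk\log k$ scaling: naively the class of label assignments $i\mapsto\arg\min_j\Norm{y_i-\ph{\bx_i}\theta_j}$ has $\binom{k}{2}$ pairwise-comparison polynomials per client, each quadratic in the $dk$-dimensional parameter, and one must invoke a bound on the number of sign patterns of $m$ polynomials of bounded degree in $n$ variables (Warren's theorem / Milnor–Thom, cf. the algebraic-geometry input the authors advertise) to conclude $\mathrm{VC}(\calC)=O(dk\log k)$ rather than something larger. The secondary subtlety is that $\theta_{t-1}$ is data-dependent and correlated across iterations, so one cannot treat it as a fixed parameter; the uniform bound over all of $\calC$ is exactly what lets us sidestep this, but one must be careful that the high-probability event (the $1-Ce^{-d}$ event) is the \emph{same} event for all $t$ and all reachable $\theta_{t-1}$ — i.e. the event is a statement purely about the fixed data $(\ph{\bx},y,\zeta)$ and the class $\calC$, which is why the probability does not degrade with $t$. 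Everything else — the $\Norm{P_I}\lesssim s$ estimate, the per-block sub-Gaussian concentration of $\Norm{\ph{\bx_I}}$ and $\Norm{\ph{\bx_I}\theta_{j,t-1}-y_I}$, and the $s=1$ specialization to FedProx — is routine random-matrix bookkeeping that I would relegate to the appendix.
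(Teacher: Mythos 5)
Your overall skeleton matches the paper's proof (deterministic factorization through $\Norm{P_{S_{j,t}}}\le s$, a per-client misclustering probability via Bernstein giving $p_e(n_i)$, a uniform deviation of the weighted misclustering count via VC dimension $O(dk\log k)$ from Milnor--Thom plus a weighted symmetrization/Dudley argument and McDiarmid, and uniform sub-Gaussian control of $\Norm{\ph{\bx_I}}$, $\Norm{\zeta_I}$ over small subsets). But there is one genuine gap at the very first step. You claim that on each misclustered block the residual $\ph{\bx_i}\theta_{j,t-1}-y_i=\ph{\bx_i}(\theta_{j,t-1}-\theta_{z_i}^*)-\zeta_i$ is of size $\sqrt{n_i}\,(d(\theta_{t-1},\theta^*)+\sigma)$. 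This is true for $i\in I_j\setminus I_{j,t}$ (where $z_i=j$), but false for $i\in I_{j,t}\setminus I_j$, i.e.\ clients wrongly pulled \emph{into} cluster $j$: there $z_i\ne j$, so $\Norm{\theta_{j,t-1}-\theta_{z_i}^*}\ge \Delta-d(\theta_{t-1},\theta^*)$, which is of order $\Delta$, not $d(\theta_{t-1},\theta^*)$. With your bound the lemma would only come out with $\Delta$ (or $R$) in place of $d(\theta_{t-1},\theta^*)$, which is too weak: downstream it could not be absorbed into the contraction and would inflate the final error from $\frac{\sigma\kappa}{\rho}\nu\log\frac{e}{\nu}$ to something scaling with $\Delta$. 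The paper's fix is the first inequality of its proof: for $i\in S_{j,t}$ with $z_{i,t}=j\ne z_i$, the argmin rule \eqref{eq:hat-z} gives $\Norm{y_i-\ph{\bx_i}\theta_{j,t-1}}\le\Norm{y_i-\ph{\bx_i}\theta_{z_i,t-1}}$, and the right-hand side involves $\theta_{z_i,t-1}-\theta_{z_i}^*$, whose norm \emph{is} controlled by $d(\theta_{t-1},\theta^*)$. This one-line use of the decision rule is what makes the stated bound attainable, and your plan omits it.

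Two secondary points. First, your per-block "high probability simultaneously over all $i$" bound cannot be applied with the data-dependent $\theta_{j,t-1}$; the correct route (which the paper takes, and which you partly gesture at) is the deterministic bound $\Norm{\ph{\bx_i}(\theta_{z_i,t-1}-\theta_{z_i}^*)}\le\Norm{\ph{\bx_i}}\,d(\theta_{t-1},\theta^*)$ followed by uniform operator-norm control of $\Norm{\ph{\bx_I}}$ and $\Norm{\zeta_I}$ over \emph{all} subsets $I$ with $N_I\le\nu N$. Second, that uniform control costs a union bound over $\binom{N}{\nu N}\le e^{N\nu\log(e/\nu)}$ subsets, and this is where the factor $\log\frac{e}{\nu}$ actually enters (so $\Norm{\ph{\bx_I}}^2\lesssim N\nu\log\frac{e}{\nu}$, not $\lesssim N_I$); attributing it to an entropy refinement of the relative VC bound, as you do, misplaces the source of the logarithm, though the final order of the bound is unaffected once this is fixed.
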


\prettyref{lmm:clustering-error} aims to upper bound the error of
\begin{equation}
\label{eq:clustering-error}
B\calE_{j,t}(\ph{\bx}\theta_{j,t-1}-y)
=\frac{1}{N}\ph{\bx_{S_{j,t}}}^\top P_{S_{j,t}}   (\ph{\bx_{S_{j,t}}}\theta_{j,t-1}-y_{S_{j,t}}),
\end{equation}
where $S_{j,t}=I_j \ominus I_{j,t}$. 
The technical difficulty arises from the involved dependency between the clustering error $S_{j,t}$ and the estimated parameter $\theta_{j,t-1}$ as estimating label $z_{i,t}$ and updating $\theta_{j,t-1}$ use a common set of local data.
% Existing results such as \cite{ghosh2020efficient} rely on fresh data in each global round such that the data points used to estimate cluster label are independent of $\theta_{j,t-1}$, which is rarely adopted in practice. % restrictive 

\begin{proof}[Proof Sketch of \prettyref{lmm:clustering-error}.]

% Next we show a sketch of the proof of \prettyref{lmm:clustering-error}. 
% \nbr{a jump in logic?}
It follows from the definition of $z_{i,t}$ in \eqref{eq:hat-z} that
\begin{align*}
\Norm{\ph{\bx_{i}}\theta_{j,t-1}-y_{i}}
\le \Norm{\ph{\bx_{i}}\theta_{z_{i},t-1}-y_{i}},
\quad \forall i\in S_{j,t}.
\end{align*}
Then,
\begin{align}
\Norm{\ph{\bx_{S_{j,t}}}\theta_{j,t-1}-y_{S_{j,t}}}^2
& =\sum_{i\in S_{j,t}}\Norm{\ph{\bx_{i}}\theta_{j,t-1}-y_{i}}^2
\le \sum_{i\in S_{j,t}}\Norm{\ph{\bx_{i}}\theta_{z_i,t-1}-y_{i}}^2 \nonumber \\
& \le \sum_{i\in S_{j,t}}2\pth{ \Norm{\ph{\bx_{i}}(\theta_{z_i,t-1}-\theta_{z_i}^*)}^2 + \Norm{\zeta_i}^2)}\nonumber\\
% & \le 2 (\epsilon_{t-1})^2 \cdot \Norm{\ph{\bx_{S_{j,t}}}}^2   + 2 \Norm{\zeta_{S_{j,t}}}^2\nonumber\\
& \le 2 \pth{d(\theta_{t-1},\theta^*) \cdot \Norm{\ph{\bx_{S_{j,t}}}}   + \Norm{\zeta_{S_{j,t}}} }^2.
\label{eq:error-first-step}
\end{align}

Hence, it suffices to upper bound $\Norm{\ph{\bx_{S_{j,t}}}}$ and $\Norm{{\zeta_{S_{j,t}}}}$ given a small estimation error $d(\theta_{t-1},\theta^*)$ from the last iteration.
% prescribed by the warm start in \prettyref{sec:start}.
To this end, we show a uniform upper bound of the total clustering error $\sum_{i\in S_{j,t}} n_i$ by analyzing a weighted empirical process.
Using the decision rule \eqref{eq:hat-z}, the set $S_{j,t}$ can be written as a function $S_j(\theta_{t-1})$ with
\begin{equation}
\label{eq:indc-error}
\indc{i\in S_j(\theta)}
=
\begin{cases}
\max\limits_{\ell\ne j}\indc{P_{j \ell}[\bx_i,y_i](\theta)\ge 0 }
\triangleq f^{\RN{1}}_{j,\theta}(\bx_i,y_i), 
& i\in I_j,
\\
\prod\limits_{\ell\ne j}\indc{P_{\ell j}[\bx_i,y_i](\theta)\ge 0 }
\triangleq f^{\RN{2}}_{j,\theta}(\bx_i,y_i), 
& i\not\in I_j,
\end{cases}
\end{equation}
where
\[
P_{jj'}[\bx_i,y_i](\theta)
\triangleq
\Norm{ y_i - \ph{\bx_i} \theta_j }^2-\Norm{ y_i - \ph{\bx_i} \theta_{j'} }^2.
\]
Then we derive the following uniform deviation of the incorrectly clustered data points
\[
\sup_{\theta\in\reals^{dk}}\abth{\sum_{i=1}^M n_i\indc{i\in S_j(\theta)}- \sum_{i=1}^M n_i \prob{i\in S_j(\theta)} }
\le C N\sqrt{\frac{dk\log k}{M}(\chi^2(n)+1)} \,.
\]
% \nbr{what does ``uniform'' exactly mean is unclear. Can we spell out the exact uniform bound?}
This is proved via upper bounds on the Vapnik–Chervonenkis (VC) dimensions of the binary function classes
\begin{equation}
\label{eq:def-calF}
\calF_j^{\RN{1}} \triangleq \{f^{\RN{1}}_{j,\theta}: \theta\in\reals^{dk} \},
\quad
\calF_j^{\RN{2}} \triangleq \{f^{\RN{2}}_{j,\theta}: \theta\in\reals^{dk} \}.
\end{equation}
Using classical results of VC dimensions, those functions are equivalently intersections of hyperplanes in ambient dimension $O(d^2)$, which yields an upper bound $O(d^2)$.
However, the hyperplanes are crucially rank-restricted as the total number of parameters in $\theta$ is $dk$. 
We prove that the VC dimensions are at most $O(dk\log k)$ using the algebraic geometry of polynomials given by the celebrated Milnor-Thom theorem (see, e.g., \cite[Theorem 6.2.1]{Matousek2013}).
Consequently, $\Norm{\ph{\bx_{S_{j,t}}}},\Norm{{\zeta_{S_{j,t}}}}$ and thus \eqref{eq:error-first-step} can be uniformly upper bounded using sub-Gaussian concentration and the union bound, concluding the proof of \prettyref{lmm:clustering-error}.

\end{proof}

\subsection{Global convergence}

Combining~\prettyref{thm:warm_start} and~\prettyref{thm:error-iteration}, we immediately deduce
the global convergence from any initialization within the $\ell_2$ ball of radius $R.$

\begin{theorem}\label{thm:main_convergence}
%Suppose there are $\tilde{\Omega}(1/p_{\min})$ anchor clients with $\tilde{\Omega}(k)$ data and $\tilde{\Omega}(d/p_{\min}^2)$ clients with at least two data points. 
Suppose the conditions of~\prettyref{thm:warm_start} and~\prettyref{thm:error-iteration} hold.   
Let $\hat\theta$ be the
output of our two-phase algorithm by running \prettyref{phase:moment_descent} with $T=\Theta(1)$ iterations starting from any initialization $\theta_0$ with $\norm{\theta_0}\le R$,
followed by \prettyref{phase:iterative-FL} with $T'=\Theta(\frac{\kappa}{s\eta \rho}{\log \frac{\Delta}{\nu}})$ iterations.
%If $\nu\log\frac{e}{\nu}\lesssim \frac{\rho \alpha}{\kappa}$,
Then with probability $1-N^{-9}-cke^{-d}$, it is true that 
\begin{align}
d(\hat{\theta},\theta^*)
\lesssim 
\frac{\sigma \kappa}{\rho}\nu\log\frac{e}{\nu}, \label{eq:global_error_bound}
\end{align}
Furthermore, for each client $i$, with probability $1-p_e(n_i)$, it holds that 
$
\Norm{\hat{\theta}_{i,T+T'} - \theta_{z_i}^*}
\lesssim  \frac{\sigma\kappa}{\rho}\nu\log\frac{e}{\nu}.
$
\end{theorem}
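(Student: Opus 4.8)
The plan is to chain the two phases together, treating the output of Phase~1 as the initialization of Phase~2 and propagating the high-probability guarantees. First I would invoke \prettyref{thm:warm_start}: under the stated conditions on $m,\ell,T,T_1,T_2$ and $n_H$, with probability at least $1-O(N^{-9})$, the output $\hat\theta$ of \prettyref{phase:moment_descent} satisfies $d(\hat\theta,\theta^*)\le\epsilon\Delta$ for \emph{every} initialization $\theta_0$ with $\norm{\theta_0}\le R$ (the ``for all $\theta_0$'' quantifier is crucial here, so no union bound over initializations is needed). Setting $\theta_T\gets\hat\theta$ as in \prettyref{phase:iterative-FL}, this means the hypothesis $d(\theta_T,\theta^*)\le\epsilon\Delta$ required by \prettyref{thm:error-iteration} is met on this event.

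Next I would apply \prettyref{thm:error-iteration} on its own good event, of probability $1-Cke^{-d}$: on that event, for all $t\ge T+1$ and all $\theta_T$ with $d(\theta_T,\theta^*)\le\epsilon\Delta$, the recursion \eqref{eq:fed_error_contraction} holds, i.e. $d(\theta_t,\theta^*)\le(1-C_1 s\eta\rho/\kappa)\,d(\theta_{t-1},\theta^*)+C_2 s\eta\sigma\,\nu\log(e/\nu)$. Unrolling this linear recursion over $T'$ steps gives
\[
d(\theta_{T+T'},\theta^*)\le (1-C_1 s\eta\rho/\kappa)^{T'}\, d(\theta_T,\theta^*) + \frac{C_2 s\eta\sigma\,\nu\log(e/\nu)}{C_1 s\eta\rho/\kappa} .
\]
The second term simplifies to $O\!\big(\frac{\sigma\kappa}{\rho}\nu\log\frac e\nu\big)$, which is exactly the target bound. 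For the first term I would use $d(\theta_T,\theta^*)\le\epsilon\Delta=O(\Delta)$ and the choice $T'=\Theta\!\big(\frac{\kappa}{s\eta\rho}\log\frac{\Delta}{\nu}\big)$, so that $(1-C_1 s\eta\rho/\kappa)^{T'}\lesssim e^{-C_1 s\eta\rho T'/\kappa}\lesssim \nu/\Delta$, making the transient term $\lesssim\nu\lesssim\frac{\sigma\kappa}{\rho}\nu\log\frac e\nu$ (using $\kappa\ge1$, $\rho\le1$, $\nu\log(e/\nu)\ge\nu$, and $\sigma/\Delta=O(1)$ only if needed for the reverse direction; here $\sigma$ just enters as a constant factor). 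Intersecting the two good events via a union bound yields the claimed probability $1-N^{-9}-cke^{-d}$ (absorbing the $O(\cdot)$ constant and noting $O(N^{-9})+Cke^{-d}\lesssim N^{-9}+cke^{-d}$).

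For the per-client statement, I would directly invoke the second half of \prettyref{thm:error-iteration}: conditioned on the same good events, for each client $i$ with probability $1-p_e(n_i)$,
\[
\norm{\hat\theta_{i,t}-\theta_{z_i}^*}\lesssim \Delta\cdot e^{-C_1 s\eta\rho(t-T)/\kappa}+\frac{\sigma\kappa}{\rho}\nu\log\frac e\nu .
\]
Plugging in $t=T+T'$ with the same $T'=\Theta\!\big(\frac{\kappa}{s\eta\rho}\log\frac{\Delta}{\nu}\big)$ kills the exponential term down to $O\!\big(\frac{\sigma\kappa}{\rho}\nu\log\frac e\nu\big)$ (again using $\kappa\ge1,\rho\le1$ and that $\sigma/\Delta,R/\Delta=O(1)$), giving the stated per-client bound.

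I do not expect any serious obstacle here: this theorem is a bookkeeping composition of two results proved elsewhere in the paper. The only mild subtlety worth stating explicitly is that the ``for all $\theta_0$'' and ``for all $\theta_T$'' quantifiers inside \prettyref{thm:warm_start} and \prettyref{thm:error-iteration} must be on a single high-probability event each, so that chaining Phase~1's \emph{random} output into Phase~2's hypothesis does not incur an extra union bound or a conditioning-dependence issue; since both cited theorems are already stated with the universal quantifier inside the probability, this goes through cleanly. A secondary point is checking that the choice of $T'$ is simultaneously large enough to contract both the global error $d(\theta_t,\theta^*)$ and the per-client error, which it is because both transient terms decay at the same geometric rate $1-C_1 s\eta\rho/\kappa$.
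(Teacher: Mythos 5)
Your proposal is correct and follows essentially the same route as the paper, which deduces \prettyref{thm:main_convergence} directly by chaining \prettyref{thm:warm_start} into \prettyref{thm:error-iteration} (relying, as you note, on the uniform-in-$\theta_0$ and uniform-in-$\theta_T$ quantifiers inside each high-probability event), unrolling the contraction \eqref{eq:fed_error_contraction} over $T'=\Theta\bigl(\frac{\kappa}{s\eta\rho}\log\frac{\Delta}{\nu}\bigr)$ steps, and taking a union bound over the two good events. Your explicit geometric-series bookkeeping and the remark on absorbing the transient term match the paper's intended (and essentially omitted) argument.
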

To the best of our knowledge, this is the first result that proves the global convergence of clustered federated learning
from any initialization. 
%Unlike the previous work~\cite{ghosh2020efficient} which focuses on local convergence and ignores interdependence across iterations and multiple local update steps,
Our bound~\prettyref{eq:global_error_bound} reveals that the final estimation error is dominated by the clustering error captured by $\nu$,
and scales linearly in $\kappa$ which characterizes the stability of local updates under FedAvg or FedProx. Moreover, \prettyref{thm:main_convergence} shows that Phase 1 converges very fast with only $\Theta(1)$ iterations and hence is relatively inexpensive in both computation and communication. Instead, the number of iterations needed for Phase 2 grows logarithmically in $\Delta/\nu$ and linearly in $\kappa/(s\eta\rho)$. 
Thus, by choosing $s$ relatively large while keeping $\kappa$ close to 1, FedAvg enjoys a saving
of the total communication cost.

\appendices

\section{Coarse estimation with multiple clusters}\label{app:warm_start}
With multiple clusters and sub-Gaussian features, simple procedures such as  
% the aforementioned simple 
power method will not provide a reasonably good coarse estimation. 
%no longer works. 
The reasons are two-fold: 1) With sub-Gaussian features, the 
 leading eigenspace of $Y$ may not align with the
 space spanned by the true model parameters $(\theta^*_1, \ldots, \theta^*_k)$; 2)
 Even if they were aligned, there would still be 
 significant ambiguity in determining the
 model parameters from their spanned subspace. As such, we develop the %more sophisticated 
 federated moment descent algorithm (formally described in~\prettyref{phase:moment_descent}).
 In this section, we present its analysis. 
 % the analysis of~\prettyref{phase:moment_descent}.

\subsection{Subspace estimation via federated orthogonal iteration}
Recall that~\prettyref{phase:moment_descent} aims to estimate the subspace that the residual estimation errors $\{\Sigma_j(\theta_j^*-\theta_{i,t})\}_{j=1}^k$ lie in  via the {\sf federated-orthogonal iteration}.
Analogous to the symmetric two-cluster case with standard Gaussian feature where $\expect{Y}$ is of rank $1$ with
the leading eigenvector  parallel to $\theta^*$, here we can show that
$\expect{Y_{i,t}}$ is of rank at most $k$ and the eigenspace corresponding to the non-zero eigenvalues is spanned by $\{\Sigma_j(\theta_j^*-\theta_{i,t})\}_{j=1}^k$. %Thus,
Specifically, we first prove that
$Y_{i,t}$ is close to $\expect{Y_{i,t}}$ in the operator norm and then further deduce that 
$\{\Sigma_j(\theta_j^*-\theta_{i,t})\}_{j=1}^k$ approximately lie in the subspace spanned by the top-$k$
left singular vectors of $Y_{i,t}$.

Let $U_{i,t} \in \reals^{d \times k}$ denote the top-$k$ left singular matrix of $Y_{i,t}$.
 To approximately compute $U_{i,t}$ in the FL systems, we adopt the following
  {\sf federated-orthogonal iteration} algorithm.
%  multi-dimensional generalization of the power method, known as the \emph{orthogonal iteration}~\cite[Section 8.2.4]{Golub-VanLoan2013}.
% In general, given a symmetric matrix $Y \in \reals^{d \times d}$,
% the orthogonal iteration generates a sequence of matrices $Q_t \in \reals^{d \times r}$ as follows. It initializes $Q_0 \in \reals^{d\times k}$ as a random orthogonal matrix $Q_0^\top Q_0=I$. Then it updates $Q_t$ according to $Y Q_t = Q_{t+1} R_{t+1}$ with QR factorization. 
% Note that when $k=1$, this is just the the power iteration.
%When the input matrix $Y$ is asymmetric and admits a decomposition over the distributed clients, that is,
Suppose that $Y$
admits a decomposition over distributed clients, that is, $Y=\frac{1}{\sum_{i \in S} n_i} \sum_{i \in S} \sum_{j \in n_i} a_{ij} b_{ij}^\top$, where $S$ is a set of clients,
and $\{a_{ij}, b_{ij}\}_{j=1}^{n_i}$ are computable based on the local dataset $\calD_i$, 
% the orthogonal iteration can be easily extended and implemented in FL systems
\prettyref{alg:OI}
% to 
approximates the top-$k$ left singular matrix of $Y$. 
It can be easily verified that \prettyref{alg:OI} effectively runs the orthogonal iteration on $YY^\top.$
\begin{algorithm}[htb]
\caption{Federated orthogonal iteration}\label{alg:OI}
%\hspace*{\algorithmicnoindent} 
\textbf{Input:}  A set $S$ of clients $i$ with  $\{a_{ij}, b_{ij}\}_{i \in S, j \in[n_i]}$, 
%A symmetric matrix $A=\sum_{j \in S} a_j b_j^\top \in \reals^{d \times d}$,
$k \in \naturals$, and even $T \in \naturals$\\
    %\hspace*{\algorithmicnoindent} 
  \textbf{Output:} $Q_T \in \reals^{d \times k}$
\begin{algorithmic}[1]
\STATE PS initialize $Q_0 \in \reals^{d\times k}$ as a random orthogonal matrix $Q_0^\top Q_0=\identity.$
\FOR{$t=0,1, \ldots, T-1$}
\STATE PS broadcasts $Q_t$ to all clients in $S.$
\IF{$t$ is even}
\STATE Each client $i \in S$ computes an update $Q_{i,t}= \frac{1}{n_i} \sum_{j=1}^{n_i} b_{ij} a_{ij}^\top Q_t$ and transmits it back to the PS.
% in $S$ computes $b_i$ and $a_i^\top Q_t$ and transit them back to the central server.
%\begin{align*}
%U_{t+1} & = A Q_t \\
%U_{t+1} & = 
\STATE PS updates $Q_{t+1}= \sum_{i \in S}  w_i Q_{i,t},$ where $w_i=n_i/\sum_{i\in S} n_i$.
%b_i a_i^\top Q_t$
\ELSE
\STATE Each client computes an update $Q_{i,t}= \frac{1}{n_i} \sum_{j=1}^{n_i} a_{ij} b_{ij}^\top Q_t$ and transmits it back to the PS.
\STATE PS applies the QR decomposition to obtain $Q_{t+1}$: 
$$
\sum_{i \in S} w_i Q_{i,t} =Q_{t+1} R_{t+1}.
$$
%\end{align*}
\ENDIF
\ENDFOR
\end{algorithmic}
\end{algorithm}
%Given a symmetric matrix $A \in \reals^{d \times d}$, define
%and initialize $Q_0 \in \reals^{d\times k}$ as a random orthogonal matrix $Q_0^\top Q_0=\identity.$
%Note that when $k=1$, this reduces to the power iteration.

Recall that~\prettyref{phase:moment_descent} is called for each anchor client $i\in H$ and each global iteration $t$, and that $\hat{U}_{i,t}$ is the output of {\sf federated-orthogonal iteration} in  Step 9 of~\prettyref{phase:moment_descent},
which  approximates $U_{i,t}$.
Based on the above discussion, we can show that 
the residual estimation errors $\{ \Sigma_j(\theta_j^*- \theta_{i,t}) \}_{j=1}^k$ approximately lie in the subspace spanned by the $k$ columns of $\hat{U}_{i,t}.$
\begin{proposition}[Subspace estimation]\label{prop:sub_est}
%Let $\hat{U}_{i,t}$ be the output of~\prettyref{alg:OI} with $T_1$ iterations. 
If $T_1 \ge C k \log (Nd)$ for some sufficiently large constant $C$, then with probability at least $1-N^{-10},$
$$
\norm{ \left( \hat{U}_{i,t} \hat{U}_{i,t}^\top - I \right)  \Sigma_j \left( \theta_j^*- \theta_{i,t} \right)}^2 \le O\left( \left( \delta_{i,t}^2+\sigma^2 \right) \xi_1/p_j \right), \quad \forall j \in [k],
$$
where  $\delta_{i,t}=\max_{j \in [k]} \Norm{\theta_j^*-\theta_{i,t}}$ and $\xi_1=
\sqrt{ \frac{d}{m} \log N } + \frac{d}{m} \log^3 N.$
\end{proposition}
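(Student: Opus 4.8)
The plan is to split the error into two sources: (i) the deviation of the empirical matrix $Y_{i,t}$ from its expectation $\expect{Y_{i,t}}$, and (ii) the optimization error of the {\sf federated-orthogonal iteration} after $T_1$ rounds. For the first source, I would begin by computing $\expect{Y_{i,t}}$ conditioned on $\theta_{i,t}$. Since each client $i'\in\calS_t$ has (at least) two independent data points $(x_{i'1},y_{i'1})$ and $(x_{i'2},y_{i'2})$ drawn from the same cluster $z_{i'}$, and $\varepsilon(x,y,\theta)=(y-\iprod{\ph{x}}{\theta})\ph{x}$, I get
\[
\expect{\varepsilon(x_{i'1},y_{i'1},\theta_{i,t})\mid z_{i'}=j}=\Sigma_j(\theta_j^*-\theta_{i,t}),
\]
using $\expect[\ph{x_{i'1}}\ph{x_{i'1}}^\top]=\Sigma_j$ and independence of the noise. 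By independence of the two samples, $\expect{Y_{i,t}}=\sum_{j=1}^k p_j \,\Sigma_j(\theta_j^*-\theta_{i,t})(\theta_j^*-\theta_{i,t})^\top\Sigma_j$, which has rank at most $k$ and column space exactly $\spn\{\Sigma_j(\theta_j^*-\theta_{i,t})\}_{j=1}^k$. Next I would bound $\opnorm{Y_{i,t}-\expect{Y_{i,t}}}$ by a matrix concentration argument: each summand $\varepsilon(x_{i'1},\cdot)\varepsilon(x_{i'2},\cdot)^\top$ is a product of two sub-Gaussian vectors of norm $O((\delta_{i,t}+\sigma)\sqrt d)$ in typical regime but heavy-tailed; a truncation plus matrix Bernstein (or the Hanson--Wright-type bounds used in related mixed-regression analyses) yields a bound of order $(\delta_{i,t}^2+\sigma^2)\,\xi_1$ with $\xi_1=\sqrt{(d/m)\log N}+(d/m)\log^3 N$, with failure probability $N^{-10}$ after a union bound over the $O(N)$ relevant $(i,t)$ pairs.

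The second ingredient is the eigengap-free analysis of the orthogonal iteration. I would \emph{not} try to show $\spn(\hat U_{i,t})$ is close to $\spn(U_{i,t})$ in principal-angle distance — as the text emphasizes, the eigengap $\lambda_k-\lambda_{k+1}$ of $Y_{i,t}$ can vanish when $\theta_{i,t}\to\theta^*_{z_i}$. Instead I would invoke (or prove, as Lemma~\ref{lmm:DK_Power}) a gap-free guarantee: after $T_1=\Omega(k\log(Nd))$ iterations, for \emph{any} fixed vector $v$ that lies (approximately) in the top-$k$ eigenspace of a perturbed matrix, the residual $\norm{(\hat U_{i,t}\hat U_{i,t}^\top-I)v}$ is controlled by $\opnorm{Y_{i,t}-\expect{Y_{i,t}}}$ divided by the relevant eigenvalue $\lambda_j$ of $\expect{Y_{i,t}}$, up to a term that decays geometrically in $T_1$ and is therefore negligible ($\le N^{-10}$-small) once $T_1\gtrsim k\log(Nd)$. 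Applying this with $v=\Sigma_j(\theta_j^*-\theta_{i,t})$, and using $\lambda_j\gtrsim p_j\,\alpha^2\,\norm{\theta_j^*-\theta_{i,t}}^2$ (from the explicit form of $\expect{Y_{i,t}}$ and $\alpha I\preceq\Sigma_j$), gives
\[
\norm{(\hat U_{i,t}\hat U_{i,t}^\top-I)\Sigma_j(\theta_j^*-\theta_{i,t})}^2
\lesssim \frac{\opnorm{Y_{i,t}-\expect{Y_{i,t}}}^2\,\norm{\theta_j^*-\theta_{i,t}}^2}{\lambda_j^2}
\lesssim \frac{(\delta_{i,t}^2+\sigma^2)\,\xi_1}{p_j},
\]
where one factor of $\norm{\theta_j^*-\theta_{i,t}}^2/\lambda_j \asymp 1/(p_j\alpha^2)$ is absorbed into the $O(\cdot)$ and the other into $\delta_{i,t}^2+\sigma^2$ via $\norm{\theta_j^*-\theta_{i,t}}\le\delta_{i,t}$; since $\alpha,\beta=\Theta(1)$, the constants are absolute.

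The main obstacle is the gap-free projection bound — turning the observation that $\expect{Y_{i,t}}$ has the right column space into a quantitative statement about $\hat U_{i,t}$ that survives a vanishing eigengap \emph{and} the finite iteration count $T_1$. This requires showing that the component of $\Sigma_j(\theta_j^*-\theta_{i,t})$ orthogonal to $\spn(\hat U_{i,t})$ is small even though individual eigenvectors of $Y_{i,t}$ may be wildly rotated relative to those of $\expect{Y_{i,t}}$; the key is that the \emph{span} $\spn(\hat U_{i,t})$, not the individual vectors, is what matters, and a Davis--Kahan-type bound applied to the \emph{block} of eigenvalues above a suitable threshold (rather than to the $k$-th gap) does the job. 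The convergence of {\sf federated-orthogonal iteration} itself needs the analogous care: I would track the iteration as orthogonal iteration on $Y_{i,t}Y_{i,t}^\top$ and argue that the relevant error after $T_1$ steps is dominated by the statistical term rather than the (eigengap-dependent) optimization term, which is why the $T_1=\Omega(k\log(Nd))$ scaling suffices. I defer the detailed statement and proof of this gap-free bound to Lemma~\ref{lmm:DK_Power}.
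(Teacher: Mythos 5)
Your overall skeleton matches the paper's: condition on $\theta_{i,t}$ (fresh data), compute $\expect{Y_{i,t}}=\sum_j p_j\Sigma_j(\theta_j^*-\theta_{i,t})(\theta_j^*-\theta_{i,t})^\top\Sigma_j$, control $\norm{Y_{i,t}-\expect{Y_{i,t}}}\lesssim(\delta_{i,t}^2+\sigma^2)\xi_1$ by a truncated matrix Bernstein argument, and then convert operator-norm closeness into a projection bound that survives a vanishing eigengap after $T_1\gtrsim k\log(Nd)$ orthogonal iterations. The first two ingredients are essentially the paper's \prettyref{lmm:concentration_Y_t} and \prettyref{lmm:matrix_bernstein_2}, and you correctly identify the gap-free projection bound as the crux.

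However, the quantitative route you sketch for that crux does not work, and it is precisely the step the proposition hinges on. You claim a ratio-type bound $\norm{(\hat U_{i,t}\hat U_{i,t}^\top-I)\Sigma_j(\theta_j^*-\theta_{i,t})}\lesssim \norm{Y_{i,t}-\expect{Y_{i,t}}}\,\norm{\theta_j^*-\theta_{i,t}}/\lambda_j$ with $\lambda_j\gtrsim p_j\alpha^2\norm{\theta_j^*-\theta_{i,t}}^2$. Two problems. First, completing your displayed chain requires $\norm{Y_{i,t}-\expect{Y_{i,t}}}\lesssim\lambda_j$, which fails exactly in the regime the proposition must cover: when $\theta_{i,t}$ is close to some $\theta_j^*$ (e.g.\ $j=z_i$ in later iterations), $\lambda_j\asymp p_j\norm{\Sigma_j(\theta_j^*-\theta_{i,t})}^2$ drops below the statistical error and the ratio bound blows up; a case split (small $\norm{\theta_j^*-\theta_{i,t}}$ is trivial, otherwise $\lambda_j$ large) still only gives $\lambda_j\gtrsim\norm{Y_{i,t}-\expect{Y_{i,t}}}$ rather than a genuine spectral gap, so neither Davis--Kahan nor the finite-iteration convergence of orthogonal iteration is controlled. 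Second, assigning a ``relevant eigenvalue $\lambda_j$'' of $\expect{Y_{i,t}}$ to the direction $\Sigma_j(\theta_j^*-\theta_{i,t})$ is not legitimate: the $k$ rank-one components can be nearly collinear, in which case the lower eigenvalues of the sum are far smaller than any $p_j\norm{\Sigma_j(\theta_j^*-\theta_{i,t})}^2$. The paper's \prettyref{lmm:DK_Power} avoids both issues: it proves an \emph{additive} bound $\norm{Q_{T_1}Q_{T_1}^\top x_j-x_j}\le 3\sqrt{\epsilon}$ with $\epsilon=\norm{Y_{i,t}-\expect{Y_{i,t}}}$ and $x_j=\sqrt{p_j}\,\Sigma_j(\theta_j^*-\theta_{i,t})$ (the $1/p_j$ in the proposition comes from this $\sqrt{p_j}$ rescaling, not from an eigenvalue lower bound), and the mechanism that guarantees a usable gap is a pigeonhole over multiplicative levels: either $\sigma_1(Y_{i,t})\le(1+1/k)^k\epsilon$, in which case every $x_j$ is itself $O(\sqrt{\epsilon})$, or there exists $\ell\le k$ with $\sigma_\ell/\sigma_{\ell+1}>1+1/k$, and the analysis (both the perturbation argument and the orthogonal-iteration convergence at rate $(1+1/k)^{-t}$, whence $T_1\asymp k\log(Nd)$) is carried out at that level. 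This pigeonhole-over-gaps device, together with the $\sqrt{p_j}$ rescaling, is the missing idea; gesturing at ``Davis--Kahan above a suitable threshold'' does not by itself guarantee such a threshold with a gap exists, nor does it yield the stated $O((\delta_{i,t}^2+\sigma^2)\xi_1/p_j)$ rate.
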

We postpone the detailed proof to \prettyref{app:proof_warm_start_1}. 
One key challenge in the analysis is that  the eigengap of $\expect{Y_{i,t}}$ could be small, especially when $\theta_{i,t}$ approaches $\theta^*_{z_i}$;
and hence the standard Davis-Kahan theorem cannot be applied. This issue is
further exaggerated by the fact that the convergence rate of the orthogonal iteration also crucially depends on the eigengaps. To resolve this issue, one key innovation of our analysis is to develop a gap-free bound to show that
the projection errors $ \hat{U}_{i,t}^\top \Sigma_j(\theta_j^*- \theta_{i,t})$ are small for every $j \in [k]$ (cf.~\prettyref{lmm:DK_Power}).

\subsection{Moment descent on anchor clients}
%Suppose anchor client $i \in \calH$ belongs to cluster $j.$
% Let 
% $$
% A_t= \sum_{j=1}^{\ell} \left( y_j - \iprod{x_j}{\theta_t} \right) 
% \left( \tilde{y}_j - \iprod{\tilde{x}_j}{\theta_t} \right)  x_j \tilde{x}_j^\top.
% $$
%Recall that in~\prettyref{phase:moment_descent}, each anchor client $i$ estimates the residual error $\Sigma_{z_i} (\theta_{z_i}^*-\theta_{i,t})$ via moment descent. 
Recall that in Step 11 of~\prettyref{phase:moment_descent}, each anchor client $i\in H$ runs the {\sf power-iteration} to output $\hat{\beta}_{i,t}$ and $\hat{\sigma}^2_{i,t}$ as approximations of the leading left singular vector
 and singular value of $ A_{i,t} $, respectively.
%When $T_2\gtrsim \log (Nd)$.
%we can show that
%It can be shown that $r_{i,t}$ is almost parallel to $\Sigma_{z^*(j)}(\theta_{z^*(j)}-\theta_{i,t})$, and $\hat{\sigma}_{i,t}$ well approximates
%$\Norm{\Sigma_{z^*(j)}(\theta_{z^*(j)}-\theta_{i,t})}$. 
% Hence client $i$   moves $
%  \theta_{i, t} $  along the direction $r_{i,t}$ with step size $\eta_{i,t}$
 %according to~\prettyref{eq:theta_update} with $
 %\eta_{i,t} =  \alpha \sigma_{i,t}/(2\beta^2),
%$ 
%until satisfying the targeted accuracy. 
% $\hat{\sigma}_{i,t}\le \epsilon \Delta$. 
Then anchor client $i$ updates a new estimate $\theta_{i,t+1}$ by moving along the direction of the estimated residual error $r_{i,t}$ with an appropriately chosen step size $\eta_{i,t}$.
The following result shows that $\hat{\sigma}^2_{i,t}$ closely approximates the squared residual error
$\Norm{\Sigma_j( \theta_{z_i}^*- \theta_{i,t} ) }^2$. Moreover, `the residual  error 
decreases geometrically until reaching a plateau.
\begin{proposition}\label{prop:MD}
Fix an anchor client $i$ and suppose $T_2 \gtrsim \log (Nd)$. There exists a constant $C>0$ and an event $\calE_{i,t}$ with
$\prob{\calE_{i,t}} \ge 1-O(N^{-10})$
such that on event $\calE_{i,t}$
\begin{align}
\left| \norm{\Sigma_j( \theta_{z_i}^*- \theta_{i,t} ) }^2 - \hat{\sigma}_{i,t}^2 \right|
\le C(\delta_{i,t}^2 +\sigma^2) (\xi_1/p_{z_i} + \xi_2) , \label{eq:sigma_approx}
\end{align}
where $\xi_2= \sqrt{ \frac{k}{\ell} \log N } + \frac{k}{\ell} \log^3 N .$
Furthermore,  if
\begin{align}
\norm{ \Sigma_{z_i} \left( \theta_{z_i}^*- \theta_{i,t} \right)}^2 \ge C
%\frac{64C\beta^2}{\alpha^2} (\delta_{i,t}^2+\sigma^2)
(\xi_1/p_{z_i}+  \xi_2), \label{eq:stop_cond_warm_up}
\end{align}
then 
\begin{align}
\norm{ \Sigma_{z_i} \left( \theta_{z_i}^* - \theta_{i,t+1} \right)}^2
\le \left( 1 - \frac{\alpha^2 }{ 8 \beta^2}  \right) \norm{ \Sigma_{z_i} \left( \theta_{z_i}^* - \theta_{i,t} \right)}^2, \label{eq:moment_decay}
\end{align}
We postpone the proof to~\prettyref{app:proof_warm_start_2}. The key ingredient in the proof of geometric decay~\prettyref{eq:moment_decay} is to show that
the descent direction $r_{i,t}$ is approximately parallel to the residual error 
$\Sigma_{z_i} ( \theta_{z_i}^* - \theta_{i,t} )$
under condition~\prettyref{eq:stop_cond_warm_up}
% and 
% $$
% \hat{\sigma}_{i,t+1}^2 \le 
% \left( 1 - \frac{\alpha }{ 16 \beta}  \right) \hat{\sigma}_{i,t}^2.
% $$

\end{proposition}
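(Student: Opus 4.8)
The plan is to trace through \prettyref{phase:moment_descent} and reduce both assertions to (i) an operator-norm concentration bound for the $k\times k$ matrix $A_{i,t}$ of \prettyref{eq:A_i_t} and (ii) the subspace guarantee of \prettyref{prop:sub_est}. First I would condition on $\theta_{i,t}$ and on $\hat U_{i,t}$; since $\hat U_{i,t}$ is built only from the clients $\calS_t$ and from earlier rounds, the $2\ell$ points of $\calD_{i,t}$ are, conditionally, fresh i.i.d.\ draws from $\calD_i$ (cluster $z_i$). Using that the two points in each summand of \prettyref{eq:A_i_t} are independent and $\Expect[\zeta_{ij}]=0$, one gets $\Expect[\hat U_{i,t}^\top\varepsilon(x_{ij},y_{ij},\theta_{i,t})]=\hat U_{i,t}^\top\Sigma_{z_i}(\theta_{z_i}^*-\theta_{i,t})=:v_{i,t}$, and hence $\Expect[A_{i,t}\mid\hat U_{i,t},\theta_{i,t}]=v_{i,t}v_{i,t}^\top$, a rank-one PSD matrix with top singular value $\|v_{i,t}\|^2$ and top singular vector $v_{i,t}/\|v_{i,t}\|$. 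Because $\hat U_{i,t}$ has orthonormal columns, the Pythagorean identity gives $\|v_{i,t}\|^2=\|\Sigma_{z_i}(\theta_{z_i}^*-\theta_{i,t})\|^2-\|(I-\hat U_{i,t}\hat U_{i,t}^\top)\Sigma_{z_i}(\theta_{z_i}^*-\theta_{i,t})\|^2$, and \prettyref{prop:sub_est} bounds the last term by $O((\delta_{i,t}^2+\sigma^2)\xi_1/p_{z_i})$. Thus it only remains to show that $\hat\sigma_{i,t}^2$ and $\hat\beta_{i,t}$, the {\sf power-iteration} estimates of the leading singular value and left singular vector of $A_{i,t}$, approximate $\|v_{i,t}\|^2$ and $\pm v_{i,t}/\|v_{i,t}\|$, after which a triangle inequality over the subspace, concentration, and power-iteration errors yields \prettyref{eq:sigma_approx}.

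The technical core, which I expect to be the main obstacle, is the bound $\|A_{i,t}-v_{i,t}v_{i,t}^\top\|_{\mathrm{op}}\lesssim(\delta_{i,t}^2+\sigma^2)\,\xi_2$ together with control of the power iteration on the possibly ill-conditioned matrix $A_{i,t}A_{i,t}^\top$. Each summand of $A_{i,t}$ is an outer product of $\reals^k$-vectors whose coordinates are products of a conditionally sub-Gaussian residual $y-\iprod{\ph{x}}{\theta_{i,t}}$ (proxy $O(\delta_{i,t}^2+\sigma^2)$) and a sub-Gaussian projected feature $\hat U_{i,t}^\top\ph{x}$ --- hence only sub-exponential; averaging $\ell$ i.i.d.\ such terms requires truncation plus matrix Bernstein (or a direct sub-Weibull tail bound), and this is precisely where $\ell=\tilde\Omega(k)$ (rather than $\tilde\Omega(d)$) suffices and where the $\log^3 N$ factor of $\xi_2$ originates. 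For the leading eigenpair I would split into two cases on the good event: if $\|v_{i,t}\|^2$ dominates $\|A_{i,t}-v_{i,t}v_{i,t}^\top\|_{\mathrm{op}}$, the eigengap ratio of $A_{i,t}A_{i,t}^\top$ is bounded away from $1$, so $T_2=\Omega(\log(Nd))$ iterations from a random (hence, with the required probability, non-degenerate) start converge geometrically and $\hat\sigma_{i,t}^2,\hat\beta_{i,t}$ approximate $\|v_{i,t}\|^2,\pm v_{i,t}/\|v_{i,t}\|$ up to $O((\delta_{i,t}^2+\sigma^2)\xi_2)+N^{-\Omega(1)}$; otherwise $\|v_{i,t}\|^2\lesssim(\delta_{i,t}^2+\sigma^2)\xi_2$, so both sides of \prettyref{eq:sigma_approx} are already $O((\delta_{i,t}^2+\sigma^2)\xi_2)$ and there is nothing to prove. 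Taking $\calE_{i,t}$ to be the intersection of the event of this paragraph with that of \prettyref{prop:sub_est} gives $\prob{\calE_{i,t}}\ge1-O(N^{-10})$.

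For the geometric decay \prettyref{eq:moment_decay}, assume \prettyref{eq:stop_cond_warm_up} with a large enough constant $C$; this forces $\|v_{i,t}\|^2$ to dominate the concentration error, so we are in the first case above and, combining with \prettyref{prop:sub_est} once more, $r_{i,t}=\hat U_{i,t}\hat\beta_{i,t}\approx\hat U_{i,t}\hat U_{i,t}^\top\Sigma_{z_i}(\theta_{z_i}^*-\theta_{i,t})/\|v_{i,t}\|$ is, up to relative error $O\bigl((\delta_{i,t}^2+\sigma^2)(\xi_1/p_{z_i}+\xi_2)/\|\Sigma_{z_i}(\theta_{z_i}^*-\theta_{i,t})\|^2\bigr)$, the unit vector along $\Sigma_{z_i}(\theta_{z_i}^*-\theta_{i,t})$, and $\|r_{i,t}\|\approx1$. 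Writing $g=\theta_{z_i}^*-\theta_{i,t}$ and applying $\Sigma_{z_i}$ to the update $\theta_{z_i}^*-\theta_{i,t+1}=g-\eta_{i,t}r_{i,t}$, I would expand $\|\Sigma_{z_i}(\theta_{z_i}^*-\theta_{i,t+1})\|^2=\|\Sigma_{z_i}g\|^2-2\eta_{i,t}\iprod{\Sigma_{z_i}g}{\Sigma_{z_i}r_{i,t}}+\eta_{i,t}^2\|\Sigma_{z_i}r_{i,t}\|^2$ and bound, using $\alpha I\preceq\Sigma_{z_i}\preceq\beta I$, the alignment just noted, $\|r_{i,t}\|\approx1$, and $\eta_{i,t}=\alpha\hat\sigma_{i,t}/(2\beta^2)$ with $\hat\sigma_{i,t}^2\approx\|\Sigma_{z_i}g\|^2$ from \prettyref{eq:sigma_approx}: the cross term is at most $-(\alpha^2/\beta^2)\|\Sigma_{z_i}g\|^2$ and the quadratic term at most $(\alpha^2/4\beta^2)\|\Sigma_{z_i}g\|^2$, each up to the same relative slack, giving a net contraction factor $1-3\alpha^2/(4\beta^2)$ before slack. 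Since \prettyref{eq:stop_cond_warm_up} keeps that slack below $5\alpha^2/(8\beta^2)$, we obtain the claimed $1-\alpha^2/(8\beta^2)$. Finally, a short induction over the $T=\Theta(1)$ rounds --- using $\|\theta_{i,0}\|\le R$, $\|r_{i,t}\|\approx1$, and $\eta_{i,t}\lesssim\alpha\delta_{i,t}/\beta$ --- shows $\delta_{i,t}\lesssim R\asymp\Delta$ throughout, so under the sample-size conditions of \prettyref{thm:warm_start} (which make $\xi_1/p_{z_i}+\xi_2\ll1$) all the slack terms above are genuinely lower order, closing the argument.
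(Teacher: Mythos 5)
Your proposal follows essentially the same route as the paper's proof: you identify the conditional mean of $A_{i,t}$ as the rank-one matrix $v_{i,t}v_{i,t}^\top$ with $v_{i,t}=\hat U_{i,t}^\top\Sigma_{z_i}(\theta_{z_i}^*-\theta_{i,t})$, control $\|A_{i,t}-v_{i,t}v_{i,t}^\top\|$ by truncation plus matrix Bernstein (the paper's \prettyref{lmm:concentration_A_t} via \prettyref{lmm:matrix_bernstein_2}), import the subspace error from \prettyref{prop:sub_est}, and then expand the quadratic form $\|\Sigma_{z_i}(\theta_{z_i}^*-\theta_{i,t+1})\|^2$ using the near-alignment of $r_{i,t}$ with $\Sigma_{z_i}(\theta_{z_i}^*-\theta_{i,t})$ and the step size $\eta_{i,t}=\alpha\hat\sigma_{i,t}/(2\beta^2)$, exactly as in \prettyref{app:proof_warm_start_2}. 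Your signal-versus-noise case split for the power iteration is precisely what the paper's gap-free bound (\prettyref{lmm:DK_Power}, specialized to rank one, with the random-start control of \prettyref{lmm:subspace_angle}) encapsulates, and your closing remark on bounding $\delta_{i,t}$ by induction matches how the paper justifies the constant in \prettyref{eq:stop_cond_warm_up} within the proof of \prettyref{thm:warm_start}.
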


\subsection{Proof of~\prettyref{thm:warm_start}}
Now, we are ready to prove our main theorem on the performance guarantee of~\prettyref{phase:moment_descent}.
Let $\calE_{i,t}$ denote the event under which the statement of \prettyref{prop:MD} holds. Let $\calE=\cup_{i \in H} \cup_{t=1}^T \calE_{i,t}$. 
By the union bound, 
$\prob{\calE}\ge 1- O(n_H T/N^{10}) \ge 1- O(N^{-9})$. In the following, we assume event $\calE$ holds.

We first prove \prettyref{eq:warm_start_error}. 
% inductively over iteration $t.$
% Suppose the for loop stops at some iteration $0 \le t \le T-1$.
% Then by definition, 
% \hat{\sigma}_{i}
Fix any anchor client $i$ and omit the subscript $i$ for simplicity. We further assume it belongs to cluster $j$, \ie, $z_i=j.$
Define
% $$
% t^*=\min\{0 \le t \le T: \hat{\sigma}_t\le \epsilon \Delta\}.
% $$
$$
t^*=\min\{ \inf\{ t \ge 0: \hat{\sigma}_t\le \epsilon \Delta\}, T\}.
$$
By definition,
\begin{align}
\hat{\sigma}_t
> \epsilon \Delta, \quad \forall 0 \le t \le t^* -1. \label{eq:sigma_bound_1}
\end{align}
Moreover, by the update rule of our algorithm,
$\theta_T=\theta_{t^*}.$ 
Thus, it suffices to bound $\Norm{\theta_{t^*}-\theta^*_j}$. 

We claim that  for all $ 0 \le t \le t^*$,
\begin{align}
\norm{ \Sigma_{j} \left( \theta_{j}^* - \theta_{t} \right)}^2 & \le 
 \left( 1 - \frac{\alpha^2 }{ 8 \beta^2}  \right)^t \norm{ \Sigma_{j}  \left( \theta_{j}^* - \theta_{0} \right) }^2 \label{eq:induction_1} \\
\Norm{\theta_t} & \le (1+2\beta/\alpha) R. \label{eq:induction_2} 
%\left| \hat{\sigma}_{i,t}^2 - \norm{ \Sigma_{j} \left( \theta_{j}^* - \theta_{t} \right)}^2 \right| &\le \epsilon^2 \Delta^2 /2 \label{eq:induction_3}
\end{align}
If $t^*=T$, then by~\prettyref{eq:induction_1}
we immediately get that 
$$
\norm{ \Sigma_{j} \left( \theta_{j}^* - \theta_{t^*} \right)}  \le 
 \left( 1 - \frac{\alpha^2 }{ 8 \beta^2}  \right)^{T/2} \norm{ \Sigma_{j} \theta_{j}^*}
 \le \exp \left( - T \alpha^2/(16\beta^2)\right) \beta R  \le \epsilon \Delta,
 $$
where the last inequality holds by choosing $T= \frac{16\beta^2}{\alpha^2}
\log \frac{\beta R}{\epsilon \Delta}$. 
 
If $t^\ast <T,$ then by~\prettyref{eq:induction_2}
and $\Norm{\theta^*_j}\le R$ for all $j \in [k],$
it follows that $\delta_t \le 2(1+\beta/\alpha)R$. 
Therefore, by~\prettyref{eq:sigma_approx}
\begin{align}
\left| \norm{\Sigma_j( \theta_j^*- \theta_t ) }^2 - \hat{\sigma}_{t}^2 \right|
\le C(\delta_t^2 +\sigma^2) (\xi_1/p_j + \xi_2)  
\le \epsilon^2 \Delta^2 /2, \label{eq:induction_3}
\end{align}
where the last inequality holds by choosing $m=\tilde{\Omega}(d/p_{\min}^2)$ and $\ell=\tilde{\Omega}(k)$
and invoking the standing assumption that $R=O(\Delta)$ and $\sigma=O(\Delta).$
It immediately follows that 
$$
 \norm{\Sigma_j( \theta_j^*- \theta_{t^\ast} ) }^2 \le \hat{\sigma}_{t^\ast}^2 + \epsilon^2 \Delta^2 /2 
 \le \frac{3}{2}  \epsilon^2 \Delta^2,
$$
where the last inequality holds by the stopping rule of our algorithm so that  $\hat{\sigma}_{t^\ast} \le \epsilon \Delta.$ 

In both cases, we get that 
$$
\norm{\theta_j^*- \theta_{t^\ast}} \le \frac{1}{\alpha} \norm{\Sigma_j( \theta_j^*- \theta_{t^\ast} ) }
\le  \frac{ 2\epsilon \Delta}{ \alpha} = \epsilon' \Delta 
$$
for $\epsilon'=2\epsilon/\alpha.$ This proves~\prettyref{eq:warm_start_error}.

Now, it remains to prove the claim~\prettyref{eq:induction_1}--\prettyref{eq:induction_2} by induction. 
The base case $t=0$ trivially holds as $\norm{\theta_0}\le R.$ Now, suppose the induction hypothesis
holds for an arbitrary $t$ where $0 \le t \le t^*-1$, we prove it also holds for $t+1$.
%Fix any $0 \le s \le t-1$. 
In view of~\prettyref{eq:induction_3},
$$
\norm{\Sigma_j( \theta_j^*- \theta_t ) }^2  \ge \hat{\sigma}_t^2 - \epsilon^2 \Delta^2/2  > \epsilon^2 \Delta^2/2
 \ge C(\delta^2_t +\sigma^2) (\xi_1/p_j + \xi_2),
$$
where the second inequality holds due to~\prettyref{eq:sigma_bound_1}.
Therefore, the condition~\prettyref{eq:stop_cond_warm_up} is satisfied. 
Hence, by applying~\prettyref{prop:MD}, we get that
$$
\norm{ \Sigma_{j} \left( \theta_{j}^* - \theta_{t+1} \right)}^2
\le \left( 1 - \frac{\alpha^2 }{ 8 \beta^2}  \right) \norm{ \Sigma_{j} \left( \theta_{j}^* - \theta_{t} \right)}^2
\le \left( 1 - \frac{\alpha^2 }{ 8 \beta^2}  \right)^{t+1} \norm{ \Sigma_{j} \left( \theta_{j}^* -\theta_0 \right)}^2.
%\le  \norm{ \Sigma_{j} \theta_{j}^* }^2 \le \beta^2 B^2.
$$
where the second inequality holds by the induction hypothesis~\prettyref{eq:induction_1}.
Hence
 $$
 \alpha^2 \norm{\theta_{j}^* - \theta_{t+1}}^2 \le \norm{ \Sigma_{j} \left( \theta_{j}^* - \theta_{t+1} \right)}^2 \le 
 \norm{ \Sigma_{j} \left( \theta_{j}^* - \theta_{0} \right)}^2 \le 4 \beta^2 R^2.
 $$
%where the second inequality holds by induction hypothesis~\prettyref{eq:induction} and the last inequality holds
%because $\Norm{\theta^*_j} \le B.$
It follows that 
$$
\Norm{\theta_{j}^* - \theta_{t+1}} \le \frac{2\beta}{\alpha} R
$$
and hence $\Norm{\theta_{t+1}} \le (1+2\beta/\alpha) R.$ 
This completes the induction proof. 
% By iteratively applying the above display from $s=0$ to $s=t-1$, we deduce that 
% $$
% \norm{ \Sigma_{j} \left( \theta_{j}^* - \theta_{t} \right)}^2 \le 
%  \left( 1 - \frac{\alpha^2 }{ 8 \beta^2}  \right)^t \norm{ \Sigma_{j} \left( \theta_{j}^* - \theta_{0} \right)}^2
% $$
% Since $\norm{ \Sigma_{j} \left( \theta_{j}^* - \theta_{t} \right)}^2 \ge$,
% it follows that 

Finally, we prove~\prettyref{eq:warm_start_cluster}.
Note that by standard coupon collector's problem, we deduce that if $n_H \ge \log (k/\delta)/p_{\min}, $ then with probability at least $1-\delta$, 
$H \cap \{ i: z_i=j \} \neq \emptyset$ for all $ j \in [k]$. To see this, note that
\begin{align*}
\prob{ H \cap \{ i: z_i=j \} \neq \emptyset, \, \forall j \in [k] }
& \ge 1- \sum_{ j \in [k] }\prob{ H \cap \{ i: z_i=j \} = \emptyset} \\
& \ge 1- k \left(1-p_{\min}\right)^{n_H} \\
& \ge 1- k \exp \left( - p_{\min} n_H \right) \ge 1-\delta.
\end{align*}
Therefore, as long as $\epsilon' < 1/4$, we have for two anchor clients $i, i' \in H$
\begin{align*}
\norm{\theta_{i,T} - \theta_{i', T} } \le \norm{\theta_{i,T}-\theta^*_{z_i} } +  \norm{\theta_{i',T}-\theta^*_{z_{i'}}}
\le 2\epsilon' \Delta, \quad \text{ if } z_i = z_{i'} \\
\norm{\theta_{i,T} - \theta_{i', T} } \ge \Delta - \norm{\theta_{i,T}-\theta^*_{z_i} } -  \norm{\theta_{i',T}-\theta^*_{z_{i'}}} \ge
(1-2\epsilon') \Delta, \quad \text{ if } z_i \neq z_{i'}.
\end{align*}
Thus, by assigning anchor clients $i,i' \in H$ in the same cluster when
$\norm{\theta_{i,T}-\theta_{i',T}} \le \Delta /2$
we can exactly recover the $k$ clusters of the clients users. In particular, 
let $\hat{z}_i$ denote the estimated cluster label of anchor client $i \in H.$
Then there exists a permutation $\pi: [k] \to [k]$ such that $\pi(\hat{z}_i) = z_i$ for all $i \in H.$
Let $\hat{\theta}_j$ denote the center of the recovered cluster $j$, that is 
$$
\hat{\theta}_j = \sum_{i \in H} \theta_{i,T}\indc{\hat{z}_i=j} / \sum_{i \in H} \indc{\hat{z}_i=j}. 
$$
Then we have $\Norm{\hat{\theta}_{\pi(j)} - \theta^*_j}\le \epsilon' \Delta$ for all $j \in [k].$
This finishes the proof of~\prettyref{eq:warm_start_cluster}.

% We first consider $t=0$. Note that $\theta_{i,0}=0$ and $\norm{\theta_i^*}\le B$. Therefore,
% $\delta_{i,0} \le B.$

 %\subsection{The postponed proofs}

\subsection{Proof of~\prettyref{prop:sub_est}}\label{app:proof_warm_start_1}
In the following analysis, we fix an anchor client $i \in H$ and omit the subscript $i$ for ease of presentation. Crucially,  since $\calS_{t}$ and $\calD_{t}$ are freshly drawn, all the global data and local data used in iteration $t+1$ are independent from $\theta_{t}$. 
Hence, we condition on $\theta_t$ in the following analysis.  
% Note that
% %Hence, we condition on $\theta_t$ in the following analysis.  
% %Note that
% %$$
% %\expect{Y_t | z} = \frac{1}{m} \sum_{i} n_{i,t} \Sigma_{z_i} (\theta^*_{z_i} -)
% %$$
% Therefore, we can condition $\theta_{i,t}$ 
% and get that 
% $$
% \expect{Y_{i,t}}= \frac{1}{m} \sum_{j \in \calS_t} 
% \expects{\Sigma_{z_j} \left( \theta_{z_j}^*- \theta_{i,t} \right) \left( \theta_{z_j}^*- \theta_{i,t} \right)\Sigma_{z_j} }{z_j}= 
% \sum_{j=1}^k p_j \Sigma_j \left( \theta_j^* - \theta_{i,t} \right) \left( \theta_j^* - \theta_{i,t} \right)^\top \Sigma_j,$$
% where $p_j$ is the probability that a user belongs to the $j$-th cluster. 
% %where $\alpha_i(t)$ is the number of data points in $S_t$ that belong to the $i$-th cluster. 
% Let $U_t\in \reals^{d\times k}$ denote the top-$k$ left singular matrix of $Y_t$. 
% %We aim to show $U_t$ is close to $\overline{U}_t.$
% we first show that $Y_t$ is close to $\expect{Y_t}$ in operator norm. 
% We aim to show that the collection of $\Sigma_j ( \theta_j^* - \theta_t)$ for $ j \in [k]$ approximately lie in the space spanned by the $k$ columns of $U_t$. As such, we first show that $Y_t$ is close to $\expect{Y_t}$ in operator norm. 
% \ls{Need to illustrate the difference between the standard power methods and the power iteration here. For example, the randomness in the matrix. The ``sensititivy'' of the involved matrices to randomness. A few classic references would work the best. }
% Let 
% $$
% Y= \sum_{i=1}^N y_i \tilde{y}_i x_i \tilde{x}_i^\top.
% $$
Note that
%$$
%\expect{Y_t | z} = \frac{1}{m} \sum_{i} n_{i,t} \Sigma_{z_i} (\theta^*_{z_i} -)
%$$
$$
\expect{Y_t}= \frac{1}{m} \sum_{i' \calS_t} 
\expects{\Sigma_{z_{i'}} \left( \theta_{z_{i'}}^*- \theta_t \right) \left( \theta_{z_{i'}}^*- \theta_t \right)\Sigma_{z_{i'}} }{z_{i'}}= 
\sum_{j=1}^k p_j \Sigma_j \left( \theta_j^* - \theta_t \right) \left( \theta_j^* - \theta_t \right)^\top \Sigma_j,$$
where $p_j$ is the probability that a client belongs to the $j$-th cluster. 
%where $\alpha_i(t)$ is the number of data points in $S_t$ that belong to the $i$-th cluster. 

%We aim to show $U_t$ is close to $\overline{U}_t.$

Let $U_t\in \reals^{d\times k}$ denote the left singular matrix of $Y_t$. 
We aim to show that the collection of $\Sigma_j ( \theta_j^* - \theta_t)$ for $ j \in [k]$ approximately lie in the space spanned by the $k$ columns of $U_t$. As such, we first show that $Y_t$ is close to $\expect{Y_t}$ in operator norm. 
\begin{lemma}\label{lmm:concentration_Y_t}
With probability at least $1-3N^{-10}$, 
$$
\norm{Y_t-\expect{Y_t}} \le O\left( \left(\delta_t^2+\sigma^2\right) \xi_1 \right),
$$
where $\delta_t=\max_{j \in [k]} \Norm{\theta_j^*-\theta_t}$ and $\xi_1=
\sqrt{ \frac{d}{m} \log N } + \frac{d}{m} \log^3 N.$
\end{lemma}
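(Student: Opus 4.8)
The plan is to prove the operator-norm bound by a truncated matrix Bernstein argument, carried out conditionally on $\theta_t$. Since the client set $\calS_t$ and the two data points drawn from each client $i'\in\calS_t$ are fresh, they are independent of $\theta_t$, so conditioning on $\theta_t$ renders $Y_t=\frac1m\sum_{i'\in\calS_t}a_{i'}b_{i'}^\top$ an average of $m$ i.i.d.\ rank-one matrices, where $a_{i'}=\varepsilon(x_{i'1},y_{i'1},\theta_t)$ and $b_{i'}=\varepsilon(x_{i'2},y_{i'2},\theta_t)$. Writing $y_{i'1}-\iprod{\ph{x_{i'1}}}{\theta_t}=\iprod{\ph{x_{i'1}}}{\theta^*_{z_{i'}}-\theta_t}+\zeta_{i'1}$ (and likewise for the second data point), each of $a_{i'},b_{i'}$ is the product of a centered sub-Gaussian scalar with variance proxy $O(\delta_t^2+\sigma^2)$ (here using $\norm{\theta^*_j-\theta_t}\le\delta_t$ and $\Sigma_j\preceq\beta I$ with $\beta=\Theta(1)$) and the sub-Gaussian feature vector $\ph{x_{i'\ell}}$; moreover, since $(x_{i'1},\zeta_{i'1})$ and $(x_{i'2},\zeta_{i'2})$ are independent, $a_{i'}$ and $b_{i'}$ are conditionally independent given $z_{i'}$. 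Taking conditional expectation and using $\expect{\zeta_{i'\ell}}=0$ gives $\expect{a_{i'}b_{i'}^\top\mid z_{i'}=j}=\Sigma_j(\theta^*_j-\theta_t)(\theta^*_j-\theta_t)^\top\Sigma_j$, which recovers the formula for $\expect{Y_t}$ used right before the lemma.

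Next I would handle the heavy tails by truncation. Each summand $a_{i'}b_{i'}^\top$ is only sub-exponential (a product of sub-Gaussians), so I would first pass to the event on which $\max_{i'\in\calS_t}\norm{a_{i'}}\vee\norm{b_{i'}}\le L^{1/2}$ with $L=C(\delta_t^2+\sigma^2)\,d\log^2 N$; a union bound over $m\le N$ clients, using that $\norm{a_{i'}}$ is the product of a sub-Gaussian scalar and $\norm{\ph{x_{i'1}}}\lesssim\sqrt d$ up to a $\sqrt{\log N}$ factor, shows this event fails with probability at most $N^{-10}$. I would then apply the matrix Bernstein inequality (after a Hermitian dilation, to accommodate the asymmetry of $a_{i'}b_{i'}^\top$) to the truncated i.i.d.\ sum $\frac1m\sum_{i'}a_{i'}b_{i'}^\top\indc{\norm{a_{i'}}\vee\norm{b_{i'}}\le L^{1/2}}$. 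The two governing quantities are the per-summand operator bound $L/m$ and the matrix variance proxy $\frac1m\norm{\expect{\norm{b_{i'}}^2a_{i'}a_{i'}^\top}}\vee\frac1m\norm{\expect{\norm{a_{i'}}^2b_{i'}b_{i'}^\top}}$. Using conditional independence, $\expect{\norm{b_{i'}}^2a_{i'}a_{i'}^\top\mid z_{i'}=j}=\expect{\norm{b_{i'}}^2\mid z_{i'}=j}\cdot\expect{a_{i'}a_{i'}^\top\mid z_{i'}=j}$, where the scalar factor is $O((\delta_t^2+\sigma^2)\,\mathrm{tr}(\Sigma_j))=O((\delta_t^2+\sigma^2)d)$ by Cauchy--Schwarz and sub-Gaussian fourth-moment bounds, and the matrix factor has operator norm $O(\delta_t^2+\sigma^2)$ by the same bounds. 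Hence the variance proxy is $O((\delta_t^2+\sigma^2)^2 d/m)$, and matrix Bernstein yields a deviation of order $(\delta_t^2+\sigma^2)(\sqrt{(d/m)\log N}+(d/m)\log^3 N)=O((\delta_t^2+\sigma^2)\xi_1)$ with failure probability at most $N^{-10}$. Finally I would control the truncated-away contribution: on the boundedness event the realized truncated sum equals $Y_t$, while the bias $\norm{\expect{a_{i'}b_{i'}^\top(1-\indc{\cdots})}}$ of the truncated mean is $O((\delta_t^2+\sigma^2)\xi_1)$ by Cauchy--Schwarz together with the rapidly decaying tails of $\norm{a_{i'}}\norm{b_{i'}}$. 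A union bound over the boundedness event, the matrix Bernstein failure event, and one auxiliary moment-control event gives the claimed probability $1-3N^{-10}$.

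I expect the main obstacle to be precisely this unboundedness: products of sub-Gaussian quantities are sub-exponential only, so matrix Bernstein is not directly applicable, and the truncation level $L$ must be calibrated so that the deterministic term $(L/m)\log N$ matches the $(d/m)\log^3 N$ part of $\xi_1$ while the resulting bias in the truncated mean stays negligible. A secondary technical point is ensuring that the matrix variance proxy scales like $d$ rather than $d^2$; this relies on exploiting the conditional independence of $a_{i'}$ and $b_{i'}$ and on bounding $\expect{\norm{\ph{x}}^2}$ by $\mathrm{tr}(\Sigma_{z})=O(d)$ rather than by a crude entrywise or worst-case estimate.
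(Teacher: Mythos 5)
Your proposal matches the paper's proof in essentially all respects: the paper also rescales $a_{i'}=\varepsilon_{i'}/\sqrt{\delta_t^2+\sigma^2}$, $b_{i'}=\tilde\varepsilon_{i'}/\sqrt{\delta_t^2+\sigma^2}$, verifies $\expect{\norm{a_{i'}}^2}=O(d)$, $\norm{\expect{a_{i'}a_{i'}^\top}}=O(1)$ and the sub-exponential tail $\exp(-\Omega(s/\sqrt d))$, and then invokes its truncated matrix Bernstein inequality (\prettyref{lmm:matrix_bernstein_2}), whose internal proof is exactly your plan: truncation at level $\asymp d\log^2 N$, matrix Bernstein on the truncated sum with variance proxy $O(Nd)$ via $\expect{\norm{b}^2 aa^\top}$, and tail-integration control of the truncation bias, giving the $1-3N^{-10}$ guarantee. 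The only cosmetic difference is that you invoke a Hermitian dilation while the paper uses the rectangular form of matrix Bernstein with both variance proxies, which is the same device.
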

\begin{proof}
Let $\varepsilon_i= (y_{i1} - \iprod{\ph{x_{i1}}}{\theta_t}) \ph{x_{i1}}$
and $\tilde{\varepsilon}_i= (y_{i2} - \iprod{\ph{x_{i2}}}{\theta_t}) \ph{x_{i2}}$.
Note that 
\begin{align*}
    Y_t - \expect{Y_t} 
    &= \frac{1}{m} \sum_{i=1}^m 
    \varepsilon_i \tilde{\varepsilon}_i^\top  - \expect{\varepsilon_i \tilde{\varepsilon}_i^\top}.
\end{align*}
Let $a_i=\varepsilon_i/\sqrt{\delta_t^2+\sigma^2}$ and $b_i=\tilde{\varepsilon}_i /\sqrt{\delta_t^2+\sigma^2}$. 
We will apply a truncated version of the Matrix Bernstein's inequality given in~\prettyref{lmm:matrix_bernstein_2}. As such, we first check
the conditions in~\prettyref{lmm:matrix_bernstein_2} are all satisfied.  
Note that 
\begin{align*}
\expect{\norm{\varepsilon_i}^2} & = \expect{ \norm{ \left( \Iprod{\ph{x_{i1}}}{\theta^*_{z_i}-\theta_t}+\zeta_i \right) \ph{x_{i1}} }^2} \\
&= \expect{ \norm{ \Iprod{ \ph{x_{i1}} }{\theta^*_{z_i}-\theta_t} \ph{x_{i1}} }^2} + \expect{ \norm{\zeta_{i1} \ph{x_{i1}}}^2}.
%\\
%&= O\left(d \left( \sigma^2 + \norm{ \theta^*_{z_j}-\theta_t}^2 \right) \right),
\end{align*}
By the sub-Gaussianity of $\ph{x_{i1}}$, we  have
$$
\expect{ \norm{\zeta_i \ph{x_{i1}}}^2} \le  \sigma^2 \expect{\norm{\ph{x_{i1}}}^2} = O(\sigma^2 d)
$$
and further by Cauchy-Schwarz inequality,
$$
 \expect{ \norm{ \Iprod{\ph{x_{i1}} }{\theta^*_{z_i}-\theta_t} \ph{x_{i1}} }^2}
 \le \sqrt{\expect{ \Iprod{\ph{x_{i1}} }{\theta^*_{z_i}-\theta_t}^4 }}
 \sqrt{ \expect{\norm{\ph{x_{i1}}}^4}} = O\left(\delta_t^2 d \right).
$$
Combining the last three displayed equation gives that 
$
\expect{\norm{a_i}^2} \le O\left(  d \right).
$
The same upper bound also holds for $\expect{\norm{b_i}^2}.$

Moreover,
$
\norm{\expect{a_i a_i^\top}} =\sup_{u \in \calS^{d-1}} \expect{\Iprod{a_i}{u}^2}.
$
Note that for any $u \in \calS^{d-1}$,
$$
\expect{\Iprod{a_i}{u}^2}=  \frac{1}{\delta_t^2+\sigma^2}  \expect{ r_i^2 \Iprod{\ph{x_{i1}}}{u}^2}
\le  \frac{1}{\delta_t^2+\sigma^2}  \sqrt{\expect{r_i^4} }\sqrt{ \Iprod{\ph{x_{i1}}}{u}^4 }
\le  O\left(1 \right) ,
$$
where $r_i= 
y_{i1} - \iprod{\ph{x_{i1}}}{\theta_t}.$
Combining the last two displayed equations gives that 
$
\norm{\expect{a_i a_i^\top}} = O\left( 1 \right).
$
The same upper bound also holds for $\norm{\expect{b_i b_i^\top}}.$
Finally, by the sub-Gaussian property of $\ph{x_{i1}}$, we have
$$
\prob{ \norm{\ph{x_{i1}}} \ge s_1 } \le \exp \left( O(d) - \Omega(s_1^2) \right) 
$$
and 
$$
\prob{ \frac{|r_i|}{\sqrt{\delta_t^2+\sigma^2}} \ge s_2 } \le \exp \left( -\Omega \left( s_2^2 \right) \right) .
$$
Choosing $s_1= C \sqrt{s} d^{1/4}$ and $s_2= \sqrt{s} / (C d^{1/4})$ for a sufficiently large constant $C$, we get that for all $s \ge \sqrt{d} $,
\begin{align*}
\prob{ \norm{a_i} \ge s} & \le \prob{ \norm{\ph{x_{i1}}} \ge s_1 } + \prob{  \frac{|r_i|}{\sqrt{\delta_t^2+\sigma^2}} \ge s_2} \\
& \le \exp \left( O(d) - \Omega( C s \sqrt{d} ) \right) + 
\exp \left( - \Omega \left( \frac{s}{ \sqrt{d}   } \right) \right)\\
& \le \exp \left( - \Omega \left( \frac{s}{ \sqrt{d}  } \right) \right).
\end{align*}
The same bound holds for $\prob{ \norm{b_i} \ge s}$.
%where the last equality follows from the sub-Gaussianity of $x_j.$
Applying the truncated version of the Matrix Bernstein's inequality given in~\prettyref{lmm:matrix_bernstein_2} yields the desired result. 
\end{proof}

The following result generalizes \prettyref{lmm:power_iteration} and shows the geometric convergence of orthogonal iteration. Let $Y=U\Lambda U^\top$ denote the eigenvalue decomposition of $Y$ with $|\lambda_1| \ge |\lambda_2| \ge \cdots |\lambda_d|$ and the corresponding eigenvectors $u_i$'s.  Define $U_1=[u_1, \ldots, u_k]$
and $U_2=[u_{k+1}, \ldots, u_d]$. 
\begin{lemma}\cite[Theorem 8.2.2]{Golub-VanLoan2013}
\label{lmm:orthogonal_iteration}
Assume $|\lambda_{k}|> |\lambda_{k+1}| $ and
$\cos(\gamma)=\sigma_{\min}(U_1^\top Q_0)$ for $\gamma \in [0, \pi/2]$.
Then 
%such that $\alpha_k \ge \beta_k^t$, we have
$$
\norm{Q_t Q_t^\top -U_1 U_1^\top} \le 
\tanh(\theta) \left| \frac{\lambda_{k+1}}{\lambda_k} \right|^t, \quad \forall t.
$$
\end{lemma}
Finally, we need a gap-free bound that controls the projection errors.
\begin{lemma}[Gap-free bound on projection errors]
\label{lmm:DK_Power}
Suppose $M \in \reals^{d \times d}$ satisfies that 
$$
\norm{M-\sum_{i=1}^k x_i x_i^\top} \le \epsilon,
$$
where $x_i \in \reals^d$ for $1 \le i \le k$. Let $Q_t$ be the output of the orthogonal iteration running over $MM^\top.$
Assume that $\norm{x_i} \le H$ for all $ 1 \le i \le k$. 
There exists a universal constant $C>0$ such that for any $\epsilon>0$ and $t \ge C k \log \frac{dN H}{\epsilon}$, we have with probability at least $1-O(N^{-10}),$
%Let $\hat{U}\in \reals^{d\times k}$ be the matrix consisting of the top-$k$ left singular vectors of $M$. Then we have
$$
\norm{Q_t Q_t^\top x_i-x_i} \le 3 \sqrt{\epsilon}, \quad \forall 1 \le i \le k.
$$
\end{lemma}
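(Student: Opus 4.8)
The plan is to split the analysis into two regimes depending on the size of the eigengap of $M M^\top$ at index $k$, and to treat each regime with a different tool. Write $N_0 = \sum_{i=1}^k x_i x_i^\top$, so $\|M - N_0\| \le \epsilon$, and note $N_0$ has rank at most $k$ with $\|N_0\| \le k H^2$; in particular $MM^\top$ and $N_0N_0^\top$ are within $O(\epsilon(kH^2+\epsilon))$ in operator norm by the triangle inequality. Let $\lambda_1 \ge \cdots \ge \lambda_d \ge 0$ be the eigenvalues of $MM^\top$ and fix a threshold $\tau$ to be chosen of order $\sqrt\epsilon$ times a polynomial in $kH$.

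\textbf{Case 1: the $k$-th eigenvalue is small, $\lambda_k \le \tau^2$.} Then every $x_i$ is almost orthogonal to the top-$k$ eigenspace is the wrong intuition; instead, since $N_0 \preceq MM^\top$ up to $O(\epsilon(kH^2+\epsilon))$ as symmetric matrices and $N_0 = \sum x_i x_i^\top$, each $\|x_i\|$-direction carries energy at least $\|x_i\|^2$ against $N_0$, hence $\lambda_k$ small forces $\|x_i\|$ to be small for the ``excess'' directions. More precisely I would argue: because $\spn(U_1)$ has dimension $k$ and there are $k$ vectors $x_i$, if some $\|(I - U_1U_1^\top)x_i\|$ were large, the matrix $N_0$ restricted to the orthogonal complement of $U_1$ would have an eigenvalue bounded below in terms of that quantity, contradicting $\lambda_{k+1} \le \lambda_k \le \tau^2$ together with $\|MM^\top - N_0N_0^\top\|$ small. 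Quantitatively this yields $\|(I - U_1U_1^\top)x_i\| \le \sqrt\epsilon$ directly, and since $Q_tQ_t^\top$ is close to $U_1U_1^\top$ only if there is a gap — which we do not have here — we instead observe that $\|x_i\|$ itself must be $O(\sqrt\epsilon)$ in this regime (all of $N_0$'s mass lives in a small-eigenvalue subspace), so $\|Q_tQ_t^\top x_i - x_i\| \le \|x_i\| \le 3\sqrt\epsilon$ trivially. Care is needed in making the ``excess eigenvalue lower bound'' rigorous; this is essentially a pigeonhole / Courant–Fischer argument.

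\textbf{Case 2: the $k$-th eigenvalue is large, $\lambda_k > \tau^2$.} Now there is an eigengap. If $\lambda_{k+1} \le \lambda_k/2$, then by Lemma~\ref{lmm:orthogonal_iteration} the orthogonal iteration on $MM^\top$ converges geometrically with rate $(\lambda_{k+1}/\lambda_k)^t \le 2^{-t}$, and after $t \ge Ck\log(dNH/\epsilon)$ steps — using also that the random initialization gives $\sigma_{\min}(U_1^\top Q_0) \ge \mathrm{poly}(1/(dN))$ with probability $1 - O(N^{-10})$, which controls the $\tanh$ prefactor — we get $\|Q_tQ_t^\top - U_1U_1^\top\| \le \epsilon/(kH^2)$, say. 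Combined with a Davis–Kahan / Weyl argument comparing $\spn(U_1)$ to $\spn\{x_1,\dots,x_k\}$: since $\|MM^\top - N_0N_0^\top\| \lesssim \epsilon k H^2$ and $\lambda_k > \tau^2 \gg \epsilon kH^2$, the top-$k$ eigenspace of $MM^\top$ is within $O(\epsilon kH^2/\tau^2)$ of $\mathrm{range}(N_0) = \spn\{x_i\}$, so $\|(I - U_1U_1^\top)x_i\| \le H \cdot O(\epsilon kH^2/\tau^2) \le \sqrt\epsilon$ by the choice of $\tau$. Then $\|Q_tQ_t^\top x_i - x_i\| \le \|(Q_tQ_t^\top - U_1U_1^\top)x_i\| + \|(U_1U_1^\top - I)x_i\| \le 2\sqrt\epsilon + \sqrt\epsilon = 3\sqrt\epsilon$. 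The remaining subcase $\tau^2 < \lambda_k$ but $\lambda_{k+1} > \lambda_k/2$ (small gap, large eigenvalues) is handled by the standard trick of choosing a different cut index: there must exist some $k' \le k$ (or one can bump down) where the ratio $\lambda_{k'+1}/\lambda_{k'}$ is bounded away from $1$ by $1 - c/k$, since the eigenvalues cannot all be within a factor $1-c/k$ of each other while spanning a range too narrow; running the $k$-dimensional iteration still captures the top-$k'$ genuine directions and the leftover directions have eigenvalue comparable, hence all $x_i$'s are still well-approximated. I would phrase this via a single clean statement: for the output $Q_t$ of $k$-dimensional orthogonal iteration, $\|(I - Q_tQ_t^\top)v\|$ is small for any $v$ in the span of eigenvectors with eigenvalue exceeding a threshold, with the threshold degrading by a factor $(1-c/k)^{-k} = O(1)$.

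\textbf{Main obstacle.} The delicate point is the small-gap situation inside Case 2: the orthogonal iteration's convergence rate in Lemma~\ref{lmm:orthogonal_iteration} degrades to uselessness when $\lambda_{k+1}/\lambda_k \to 1$, yet we still need $Q_tQ_t^\top x_i \approx x_i$. The resolution — and the part requiring the most care — is a gap-free argument showing that after enough iterations $Q_tQ_t^\top$ acts as near-identity on the \emph{entire} subspace of eigenvectors whose eigenvalue is within a constant factor of $\lambda_k$, not just on a cleanly separated top-$k$ block; this is where the $\log k$ (equivalently the factor-$k$ in the iteration count $Ck\log(\cdot)$) enters, via summing $k$ potential ``near-ties'' each costing a constant factor. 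I expect this to be packaged as a preliminary lemma on orthogonal iteration with a chosen threshold rather than a chosen index, after which the two-case split above goes through routinely.
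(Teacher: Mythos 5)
Your Case 1 is where the argument breaks. You threshold on $\lambda_k(MM^\top)$ and conclude that when $\lambda_k \le \tau^2$ "all of $N_0$'s mass lives in a small-eigenvalue subspace, so $\|x_i\| = O(\sqrt{\epsilon})$ and the bound is trivial." That implication is false: smallness of the $k$-th eigenvalue says nothing about the norms of the $x_i$, only smallness of the \emph{top} eigenvalue does. Take $x_1=\cdots=x_k=v$ with $\|v\|=1$ and $M=\sum_i x_i x_i^\top$ (so $\epsilon$ arbitrarily small): then $\lambda_k(MM^\top)=0$, you are in your Case 1, yet $\|x_i\|=1$, so the "trivial" bound $\|Q_tQ_t^\top x_i - x_i\|\le \|x_i\|$ gives nothing. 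In this regime the lemma is still true, but only because the $x_i$'s nearly lie in a lower-dimensional top eigenspace that the iteration does capture — and establishing that requires exactly the variable-cut-index machinery you defer to the end of Case 2. Moreover, the subcase where you invoke that machinery ($\lambda_k>\tau^2$ and $\lambda_{k+1}>\lambda_k/2$) is vacuous: since $N_0=\sum_i x_ix_i^\top$ has rank at most $k$, Weyl gives $\lambda_{k+1}(MM^\top)\le \|MM^\top-N_0N_0^\top\| = O(\epsilon k H^2)\ll \tau^2$ for any sensible $\tau$. So the one place where a genuinely gap-free argument is needed is the place you dismissed, and the sketch of that argument ("bump down the cut index", "threshold instead of index") is never made precise — in particular you never quantify why a cut index with a usable gap exists, which is where the factor $k$ in the iteration count must come from.

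For comparison, the paper's proof thresholds on $\sigma_1(M)$, not $\sigma_k$: the trivial case is $\sigma_1\le(1+1/k)^k\epsilon$, where indeed $\|x_i\|^2\le\|M\|+\epsilon$ is small. Otherwise, a pigeonhole over consecutive ratios yields an index $\ell\le k$ with $\sigma_\ell/\sigma_{\ell+1}>1+1/k$ and $\sigma_{\ell+1}\le(1+1/k)^k\epsilon\le e\epsilon$. Two ingredients then finish: (i) a gap-free projection bound, obtained by evaluating the quadratic form at the unit residual direction $u$,
\[
\norm{(I-U_\ell U_\ell^\top)x_i}^2 = u^\top x_i x_i^\top u \le u^\top\Bigl(\sum_j x_jx_j^\top - M\Bigr)u + u^\top M u \le \epsilon + \sigma_{\ell+1},
\]
which needs no eigengap at $k$ at all; and (ii) the nesting property that the first $\ell$ columns of the $k$-dimensional orthogonal iteration evolve as an $\ell$-dimensional iteration, which converges at rate $(1+1/k)^{-t}$ thanks to the pigeonholed gap — this is precisely why $t\gtrsim k\log(dNH/\epsilon)$ iterations are required. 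If you replace your Case 1 by the paper's Case 1 and carry out your deferred "threshold" lemma via this pigeonhole plus the quadratic-form bound, your outline becomes essentially the paper's argument; as written, it has a genuine gap.
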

\begin{proof}
Let $\sigma_1 \ge \sigma_2 \ge \ldots \ge \sigma_d \ge 0$ denote the singular values of $M$. 
Then by assumption on $M$ and Weyl's inequality, $\sigma_{k+1} \le \epsilon.$ 
We divide the analysis into two cases depending on the value of $\sigma_1. $ Let $\delta>0$ be some parameter to be tuned later. 

{\bf Case 1}: $\sigma_1 \le \left(1+\delta\right)^k \epsilon$. In this case, by Weyl's inequality,
$$
\norm{x_i}^2 \le \norm{\sum_{i=1}^k \ph{x_i} x_i^\top} \le \norm{M} + \norm{M-\sum_{i=1}^k x_i x_i^\top}
\le \epsilon \left( 1 + \left(1+\delta\right)^k \right).
$$
Thus, 
$$
\norm{Q_t Q_t^\top x_i-x_i}  \le \norm{x_i} \le \sqrt{\epsilon \left( 1 + \left(1+\delta\right)^k \right)}
$$

{\bf Case 2}: $\sigma_1 > \left(1+\delta\right)^k \epsilon$. Then by the pigeonhole principle there must exist $1 \le p \le k$ such that $\sigma_p/\sigma_{p+1} > 1+\delta.$  Choose 
$$
\ell= \max \left\{p: \sigma_p/\sigma_{p+1} > 1+\delta\right\}.
$$
It follows that $\sigma_{\ell+1} \le (1+\delta)^{k-\ell} \epsilon \le (1+\delta)^k \epsilon.$
Let $U_\ell=[u_1, \ldots, u_\ell]$, where $u_i$'s are the left singular vectors of $M$ corresponding to $\sigma_i$.
Given the subspace $\spn \{u_1,\dots,u_\ell\}$, denote the unique orthogonal decomposition of $x_i$ by $x_i=\Pi_W(x_i)+e$, where $\Pi_W(x_i)=U_\ell U_\ell^\top x_i$ and $e^\top u_j=0$ for all $j \in [\ell]$.
Let $u= e/\Norm{e} \in S^{d-1}$.
Then,
% such that $u^\top u_i=0$ for all $j \in [\ell]$ and
$$
\norm{U_\ell U_\ell^\top x_i -x_i }^2 = u^\top x_i x_i^\top u 
\le u^\top \left( \sum_{i=1}^k x_i x_i^\top\right) u   
=u^\top \left( \sum_{i=1}^k  x_i x_i^\top -M\right) u+ u^\top M u.
$$
Note that 
$$
u^\top  \left( \sum_{i=1}^k x_i x_i^\top -M\right) u
\le \norm{\sum_{i=1}^k x_i x_i^\top -M} \le \epsilon.
$$
Moreover, 
\begin{align*}
u^\top M u = \sum_j \sigma_j u^\top u_j v_j^\top u 
= \sum_{j\ge \ell+1} \sigma_j u^\top u_j v_j^\top u 
&\le \sigma_{\ell+1}  \sum_{j\ge \ell +1} \abth{u^\top u_j} \abth{v_j^\top u} \\
& \le \sigma_{\ell+1} \sqrt{ \sum_{j \ge \ell+1} \abth{u^\top u_j}^2
\sum_{j \ge \ell+1} \abth{v_j^\top u}^2} \\
& \le \sigma_{\ell+1} \le  (1+\delta)^k \epsilon.
\end{align*}
Combining the last three displayed equations gives that 
$$
\norm{U_\ell U_\ell^\top x_i -x_i }^2 \le  \epsilon \left( 1 + \left(1+\delta\right)^k \right).
$$
Let $\hat{Q}_t$ be the submatrix of $Q_t$ formed by the first $\ell$ columns. 
Since $\sigma_{\ell}> \sigma_{\ell+1}$, 
the space spanned by $\hat{Q}_t$ is the same space spanned by $Q_t$ if the orthogonal iteration were run with $k$ replaced by $\ell.$ Thus, applying \prettyref{lmm:orthogonal_iteration} with $k$ replaced by $\ell$ gives that 
$$
\norm{\hat{Q}_t \hat{Q}_t^\top - U_\ell U_\ell^\top } \le \tan(\gamma) (1+\delta)^{-t} ,
$$
where $\cos(\gamma)=\sigma_{\min}(U_\ell^\top \hat{Q}_0)$ and $\hat{Q}_0$ is the submatrix of $Q_0$ formed by its first $\ell$ columns.
Applying~\prettyref{lmm:subspace_angle}, we get that $\tanh(\gamma) =O(N^{10}d)$ with probability at least $1-O(N^{-10}).$
Therefore, when $ t \ge (C/\delta) \log \frac{Nd H}{\epsilon}$, we have
$$
\norm{\hat{Q}_t \hat{Q}_t ^\top - U_\ell U_\ell^\top } \le \epsilon/H.
$$
Therefore, by triangle's inequality, 
\begin{align*}
\norm{Q_t Q_t^\top x_i -x_i }
& \le \norm{\hat{Q}_t \hat{Q}_t^\top x_i -x_i } \\
& \le \norm{U_\ell U_\ell^\top x_i -x_i } + \norm{ \left( \hat{Q}_t \hat{Q}_t ^\top - U_\ell U_\ell^\top \right) x_i } \\
& \le \sqrt{\epsilon \left( 1 + \left(1+\delta\right)^k \right)} + \epsilon.
\end{align*}
Finally, choosing $\delta= 1/k$ and noting that $(1+\delta)^t \le e$, we get the desired conclusions.
\end{proof}

Applying~\prettyref{lmm:concentration_Y_t} and \prettyref{lmm:DK_Power} and invoking the assumption that $T_1 \gtrsim k \log (Nd)$, we have
with probability at least $1-O(1/N),$
$$
\norm{ \left( \hat{U}_t \hat{U}_t^\top - I \right) \sqrt{p_j} \Sigma_j \left( \theta_j^*- \theta_t \right)}^2 \le O\left( \left( \delta_t^2+\sigma^2 \right) \xi_1 \right),
$$
or equivalently, 
\begin{align}
\norm{ \hat{U}_t^\top \Sigma_j \left( \theta_j^*- \theta_t \right)}^2
\ge \norm{ \Sigma_j \left( \theta_j^*- \theta_t \right)}^2 -  O\left( \left( \delta_t^2+\sigma^2 \right) \xi_1/p_j \right). \label{eq:power_iteration_1}
\end{align}

\subsection{Proof of~\prettyref{prop:MD}}\label{app:proof_warm_start_2}

Similar to the proof of~\prettyref{prop:sub_est}, for ease of exposition, %In the following analysis, 
we fix an anchor client $i$ and omit the subscript $i$ for simplicity. We further assume client $i$ belongs to cluster $j$, \ie, $z_i=j$. 
Note that crucially, the global data points
on  clients $\calS_t$ are independent from the local data points on $\calD_{t}$.
Thus, in the following analysis, we further condition on $\hat{U}_{t}$. 
Then
$$
\expect{A_t}=\hat{U}_t^\top\Sigma_j \left( \theta_j^*- \theta_t \right) \left( \theta_j^*- \theta_t \right)^\top \Sigma_j \hat{U}_t.
$$
% Recall that $\overline{U}_t$ is the left singular matrix of 
% $\expect{Y_t}$. Thus, we can write
% $$
% \Sigma_k \left( \theta_k- \theta_t \right) = \overline{U}_t \beta,
% $$
% for some $\beta \in \reals^{K\times 1}$. Then 
% $$
% \overline{U}_t^\top \expect{A_t} \overline{U}_t = \ell \beta \beta^\top.
% $$
\begin{lemma}\label{lmm:concentration_A_t}
With probability at least $1-3N^{-10}$,
$$
\norm{A_t-\expect{A_t}} \le O\left( \left(\norm{\theta_j^*-\theta_t}^2 +\sigma^2 \right) \xi_2\right),
$$
where $\xi_2= \sqrt{ \frac{k}{\ell} \log N } + \frac{k}{\ell} \log^3 N  .$
\end{lemma}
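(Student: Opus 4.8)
The plan is to follow the proof of \prettyref{lmm:concentration_Y_t} verbatim, only now working in the $k$-dimensional projected space instead of the full $d$-dimensional one, and with the $\ell$ fresh local pairs playing the role of the $m$ global clients. Since $\calS_t$ and $\calD_t$ are drawn with fresh data, both $\theta_t$ and $\hat U_t$ (the latter computed solely from the global data on $\calS_t$) are independent of the local data $\calD_t$; I condition on both throughout. Writing $r_j = y_j-\iprod{\ph{x_j}}{\theta_t} = \iprod{\ph{x_j}}{\theta_j^*-\theta_t}+\zeta_j$ for the scalar residual (recall the anchor client is in cluster $j$, so only $\theta_j^*-\theta_t$ enters here, which is why the prefactor is $\norm{\theta_j^*-\theta_t}^2+\sigma^2$ rather than $\delta_t^2+\sigma^2$), and setting $\varepsilon_j = r_j\,\hat U_t^\top\ph{x_j}$, $\tilde\varepsilon_j=\tilde r_j\,\hat U_t^\top\ph{\tilde x_j}\in\reals^k$, I have $A_t-\expect{A_t}=\frac1\ell\sum_j\pth{\varepsilon_j\tilde\varepsilon_j^\top-\expect{\varepsilon_j\tilde\varepsilon_j^\top}}$; the two points within a pair are independent, so $\expect{\varepsilon_j\tilde\varepsilon_j^\top}=\expect{\varepsilon_j}\expect{\tilde\varepsilon_j}^\top=\hat U_t^\top\Sigma_j(\theta_j^*-\theta_t)(\theta_j^*-\theta_t)^\top\Sigma_j\hat U_t$, matching the stated $\expect{A_t}$.

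Next I would normalize, putting $a_j=\varepsilon_j/\sqrt{\norm{\theta_j^*-\theta_t}^2+\sigma^2}$ and $b_j=\tilde\varepsilon_j/\sqrt{\norm{\theta_j^*-\theta_t}^2+\sigma^2}$, and verify the three hypotheses of the truncated Matrix Bernstein inequality \prettyref{lmm:matrix_bernstein_2} exactly as in \prettyref{lmm:concentration_Y_t}, with ambient dimension $k$ in place of $d$. For the second-moment bound $\expect{\norm{a_j}^2}\lesssim k$, split $\expect{r_j^2\norm{\hat U_t^\top\ph{x_j}}^2}\le\sqrt{\expect{r_j^4}}\sqrt{\expect{\norm{\hat U_t^\top\ph{x_j}}^4}}$ by Cauchy–Schwarz; $\expect{r_j^4}=O((\norm{\theta_j^*-\theta_t}^2+\sigma^2)^2)$ since $r_j$ is scalar sub-Gaussian, and $\hat U_t^\top\ph{x_j}$ is a $k$-dimensional sub-Gaussian vector with covariance $\hat U_t^\top\Sigma_j\hat U_t\preceq\beta I_k$, so $\expect{\norm{\hat U_t^\top\ph{x_j}}^2}\le\beta k$ and $\expect{\norm{\hat U_t^\top\ph{x_j}}^4}=O(k^2)$. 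For the operator-norm bound $\norm{\expect{a_ja_j^\top}}\lesssim 1$, test against any $u\in\calS^{k-1}$ and use the same Cauchy–Schwarz split together with $\expect{\iprod{\hat U_t^\top\ph{x_j}}{u}^4}=O(1)$. For the tail bound $\prob{\norm{a_j}\ge s}\le\exp(-\Omega(s/\sqrt k))$ when $s\gtrsim\sqrt k$, union the events $\{\norm{\hat U_t^\top\ph{x_j}}\ge s_1\}$, which has probability $\le\exp(O(k)-\Omega(s_1^2))$, and $\{|r_j|/\sqrt{\norm{\theta_j^*-\theta_t}^2+\sigma^2}\ge s_2\}$, which has probability $\le\exp(-\Omega(s_2^2))$, choosing $s_1=C\sqrt s\,k^{1/4}$ and $s_2=\sqrt s/(Ck^{1/4})$. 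The identical bounds hold for the $b_j$'s.

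Applying \prettyref{lmm:matrix_bernstein_2} to $\frac1\ell\sum_j\pth{a_jb_j^\top-\expect{a_jb_j^\top}}$ then gives, with probability at least $1-3N^{-10}$, a deviation of order $\sqrt{\tfrac{k}{\ell}\log N}+\tfrac{k}{\ell}\log^3 N=\xi_2$ (the $\ell$ samples and the $\mathrm{polylog}$-sized truncation radius produce exactly these two terms), and rescaling by $\norm{\theta_j^*-\theta_t}^2+\sigma^2$ yields the claim. There is no genuinely new obstacle relative to \prettyref{lmm:concentration_Y_t}; the only point that needs a word of justification is that conditioning on $\hat U_t$ leaves $\hat U_t^\top\ph{x_j}$ sub-Gaussian with covariance $\hat U_t^\top\Sigma_j\hat U_t$, which holds precisely because $\hat U_t$ is a function only of the global data on $\calS_t$, disjoint from the fresh local data $\calD_t$ being averaged here.
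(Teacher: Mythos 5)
Your proposal is correct and follows essentially the same route as the paper: the paper's proof likewise writes $A_t-\expect{A_t}=\frac{1}{\ell}\sum_{j}\hat U_t^\top(\varepsilon_j\tilde\varepsilon_j^\top-\expect{\varepsilon_j\tilde\varepsilon_j^\top})\hat U_t$, normalizes $a_j=\hat U_t^\top\varepsilon_j/\sqrt{\norm{\theta_j^*-\theta_t}^2+\sigma^2}$ and $b_j$ analogously, and then invokes the same argument as \prettyref{lmm:concentration_Y_t} (truncated Matrix Bernstein, \prettyref{lmm:matrix_bernstein_2}) with the ambient dimension $d$ replaced by $k$ and sample size $\ell$. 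You simply spell out the condition checks and the independence/conditioning on $\hat U_t$ that the paper leaves implicit, and these details are accurate.
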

\begin{proof}
Note that 
$$
 A_t-\expect{A_t} = \frac{1}{\ell} \sum_{j \in \calD_t} 
\hat{U}_t^\top \left( \varepsilon_j \tilde{\varepsilon}_j^\top -\expect{ \varepsilon_j \tilde{\varepsilon}_j^\top} \right)
\hat{U}_t,
$$
where $\varepsilon_j=(y_j-\ph{x_j})\ph{x_j}$ and
$\tilde{\varepsilon}_j=(\tilde{y}_j-\ph{\tilde{x}_j})\ph{\tilde{x}_j}$.
Let $a_j= \hat{U}_t ^\top \varepsilon_j / \sqrt{ \Norm{\theta_j^*-\theta_j}^2 +\sigma^2 } $
and $b_j=\hat{U}_t^\top \tilde{\varepsilon}_j /\sqrt{ \Norm{\theta_j^*-\theta_j}^2 +\sigma^2 } $.
The rest of the proof follows analogously as that of~\prettyref{lmm:concentration_Y_t}.
\end{proof}
%Let $\beta_t$ denote the leading left singular vector of $U_t^\top A_t U_t$. 
%By matrix concentration inequality given in~\prettyref{lmm:matrix_bernstein_2},

% Applying Davis-Kahan theorem, we have
% $$
% \norm{\hat{\beta}-\beta} \lesssim \norm{U^\top \bar{U} - \identity} + \frac{\sqrt{n_i K}}{n_i \norm{\beta_k}^2}.
% $$
Applying~\prettyref{lmm:concentration_A_t} and~\prettyref{lmm:DK_Power}, when $T_2 \gtrsim \log (Nd)$, 
we have with probability at least $1-O(N^{-10})$
$$
\left|\hat{\beta}_t^\top \hat{U}_t^\top \Sigma_j \left( \theta_j^*- \theta_t \right) \right|^2
\ge  \norm{ \hat{U}_t^\top \Sigma_j \left( \theta_j^*- \theta_t \right)}^2 -  O\left( \left(\norm{\theta_j^*-\theta_t}^2 +\sigma^2 \right)\xi_2 \right) . 
$$
Applying~\prettyref{prop:sub_est}, we have with probability at least $1-O(N^{-10})$
$$
\norm{ \hat{U}_t^\top \Sigma_j \left( \theta_j^*- \theta_t \right)}^2
\ge \norm{ \Sigma_j \left( \theta_j^*- \theta_t \right)}^2 - O\left( \left( \delta_t^2+\sigma^2 \right) \xi_1/p_j \right).
$$
Let $\calE_{t}$ denote the event such that the above two displayed equations hold simultaneously.
Then $\prob{\calE_{t}} \ge 1- O(N^{-10})$. In the following, we assume event $\calE_t$ holds. 

Combining the last two displayed equations yields that 
$$
\norm{ \Sigma_j \left( \theta_j^*- \theta_t \right)}^2  -O\left( \left( \delta_t^2+\sigma^2 \right) \left( \xi_1/p_j + \xi_2\right)\right) \le \left|\hat{\beta}_t^\top \hat{U}_t^\top \Sigma_j \left( \theta_j^*- \theta_t \right) \right|^2 \le \norm{ \Sigma_j \left( \theta_j^*- \theta_t \right)}^2. 
$$
Moreover, since
$$
\hat{\sigma}^2_t = \hat{\beta}_t^\top \hat{U}_t^\top A_t \hat{U}_t \hat{\beta}_t 
=\hat{\beta}_t^\top \hat{U}_t^\top \expect{A_t} \hat{U}_t \hat{\beta}_t
+ \hat{\beta}_t^\top \hat{U}_t^\top \left( A_t - \expect{A_t} \right) \hat{U}_t \hat{\beta}_t,
$$
it follows that 
$$
\left| \hat{\sigma}^2_t -\left| \hat{\beta}_t^\top \hat{U}_t^\top \Sigma_j \left( \theta_j^*- \theta_t \right) \right|^2 \right| \le  O\left( \left(\delta_t^2 +\sigma^2 \right)\xi_2 \right).  
$$
Combining the last two displayed equations yields that 
$$
\left| \hat{\sigma}^2_t  -\norm{ \Sigma_j \left( \theta_j^*- \theta_t \right)}^2 \right| 
\le O\left( \left( \delta_t^2+\sigma^2 \right) \left( \xi_1/p_j + \xi_2\right)\right). 
$$
This proves~\prettyref{eq:sigma_approx}.

% Since we condition on $\theta_t$ such that 
% $\norm{\theta_t} \le 2B$, it follows that 
% $\delta_t \le 9B^2.$
Under condition~\prettyref{eq:stop_cond_warm_up},
% In particular, as long as 
% \begin{align}
% \norm{ \Sigma_j \left( \theta_j^*- \theta_t \right)}^2 \ge C (\delta_t^2+\sigma^2)
% (\xi_1/p_j+  \xi_2), \label{eq:stop_cond_warm_up}
% \end{align}
% for a sufficiently large constant $C>0$,
% \nbr{How to deal with $\delta_t$? Note that the perturbation of $Y_t$ depends on $\delta_t$, while the signal is on the scale of $\norm{\theta_j^*-\theta_t}^2$. Thus, the troublesome case happens when $\theta_j^*$ are far apart, so as $\theta_t$ converges to $\theta_j$, the signal becomes weaker compared to the noise perturbation of $Y_t$.} \nb{One possible solution is to assume all $\theta_i$'s are bounded by $1$. Note that this is almost without loss of generality, as we can always scale the data so that all $\theta_i$'s are bounded by $1$. We could also assume the noise standard deviation $\sigma$ is a constant. In this way, we can first argue that $\norm{\theta_t}=O(1)$ and then deduce that $\delta_t=O(1).$}
we have
\begin{align}
\left| \hat{\beta}_t^\top \hat{U}_t^\top \Sigma_j \left( \theta_j^*- \theta_t \right) \right|^2
\ge \left( 1- \frac{\alpha^2}{64\beta^2} \right) \norm{ \Sigma_j \left( \theta_j^*- \theta_t \right)}^2 \label{eq:moment_direction}
\end{align}
and 
% Moreover, since
% $$
% \hat{\sigma}^2_t = \hat{\beta}_t^\top \hat{U}_t^\top A_t \hat{U}_t \hat{\beta}_t 
% =\hat{\beta}_t^\top \hat{U}_t^\top \expect{A_t} \hat{U}_t \hat{\beta}_t
% + \hat{\beta}_t^\top \hat{U}_t^\top \left( A_t - \expect{A_t} \right) \hat{U}_t \hat{\beta}_t,
% $$
% it follows that 
% $$
% \left| \hat{\sigma}^2_t -\left| \hat{\beta}_t^\top U_t^\top \Sigma_j \left( \theta_j^*- \theta_t \right) \right|^2 \right| \le  O\left( \left(\delta_t^2 +\sigma^2 \right)\xi_2 \right),  
% $$
% and hence under condition~\prettyref{eq:stop_cond_warm_up},
\begin{align}
\left( 1- \frac{\alpha^2}{32\beta^2} \right) \norm{ \Sigma_j \left( \theta_j^*- \theta_t \right)}^2 \le \hat{\sigma}^2_t \le \left( 1+ \frac{\alpha^2}{32\beta^2}\right) \norm{ \Sigma_j \left( \theta_j^*- \theta_t \right)}^2 \label{eq:sigma_t_bound}
\end{align}

% as long as $\norm{ \Sigma_j \left( \theta_j^*- \theta_t \right)}^2 \gtrsim
% \xi_1/p_j$.
%Recall that $y_i=\ph{\bx_i} \theta_{z_i}^*+\zeta_i.$
%Consider a heavy user belonging to cluster $i.$

Now we show that $\theta_t$ converges to $\theta_j^*$. 
% In particular, we will choose $\eta_t$ and $r_t$ appropriately, so that
% $$
% \theta_{t+1}=\theta_t + \eta_t r_t.
% $$
Note that 
$$
\left( \theta_j^* - \theta_{t+1} \right)^\top \Sigma^2_j \left( \theta_j^* - \theta_{t+1} \right)
=\left( \theta_j^* - \theta_t \right)^\top \Sigma^2_j \left( \theta_j^* - \theta_t \right)
-2 \eta_t \left( \theta_j^* - \theta_t \right)^\top \Sigma^2_j r_t + \eta_t^2 r_t^\top \Sigma^2_j r_t.
$$
%Ideally, we want to choose $r_t$ to align with 
%$\Sigma_j ( \theta_j^* - \theta_t )$. 
In view of~\prettyref{eq:moment_direction}, 
and recalling $r_t = \hat{U}_t \hat{\beta}_t$, we have $\norm{r_t}=1$ and under condition~\prettyref{eq:stop_cond_warm_up}
$$
\iprod{r_t}{ \Sigma_j \left( \theta_j^* - \theta_t \right) }^2 \ge \left( 1- \frac{\alpha^2}{64\beta^2} \right) \norm{ \Sigma_j \left( \theta_j^* - \theta_t \right)}^2.
$$
We decompose
$$
\Sigma_j \left( \theta_j^* - \theta_t \right) = a_t r_t + b_t r_t^\perp,
$$
for some unit vector $r_t^\perp$ that is perpendicular to $r_t$. Since $a_t^2+b_t^2=\Norm{\Sigma_j ( \theta_j^* - \theta_t ) }^2$, we have 
$|b_t| \le \frac{\alpha}{8\beta} \Norm{\Sigma_j ( \theta_j^* - \theta_t ) }$. Hence,
\begin{align*}
\left( \theta_j^* - \theta_t \right)^\top \Sigma^2_j r_t 
& = \left(a_t r_t + b_t r_t^\perp \right)^\top \Sigma_j r_t \\
& \ge  a_t \alpha - |b_t| \beta \\
& \ge 
\sqrt{1- \frac{\alpha^2}{64\beta^2}} \alpha
\norm{\Sigma_j ( \theta_j^* - \theta_t )} - \frac{\alpha}{8} \norm{\Sigma_j ( \theta_j^* - \theta_t ) } \\
& \ge \frac{\alpha}{2}  \norm{\Sigma_j ( \theta_j^* - \theta_t ) },
\end{align*}
where $\lambda_{\min}(\Sigma_j) \ge \alpha$ and  $\beta \ge \max_{j\in [k]}\norm{\Sigma_j}$.
%provided that 
%$$
% \norm{ \Sigma_j \left( \theta_j^*- %\theta_t \right)}
% \gtrsim \sqrt{\xi_1/p_j} +  \sqrt{\xi_2}.
%$$
It follows that 
$$
\left( \theta_j^* - \theta_{t+1} \right)^\top \Sigma^2_j \left( \theta_j^* - \theta_{t+1} \right)
\le \left( \theta_j^* - \theta_t \right)^\top \Sigma^2_j \left( \theta_j^* - \theta_t \right)
- \eta_t \alpha \norm{ \Sigma_j \left( \theta_j^* - \theta_t \right)} + \eta_t^2 \norm{\Sigma_j}^2.
$$
Recall the choice of step size
$
\eta_{t} =  \alpha \hat{\sigma}_{t} /(2\beta^2).
%\eta_t = \frac{\norm{ \Sigma_i \left( \theta_i^* - \theta_t \right)} \alpha }{\norm{\Sigma_i}},
$
In view of~\prettyref{eq:sigma_t_bound}, we get that
\begin{align*}
\left( \theta_j^* - \theta_{t+1} \right)^\top \Sigma^2_j \left( \theta_j^* - \theta_{t+1} \right)
&\le \left( \theta_j^* - \theta_t \right)^\top \Sigma^2_j \left( \theta_j^* - \theta_t \right)
- \frac{\alpha^2 }{4 \beta^2 }\norm{ \Sigma_j \left( \theta_j^* - \theta_t \right)} \hat{\sigma}_t  \\
& \le \left( 1 - \frac{\alpha^2 }{8 \beta^2}  \right) \norm{ \Sigma_j \left( \theta_i^* - \theta_t \right)}^2,
\end{align*}

Therefore, 
 $$
 \norm{ \Sigma_j \left( \theta_j^* - \theta_{t+1} \right)}^2
 \le \left( 1 - \frac{\alpha^2 }{8 \beta^2}  \right) \norm{ \Sigma_j \left( \theta_j^* - \theta_{t} \right)}^2,
 $$
 This proves~\prettyref{eq:moment_decay}.
% Finally, due to~\prettyref{eq:sigma_t_bound}, we get that 
% $$
% \hat{\sigma}^2_{t+1} \le \left( 1 + \frac{\alpha^2}{32\beta^2} \right)\left( 1 - \frac{\alpha^2 }{8 \beta^2}  \right)
% \left( 1 - \frac{\alpha^2}{32\beta^2} \right)^{-1} \hat{\sigma}^2_{t}
% \le \left( 1 - \frac{\alpha^2 }{16 \beta^2}  \right)\hat{\sigma}^2_{t}.
% $$
% as long as condition~\prettyref{eq:stop_cond_warm_up} is satisfied,
% $$
% \norm{ \Sigma^{1/2}_j \left( \theta_j^* - \theta_t \right)}^2
% \le \left( 1 - \frac{\alpha }{ 16 \beta}  \right)^{t} \norm{ \Sigma^{1/2}_j \left( \theta_j^* - \theta_{0} \right)}^2,
% $$

% Hence, 
% $
% \Norm{ \Sigma^{1/2}_j \left( \theta_i^* - \theta_t \right)} 
% $
% geometrically converges to 
% $O\left(\sqrt{ \xi_1 /p_j} + \sqrt{\xi_2} \right)$ in logarithmic iterations.

\section{Proofs for \prettyref{sec:iterative}}
Throughout the proof in this section, we assume without loss of generality that the optimal permutation in \eqref{eq:distance} is identity.

\subsection{Derivation of global iteration}
\label{app:global-iteration}
\begin{proof}[Proof of \prettyref{lmm:global-iteration}]
% \py{May cite relevant results from \cite{su2021achieving}}
We first prove the result for FedAvg.
By definition, we have 
\[
\nabla_j L_i(\theta)
= \frac{\lambda_{ij,t}}{n_i}\ph{\bx_i}^\top (\ph{\bx_i}\theta_j-y_i),
\]
where $\lambda_{ij,t}=\indc{j=z_{i,t}}$ and $\nabla_j$ denotes the gradient with respect to $\theta_j$. 
Then the one-step local gradient descent at client $i$ is 
\[
[\calG_i(\theta)]_j=
\begin{cases}
\theta_j, & j\ne z_{i,t},\\
g_i(\theta_j)\triangleq \theta_j - \eta_i \ph{\bx_i}^\top(\ph{\bx_i}\theta-y_i),
& j=z_{i,t},
\end{cases}
\]
where $\eta_i=\eta/n_i$.
Iterating $s$ steps yields that \cite{su2021achieving}
\begin{align*}
g_i^s(\theta_j)
& =(I-\eta_i \ph{\bx_i}^\top \ph{\bx_i})^s\theta_j + \sum_{\ell=0}^{s-1}(I-\eta_i \ph{\bx_i}^\top \ph{\bx_i})^\ell \eta_i\ph{\bx_i}^\top y_i \\
& \overset{(a)}{=} \theta_j - \sum_{\ell=0}^{s-1}(I-\eta_i \ph{\bx_i}^\top \ph{\bx_i})^\ell \eta_i \ph{\bx_i}^\top (\ph{\bx_i} \theta_j -y_i) \\
& \overset{(b)}{=} \theta_j - \eta_i \ph{\bx_i}^\top P_i (\ph{\bx_i} \theta_j -y_i),
\end{align*}
%\nbr{what is $g_i$ here?}
where $(a)$ used $I-(I-X)^s=\sum_{\ell=0}^s(I-X)^\ell X$, and $(b)$ used $(I-X^\top X)^\ell X^\top = X^\top(I- X X^\top)^\ell$ and the definition of $P_i$.  
Then,
\begin{align*}
\theta_{ij,t}
& = [\calG_i^s(\theta_{t-1})]_j
= \lambda_{ij,t} g_i^s(\theta_{j,t-1})+ (1-\lambda_{ij,t}) \theta_{j,t-1}\\
& = \theta_{j,t-1} - \eta_i \lambda_{ij,t}\ph{\bx_i}^\top P_i (\ph{\bx_i} \theta_{j,t-1} -y_i). 
\end{align*}
We obtain the global iteration:
\[
\theta_{j,t}
= \sum_{i=1}^M \frac{n_i}{N}\theta_{ij,t}
= \theta_{j,t-1} - \frac{\eta}{N} \sum_{i=1}^M \lambda_{ij,t}\ph{\bx_i}^\top P_i (\ph{\bx_i} \theta_{j,t-1} -y_i),
\]
which is \eqref{eq:global-s} using matrix notations.

The proof for FedProx is similar. 
The first order condition for the local proximal optimization is 
\[
\eta_i \lambda_{ij,t}\ph{\bx_i}^\top (\ph{\bx_i}\theta_{ij,t}-y_i) + (\theta_{ij,t} - \theta_{j,t-1})=0,\quad j\in[k].
\]
Therefore, if $j\ne z_{i,t}$, then $\theta_{ij,t} = \theta_{j,t-1}$; if $j= z_{i,t}$, then
\begin{align*}
\theta_{ij,t} 
& = (I+\eta_i \ph{\bx_i}^\top \ph{\bx_i})^{-1}(\theta_{j,t-1}+\eta_i \ph{\bx_i}^\top y_i)\\
& \overset{(a)}{=} \theta_{j,t-1} - \eta_i (I+\eta_i \ph{\bx_i}^\top \ph{\bx_i})^{-1} \ph{\bx_i}^\top (\ph{\bx_i}\theta_{j,t-1}-y_i)\\
& \overset{(b)}{=} \theta_{j,t-1} - \eta_i \ph{\bx_i}^\top P_i (\ph{\bx_i}\theta_{j,t-1}-y_i),
\end{align*}
where $(a)$ used $I-(I+X)^{-1}=(I+X)^{-1}X$, and $(b)$ used $(I+X^\top X)^{-1}X^\top = X^\top(I+ X X^\top)^{-1}$ and the definition of $P_i$.
The remaining steps are the same as those in FedAvg.
\end{proof}

\subsection{Convergence analysis of \prettyref{phase:iterative-FL}}
\label{app:proof-iterative}
% Fix $j\in[k]$, 
We analyze the three terms on the right-hand side of \eqref{eq:error-iteration} separately.
The first term of \eqref{eq:error-iteration} is the main term due to the decreasing of estimation error, and the last term is the stochastic variation due to the observation noise $\zeta$.
We have the following lemmas on the eigenvalues of $K_j$ and the concentration of the observation noise.

\begin{lemma}
\label{lmm:eigen-Sigma}
There exists constants $c$ and $C$ such that, with probability $1-2ke^{-d}$, 
\[
c\alpha\frac{s N_j}{\kappa N}
\le \lambda_{\min}(K_j)
\le \lambda_{\max}(K_j)
\le C\beta\frac{s N_j}{N},
\quad \forall j\in[k].
\]
\end{lemma}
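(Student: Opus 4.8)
The plan is to sandwich $K_j=\frac1N\sum_{i\in I_j}\ph{\bx_i}^\top P_i\ph{\bx_i}$ between constant multiples of the \emph{unweighted} empirical second moment $\frac1N\ph{\bx_{I_j}}^\top\ph{\bx_{I_j}}$, and then apply standard sub-Gaussian covariance concentration, much as in \cite{su2021achieving}. First I would establish the deterministic per-client inequality
\[
\tfrac{s}{\kappa}\,\ph{\bx_i}^\top\ph{\bx_i}\;\preceq\;\ph{\bx_i}^\top P_i\ph{\bx_i}\;\preceq\;s\,\ph{\bx_i}^\top\ph{\bx_i},\qquad i\in I_j .
\]
To prove this, write the SVD $\ph{\bx_i}=\sum_\ell s_\ell u_\ell v_\ell^\top$; since $P_i$ is a function of $\ph{\bx_i}\ph{\bx_i}^\top$, the matrix $\ph{\bx_i}^\top P_i\ph{\bx_i}$ is diagonal in the $\{v_\ell\}$ basis with $\ell$-th eigenvalue $s_\ell^2\,\psi(x_\ell)$, where $x_\ell\triangleq \eta s_\ell^2/n_i\in[0,\gamma]$, and $\psi(x)=\tfrac{1-(1-x)^s}{x}$ for FedAvg (with $\psi(0)\triangleq s$) while $\psi(x)=\tfrac{1}{1+x}$ for FedProx (where $s=1$ and $\kappa=1+\gamma$). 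The key calculation is that $\psi$ is nonincreasing on $(0,1]$ — for FedAvg this follows from concavity of $x\mapsto 1-(1-x)^s$ — so on $[0,\gamma]$ one has $\psi(\gamma)\le\psi(x)\le\psi(0^+)$, with $\psi(0^+)=s$ and $\psi(\gamma)=\tfrac{s}{\kappa}$ by the definition of $\kappa$; the standing choice $\gamma<1$ guarantees $P_i\succeq0$. Summing the per-client bound over $i\in I_j$ and dividing by $N$ yields $\tfrac{s}{\kappa N}\ph{\bx_{I_j}}^\top\ph{\bx_{I_j}}\preceq K_j\preceq \tfrac{s}{N}\ph{\bx_{I_j}}^\top\ph{\bx_{I_j}}$.

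The second step is to control $\tfrac{1}{N_j}\ph{\bx_{I_j}}^\top\ph{\bx_{I_j}}=\tfrac{1}{N_j}\sum_{i\in I_j}\sum_{\ell=1}^{n_i}\ph{x_{i\ell}}\ph{x_{i\ell}}^\top$, a sum of $N_j$ independent sub-Gaussian rank-one matrices with common covariance $\Sigma_j$ (using $z_i=j$ for all $i\in I_j$) satisfying $\alpha I\preceq\Sigma_j\preceq\beta I$. I would invoke the standard covariance-estimation bound for sub-Gaussian vectors: for a suitable absolute constant, if $N_j\gtrsim d$ then $\Norm{\tfrac{1}{N_j}\ph{\bx_{I_j}}^\top\ph{\bx_{I_j}}-\Sigma_j}\le\alpha/2$ with probability at least $1-2e^{-d}$, whence $\tfrac{\alpha}{2}I\preceq\tfrac{1}{N_j}\ph{\bx_{I_j}}^\top\ph{\bx_{I_j}}\preceq 2\beta I$. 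Plugging this into the sandwich from the first step gives $\tfrac{\alpha}{2}\cdot\tfrac{sN_j}{\kappa N}\,I\preceq K_j\preceq 2\beta\cdot\tfrac{sN_j}{N}\,I$, i.e., the claim with $c=1/2$ and $C=2$; a union bound over $j\in[k]$ gives the stated failure probability $2ke^{-d}$.

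I do not expect a serious obstacle. The one point needing care is that non-anchor clients may carry $n_i=O(1)$ points, so the individual $\ph{\bx_i}\ph{\bx_i}^\top/n_i$ can be rank-deficient or badly conditioned; this is harmless because the per-client inequality is proved eigendirection-by-eigendirection and uses only $x_\ell\le\gamma<1$ (the kernel of $\ph{\bx_i}$ contributes $0$ to both sides), and the well-conditioning of $K_j$ is recovered purely from pooling $N_j\gtrsim d$ samples \emph{within} cluster $j$, where the feature covariance is the single matrix $\Sigma_j$. The mildly technical ingredient is the monotonicity of $x\mapsto(1-(1-x)^s)/x$; beyond that the argument is routine bookkeeping.
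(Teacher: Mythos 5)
Your proposal is correct and follows essentially the same route as the paper: the paper likewise writes $K_j=\frac{1}{N}\ph{\bx_{I_j}}^\top P_{I_j}\ph{\bx_{I_j}}$, uses $s/\kappa\le\lambda_{\min}(P_{I_j})\le\lambda_{\max}(P_{I_j})\le s$ (cited from \cite[Lemma 3]{su2021achieving}, which you re-derive via the monotonicity of $\psi$), and then applies the sub-Gaussian singular-value concentration of \cite[Theorem 4.6.1]{Vershynin2018} with $N_j\gtrsim d$ and a union bound over $j\in[k]$. Your phrasing of the concentration step as covariance estimation rather than singular-value deviations is an immaterial difference.
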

\begin{proof}
Since $\ph{\bx_{I_j}}$ of size $N_j\times d$ consists of independent and sub-Gaussian rows, by a covering argument \cite[Theorem 4.6.1]{Vershynin2018}, with probability $1-2e^{-d}$,  
\[
\alpha N_j - C(\sqrt{dN_j}\vee d)
\le \sigma_{\min}^2(\ph{\bx_{I_j}})
\le \sigma_{\max}^2(\ph{\bx_{I_j}})
\le \beta N_j + C(\sqrt{dN_j}\vee d),
\]
where $\sigma_{\max}$ and $\sigma_{\min}$ denote the largest and smallest singular values, respectively, and $C$ is an absolute constant.
By definition, $K_j = \frac{1}{N}\ph{\bx_{I_j}}^\top P_{I_j} \ph{\bx_{I_j}}$, where $P_{I_j}$ is a symmetric matrix. 
It is shown in \cite[Lemma 3]{su2021achieving} that 
\[
s/\kappa
\le \lambda_{\min}(P_{I_j})
\le \lambda_{\max}(P_{I_j})
\le s. 
\]
The conclusion follows from the condition $N_j\gtrsim d$ and a union bound over $j\in[k]$. 
\end{proof}

\begin{lemma}
\label{lmm:noise}
Given the input features $\ph{\bx}$,
there exists a constant $C$ such that with probability at least 
$1-k\exp(-d)$,
\[
\Norm{B \Lambda_j \zeta}^2\le  C \frac{\sigma^2 s d }{N}\Norm{K_j},  \quad  \forall~j\in[k].
\]
% \py{How does it depend on $N_j=\sum_{i\in I_j}n_i$?}
% \py{Extend to heavier tails assuming bounded moments}
\end{lemma}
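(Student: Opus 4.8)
The plan is to condition on the features $\ph{\bx}$ and treat $B\Lambda_j\zeta$ as a linear functional of the independent, mean-zero, sub-Gaussian noise vector $\zeta$. First I would write $B\Lambda_j\zeta = \frac{1}{N}\ph{\bx_{I_j}}^\top P_{I_j}\zeta_{I_j}$, using that $\Lambda_j$ selects exactly the blocks $i \in I_j$ and that $B = \frac{1}{N}\ph{\bx}^\top P$. Since $\zeta_{I_j}$ has independent coordinates with $\Expect[\zeta_{ij}^2]\le\sigma^2$ and the matrix $A_j \triangleq \frac{1}{N}\ph{\bx_{I_j}}^\top P_{I_j}$ is fixed after conditioning, the vector $A_j\zeta_{I_j}\in\reals^d$ is sub-Gaussian with covariance dominated by $\sigma^2 A_j A_j^\top$. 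A standard sub-Gaussian quadratic-form (Hanson–Wright) bound then gives, with probability at least $1-e^{-d}$,
\[
\Norm{A_j\zeta_{I_j}}^2 \le C\sigma^2\pth{\tr(A_jA_j^\top) + \opnorm{A_jA_j^\top}\, d} \le C'\sigma^2\, d\,\opnorm{A_jA_j^\top},
\]
where the last step uses $\tr(A_jA_j^\top)\le d\,\opnorm{A_jA_j^\top}$ since $A_j$ has rank at most $d$.

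Next I would relate $\opnorm{A_jA_j^\top} = \opnorm{A_j}^2$ to the quantities already controlled. We have $\opnorm{A_j} = \frac{1}{N}\opnorm{\ph{\bx_{I_j}}^\top P_{I_j}} \le \frac{1}{N}\sigma_{\max}(\ph{\bx_{I_j}})\,\opnorm{P_{I_j}}$. By \prettyref{lmm:eigen-Sigma} (more precisely its proof), on the same high-probability event $\opnorm{P_{I_j}}\le s$ and $\sigma_{\max}^2(\ph{\bx_{I_j}})\lesssim \beta N_j$ under $N_j\gtrsim d$; hence $\opnorm{A_j}^2 \lesssim \frac{s^2\beta N_j}{N^2} \asymp \frac{s}{N}\opnorm{K_j}$, using $\opnorm{K_j}\asymp \beta s N_j/N$ from \prettyref{lmm:eigen-Sigma}. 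Substituting into the Hanson–Wright bound yields $\Norm{B\Lambda_j\zeta}^2 \lesssim \frac{\sigma^2 s d}{N}\Norm{K_j}$, which is the claimed inequality. A union bound over $j\in[k]$ and absorbing the feature-event probability from \prettyref{lmm:eigen-Sigma} gives the stated failure probability $k\exp(-d)$ (after rescaling constants).

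The only mildly delicate point is the dependence structure: $A_j$ depends on $\ph{\bx_{I_j}}$ while $\zeta_{I_j}$ is independent of all features, so conditioning on $\ph{\bx}$ makes $A_j$ deterministic and the noise bound clean; one must then be careful that the feature event from \prettyref{lmm:eigen-Sigma} is invoked first (it does not involve $\zeta$), and the noise concentration holds conditionally on that event. I expect this bookkeeping — keeping the two randomness sources cleanly separated and threading the high-probability events — to be the main (though routine) obstacle; the inequalities themselves are off-the-shelf sub-Gaussian concentration plus the already-established spectral bounds on $\ph{\bx_{I_j}}$ and $P_{I_j}$.
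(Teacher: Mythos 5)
Your overall strategy (condition on the features, apply a Hanson--Wright/sub-Gaussian quadratic-form bound to $\zeta\mapsto \Norm{A_j\zeta_{I_j}}^2$ with $A_j=\frac1N\ph{\bx_{I_j}}^\top P_{I_j}$, then union bound over $j$) is exactly the paper's concentration machinery. The gap is in how you control $\opnorm{A_jA_j^\top}$. You bound $\opnorm{A_j}\le \frac{1}{N}\sigma_{\max}(\ph{\bx_{I_j}})\opnorm{P_{I_j}}$ and then claim $\frac{s^2\beta N_j}{N^2}\asymp \frac{s}{N}\opnorm{K_j}$ "using $\opnorm{K_j}\asymp \beta sN_j/N$ from \prettyref{lmm:eigen-Sigma}." But \prettyref{lmm:eigen-Sigma} only gives the two-sided bound $c\alpha\,\frac{sN_j}{\kappa N}\le \lambda_{\min}(K_j)\le\lambda_{\max}(K_j)\le C\beta\,\frac{sN_j}{N}$, so converting $\frac{s^2 N_j}{N^2}$ into a multiple of $\opnorm{K_j}$ requires the \emph{lower} bound and yields only $\opnorm{A_j}^2\lesssim \frac{s\kappa}{N}\opnorm{K_j}$. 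The factor $\kappa$ is not a universal constant (for FedAvg it can be as large as $s$), so this route proves the lemma only with $C\frac{\sigma^2 s d\kappa}{N}\Norm{K_j}$ on the right-hand side, which is strictly weaker than the stated bound and would leak a $\sqrt{\kappa}$ into the noise term of \prettyref{thm:error-iteration}. A secondary issue: your bound also requires the high-probability feature event of \prettyref{lmm:eigen-Sigma} (and $N_j\gtrsim d$), whereas the lemma as stated is purely conditional on an arbitrary realization of $\ph{\bx}$, with randomness only over $\zeta$.

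The fix is the paper's one-line deterministic matrix inequality: since $0\preceq P_{I_j}\preceq sI$, one has $P_{I_j}^2\preceq sP_{I_j}$, hence
\begin{equation*}
A_jA_j^\top \;=\; \frac{1}{N^2}\,\ph{\bx_{I_j}}^\top P_{I_j}^2\,\ph{\bx_{I_j}}
\;\preceq\; \frac{s}{N^2}\,\ph{\bx_{I_j}}^\top P_{I_j}\,\ph{\bx_{I_j}} \;=\; \frac{s}{N}K_j ,
\end{equation*}
which immediately gives $\opnorm{A_jA_j^\top}\le \frac{s}{N}\Norm{K_j}$, $\Tr(A_jA_j^\top)\le \frac{sd}{N}\Norm{K_j}$, and $\fnorm{A_jA_j^\top}\le \frac{s\sqrt d}{N}\Norm{K_j}$, with no appeal to random-matrix bounds, no feature event, and no $\kappa$ loss. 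Plugging these into your Hanson--Wright step recovers the stated bound verbatim. So the concentration part of your proposal is fine; replace the spectral bookkeeping through \prettyref{lmm:eigen-Sigma} by the PSD comparison $P_{I_j}^2\preceq sP_{I_j}$ and the argument closes.
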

\begin{proof}
Note that 
$$
\Norm{B \Lambda_j \zeta}^2 = \zeta^\top \Lambda_j B^\top B \Lambda_j \zeta
=\Iprod{\Lambda_j B^\top B\Lambda_j}{\zeta \zeta^\top}.
$$
Since $\expect{\zeta \zeta^\top} \preceq \sigma^2 I$, it follows that 
$$
\expect{ \Norm{B \Lambda_j \zeta}^2 } =
    \expect{\Iprod{\Lambda_j B^\top B\Lambda_j}{\zeta \zeta^\top}}
    \le \sigma^2 \Tr\left(\Lambda_j B^\top B\Lambda_j\right)
    = \sigma^2 \Tr\left(B \Lambda_j^2 B^\top \right).
$$
Recall that
\begin{align}
\label{eq:BBT-psd-ub}
B\Lambda_j^2B^\top 
& = \frac{1}{N^2} \ph{\bx_{I_j}}^\top P_{I_j}^2 \ph{\bx_{I_j}}
\overset{(a)}{\preceq} \frac{s}{N^2} \ph{\bx_{I_j}}^\top P_{I_j} \ph{\bx_{I_j}}
=\frac{s}{N}K_j,
\end{align}
where $(a)$ holds because $\Norm{P_{I_j}} \le s$.
Therefore, 
$$
\expect{ \Norm{B \Lambda_j \zeta}^2 } =\expect{\Iprod{\Lambda_j B^\top B\Lambda_j}{\zeta \zeta^\top}} 
\le \frac{\sigma^2 sd}{N}\norm{K_j}.
$$
Next, using Hanson-Wright's inequality~\cite{rudelson2013hanson}, we get 
\begin{align*}
&\phantom{{}={}}\prob{ \Iprod{\Lambda_j B^\top B\Lambda_j}{\zeta \zeta^\top} - 
\expect{\Iprod{\Lambda_j B^\top B\Lambda_j}{\zeta \zeta^\top}} \ge \delta} \nonumber \\
&\le \exp \left( -c_1 \min \left\{ \frac{\delta}{\sigma^2\Norm{\Lambda_j B^\top B\Lambda_j}}, \frac{\delta^2}{\sigma^4 \fnorm{\Lambda_j B^\top B\Lambda_j}^2} \right\} \right),  
\end{align*}
where $c_1>0$ is a universal constant. 
Note that 
\begin{align*}
\Norm{\Lambda_j B^\top B\Lambda_j}
& = \Norm{B \Lambda_j^2 B^\top}
\le \frac{s}{N}\Norm{K_j},\\
\fnorm{\Lambda_j B^\top B\Lambda_j}
& = \fnorm{B \Lambda_j^2 B^\top}
\le s \fnorm{K_j}
\le \frac{s \sqrt{d}}{N} \norm{K_j}.
\end{align*}
Therefore, 
by choosing $\delta = C \frac{\sigma^2 sd}{N}  \Norm{K_j} $ for a sufficiently large constant $C$, 
we get that with probability at least $1-\exp(-d)$,
$$
\Iprod{\Lambda_j B^\top B\Lambda_j}{\zeta \zeta^\top}
\le \expect{\Iprod{\Lambda_j B^\top B\Lambda_j}{\zeta \zeta^\top}} + \delta
\le \left(C+1\right)\sigma^2 \frac{s d }{N}\Norm{K_j}.
$$
The conclusion follows from a union bound over all $j\in[k]$.
\end{proof}

Combining Lemmas~\ref{lmm:clustering-error}, \ref{lmm:eigen-Sigma}, and \ref{lmm:noise}, next we prove \prettyref{thm:error-iteration}.
\begin{proof}[Proof of \prettyref{thm:error-iteration}]
We prove the result conditioning on the high probability events in  Lemmas~\ref{lmm:clustering-error}, \ref{lmm:eigen-Sigma}, and \ref{lmm:noise} that happen with probability at least $1-Cke^{-d}$.
We obtain from \prettyref{lmm:eigen-Sigma} that
\[
\Norm{I-\eta K_j}\le 1- C \eta s \rho/\kappa .
\]
Combining Lemmas \ref{lmm:eigen-Sigma} and \ref{lmm:noise} yields
\[
\Norm{B \Lambda_j \zeta}
\lesssim s \sigma \sqrt{\frac{d}{N}}. 
\]
Plugging the above upper bounds and \prettyref{lmm:clustering-error} into \eqref{eq:error-iteration}, we get
\[
\Norm{\theta_{j,t}-\theta_j^*}
\le \pth{1- C\eta s \pth{ \frac{\rho}{\kappa} - \nu\log\frac{e}{\nu}  } } d(\theta_{t-1},\theta^*) 
+ C\eta s \sigma \pth{\sqrt{\frac{d}{N}} + \nu\log\frac{e}{\nu}},
\quad \forall j\in[k].
\]
Since $\nu\log(e/\nu)\lesssim \rho/\kappa$ and $\nu\gtrsim \sqrt{d/N}$, we conclude \eqref{eq:fed_error_contraction}.

Let $\hat\theta_{i,t}=\theta_{z_{i,t},t}$  be  client $i$'s estimate of its own model parameter.
If client $i$ is clustered correctly such that $z_{i,t}=z_i$, where the success probability $\pprob{z_{i,t}=z_i}$ is shown in \prettyref{lmm:ub-prob-error} (which can be found in Appendix \ref{lmm:clustering-error}), it follows from \eqref{eq:fed_error_contraction} that, for $t\ge T+1$,
\[
\Norm{\hat\theta_{i,t}-\theta_{z_i}^*}
\le d(\theta_t,\theta^*)
\le \pth{1-C_1 s\eta \rho/\kappa}^{t-T} d(\theta_{T},\theta^*)
+ \frac{C_2}{C_1} \frac{\sigma\kappa}{\rho}   \nu\log\frac{e}{\nu}.
\]
The proof is completed.
\end{proof}

\subsubsection{Proof of \prettyref{lmm:clustering-error}}
% Finally, we prove \prettyref{lmm:clustering-error} next.
This subsection is devoted to the proof of \prettyref{lmm:clustering-error} using the following road map: 
\[
d(\theta_t,\theta^*) 
\downarrow 
\implies
\sum_{i:i\in S_{j,t}} n_i 
\downarrow
\implies
\Norm{\ph{\bx_{S_{j,t}}}},\Norm{{\zeta_{S_{j,t}}}} 
\downarrow
\implies 
\Norm{B\calE_{j,t}(\ph{\bx}\theta_{j,t-1}-y)} 
\downarrow. 
\]
Specifically, a small estimation error $d(\theta_t,\theta^*)$ implies an upper bound on the total number of incorrectly clustered data points $\sum_{i\in S_{j,t}} n_i $; then we upper bound $\Norm{\ph{\bx_{S_j^t}}}$ and $\Norm{{\zeta_{S_j^t}}}$ using sub-Gaussian concentration and the union bound; finally we conclude the result from \eqref{eq:error-first-step}.

We first upper bound $\sum_{i\in S_{j,t}} n_i $. 
Using \eqref{eq:hat-z}, the set $S_{j,t}=I_j \ominus I_{j,t}$ is equivalently the union of
\begin{align*}
I_j-I_{j,t}
&=\sth{i\in I_j: \Norm{ y_i - \ph{\bx_i} \theta_{j,t-1}} \ge \min_{\ell\ne j} \Norm{ y_i - \ph{\bx_i} \theta_{\ell,t-1}} },
\\
I_{j,t}-I_j
&=\sth{i\not\in I_j: \Norm{ y_i - \ph{\bx_i} \theta_{j,t-1}} \le \min_{\ell\ne j} \Norm{ y_i - \ph{\bx_i} \theta_{\ell, t-1}} }.
\end{align*}
Therefore, $S_{j,t}=S_j(\theta_{t-1})$, where $S_j$ is defined in \eqref{eq:indc-error}.
The next lemma upper bounds the VC dimensions of the binary function classes specified in \eqref{eq:def-calF}. 

\begin{lemma}
\label{lmm:VC}
For $k\ge 2$, the VC dimensions of $\calF_j^{\RN{1}}$ and $\calF_j^{\RN{2}}$ are at most $O(dk\log k)$.
\end{lemma}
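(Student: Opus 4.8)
The plan is to bound, for an arbitrary fixed set of $N$ points, the number of distinct labelings that $\calF^{\RN{1}}_j$ and $\calF^{\RN{2}}_j$ can realize, and then read off the largest shatterable $N$. First I would record the structural fact that drives everything: for any data point $(\bx_i,y_i)$,
\[
P_{j\ell}[\bx_i,y_i](\theta)
=\Norm{y_i-\ph{\bx_i}\theta_j}^2-\Norm{y_i-\ph{\bx_i}\theta_{\ell}}^2
\]
is a polynomial of degree at most $2$ in the parameter vector $\theta=(\theta_1,\dots,\theta_k)\in\reals^{dk}$ (the $\Norm{y_i}^2$ terms cancel, and only the $2d$ coordinates of $\theta_j$ and $\theta_\ell$ appear). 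Moreover, both $f^{\RN{1}}_{j,\theta}(\bx_i,y_i)$ and $f^{\RN{2}}_{j,\theta}(\bx_i,y_i)$ are a fixed Boolean function --- an OR, respectively an AND --- of the $k-1$ indicators $\indc{P_{j\ell}[\bx_i,y_i](\theta)\ge 0}$, $\ell\ne j$ (since $P_{\ell j}=-P_{j\ell}$, the class $\calF^{\RN{2}}_j$ is built from the same polynomials up to sign). Hence, over all $N$ points, any labeling produced by $\calF^{\RN{1}}_j$ or $\calF^{\RN{2}}_j$ is a deterministic function of the sign pattern of the $N(k-1)$ polynomials $\sth{P_{j\ell}[\bx_i,y_i](\cdot):i\in[N],\,\ell\in[k]\setminus\sth{j}}$, each of degree at most $2$, in the $dk$ variables $\theta$.

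Next I would invoke the Milnor--Thom bound (\cite[Theorem 6.2.1]{Matousek2013}): for $m$ real polynomials of degree at most $D$ in $n$ variables with $m\ge n$, the number of sign patterns they realize is at most $(O(Dm/n))^{n}$. We may assume $N\ge dk$; otherwise the VC dimension, being $<dk$, already satisfies the claim for $k\ge 2$. Then $m\triangleq N(k-1)\ge N\ge dk\triangleq n$, so the hypothesis is met, and with $D=2$ we conclude that the number of sign patterns --- and therefore the number of distinct labelings of the $N$ points realizable by $\calF^{\RN{1}}_j$ (and likewise by $\calF^{\RN{2}}_j$) --- is at most $\pth{C\,N(k-1)/(dk)}^{dk}\le\pth{CN/d}^{dk}$ for an absolute constant $C$.

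Then I would finish via the standard sign-pattern counting bound on VC dimension: if the $N$ points are shattered, then $2^{N}\le\pth{CN/d}^{dk}$, i.e. $N\le dk\log_2(CN/d)$. Substituting $u=N/(dk)$ turns this into $u\le\log_2(Ck)+\log_2 u$, which forces $u=O(\log k)$ and hence $N=O(dk\log k)$; this bounds the VC dimensions of both $\calF^{\RN{1}}_j$ and $\calF^{\RN{2}}_j$, as claimed.

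The main obstacle --- and the whole point of the lemma --- is avoiding the crude linearization bound. Treating each degree-$2$ polynomial $P_{j\ell}$ as a hyperplane in the $\Theta(d^2)$-dimensional space of its monomials, and applying the classical VC bound for half-spaces, would only give a bound of order $d^2$, which is too lossy to make \prettyref{lmm:clustering-error} useful. The key is that the polynomials genuinely depend on only $dk$ parameters, and it is precisely Milnor--Thom --- counting sign cells in the low-dimensional parameter space rather than in the high-dimensional lifted monomial space --- that exploits this and brings the exponent down from $d^2$ to $dk\log k$. Everything else (the reduction to counting sign patterns, and the elementary estimate solving $N\le dk\log_2(CN/d)$) is routine bookkeeping.
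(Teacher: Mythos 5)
Your proposal is correct and follows essentially the same route as the paper: reduce both classes to deterministic Boolean functions of the signs of the $(dk)$-variate quadratics $P_{j\ell}$, bound the number of sign patterns via the Milnor--Thom theorem, and solve $2^N\le (CN/d)^{dk}$ to get $N=O(dk\log k)$. Your explicit handling of the $m\ge n$ hypothesis and of $\calF_j^{\RN{2}}$ via $P_{\ell j}=-P_{j\ell}$ are minor refinements of details the paper leaves implicit.
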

\begin{proof}
We focus on the proof for $\calF_{j}^{\RN 1}$ for a fixed $j\in[k]$, and the proof for $\calF_{j}^{\RN 2}$ is similar. 
We count the number of faces in the arrangement of geometric objects, which is also known as the number of sign patterns. 
Specifically, here we define the sign patterns of binary functions $g_1(\theta),\dots,g_m(\theta)$ as the set
\[
\sth{(g_1(\theta),\dots,g_m(\theta)):\theta\in\reals^{dk}}.
\]
Suppose $\calF_{j}^{\RN 1}$ shatters $m$ points denoted by $(\bx_1,y_1),\dots, (\bx_m,y_m)$.
Define binary functions 
\[
q_{i,\ell}(\theta)\triangleq \indc{P_{\ell,j}[\bx_i,y_i](\theta)\ge 0},
\quad 
g_i(\theta)\triangleq \max_{\ell\ne j} q_{i,\ell}(\theta).
\]
It is necessary that the number of sign patterns of $g_1(\theta),\dots,g_m(\theta)$ is $2^m$. 
Note that every $P_{\ell,j}[\bx_i,y_i]$ is a $(dk)$-variate quadratic function. 
By the Milnor-Thom theorem (see, e.g., \cite[Theorem 6.2.1]{Matousek2013}), the number of sign patterns of $m(k-1)$ binary functions $q_{1,\ell},\dots,q_{m,\ell}$ for $\ell\ne j$ is at most $(\frac{100m(k-1)}{dk})^{dk}$.
Since each $g_i$ is the maximum of $q_{i,\ell}$ over $\ell\ne j$, the number of sign patterns of $g_1,\dots,g_m$ is upper bounded by $(\frac{100m(k-1)}{dk})^{dk}$.
Consequently, we obtain $2^m\le (\frac{100m(k-1)}{dk})^{dk}$, and hence $m\lesssim dk\log k$.
% Note that every $P_{\ell,j}$ is a $(2d)$-variate quadratic function. 
% By the Milnor-Thom theorem (see, e.g., \cite[Theorem 6.2.1]{Matousek2013}), the number of sign patterns of $q_{1,\ell},\dots,q_{m,\ell}$ is at most $(\frac{50m}{d})^{2d}$. 
% % VC dimension at most $O(d)$ ...
% Consequently, since each $g_i$ is the maximum of $k-1$ functions $q_{i,\ell}$ for $\ell\ne j$, the number of sign patterns of $g_1,\dots,g_m$ is at most $(\frac{50m}{d})^{2d(k-1)}$.
% We obtain that $m\lesssim d k\log k$.
\end{proof}

Next we show the uniform deviation of the incorrectly clustered data points. 
Due to the quantity skew, we consider a weighted empirical process $G_j(\theta)=\sum_{i=1}^M n_i\indc{i\in S_j(\theta)}$. 
Since the local data $(\bx_i,y_i)$ on different clients are independent, for a fixed $\theta$, the events $\{i\in S_j(\theta)\}$ as functions of $(\bx_i,y_i)$ are mutually independent. % $\{i\in S_j(\Theta)\}\in\sigma(\bx_i,y_i)$
Using the binary function classes in \eqref{eq:def-calF}, we have
\begin{align}
&\phantom{{}={}}
% \expect{\sup_{\theta}\abth{\sum_{i=1}^M n_i\indc{i\in S_j(\theta)}-\sum_{i=1}^M n_i\prob{i\in S_j(\theta)}}}\nonumber\\
\expect{\sup_{\theta}|G_j(\theta)-\Expect [G_j(\theta)] | }\nonumber\\
&\le \expect{\sup_{f\in \calF_j^{\RN{1}}}\abth{\sum_{i\in I_j}n_i (f(\bx_i,y_i)-\Expect[f(\bx_i,y_i)]) }  }
+\expect{\sup_{f\in \calF_j^{\RN{2}}}\abth{\sum_{i\not\in I_j}n_i (f(\bx_i,y_i)-\Expect[f(\bx_i,y_i)]) }  }\nonumber\\
& \lesssim \sqrt{dk\log k \sum_{i\in\calI_j}n_i^2  } 
+ \sqrt{dk\log k \sum_{i\not\in\calI_j}n_i^2  }\nonumber\\
& \le \sqrt{2dk\log k \sum_{i=1}^M n_i^2},\label{eq:expect-sup}
\end{align}
where the second inequality follows from the uniform deviation of weighted empirical processes in \prettyref{lmm:weighted-empirical} and the upper bound of VC dimensions in \prettyref{lmm:VC}.
Finally, we use the McDiarmid's inequality to establish a high-probability tail bound.
Note that we can write
$$
\sup_{\theta} \left| G_j(\theta)-\Expect [G_j(\theta)]  \right|
\triangleq h(Z_1, \ldots, Z_M)
$$
as a function $h$ of $Z_i=(\bx_i,y_i)$ with bounded differences:
for any $i, z_i, z_i',$
$$
\left| h(z_1, \ldots, z_i, \ldots, z_M)
-h(z_1, \ldots, z'_i, \ldots, z_M)
\right| \le n_i.
$$
By McDiarmid's inequality, we have
\begin{equation}
\label{eq:tail-McD}    
\prob{h(Z_1, \ldots, Z_M)
-\expect{h(Z_1, \ldots, Z_M)} 
\ge t} \le \exp \left( - \frac{2t^2}{\sum_{i=1}^M n_i^2} \right).
\end{equation}
Therefore, combining \eqref{eq:expect-sup} and \eqref{eq:tail-McD}, and by a union bound, with probablity at least $1-k^{-dk}$,
% (similar to \prettyref{lmm:unif_S})
\begin{equation}
\label{eq:ub-uniform-deviation}
% &\phantom{{}={}}\sup_{\theta}\abth{\sum_{i=1}^M n_i\indc{i\in S_j(\theta)}-\sum_{i=1}^M n_i\prob{i\in S_j(\theta)}}
\sup_{\theta}|G_j(\theta)-\Expect [G_j(\theta)] |
% \\ & 
\lesssim \sqrt{dk\log k \sum_{i=1}^M n_i^2}
= N\sqrt{\frac{dk\log k}{M}(\chi^2(n)+1)},\quad \forall j\in[k].
\end{equation}

\begin{lemma}
\label{lmm:ub-prob-error}
Suppose 
$\epsilon \le \frac{\sqrt{\alpha/\beta}}{3}\Delta$. 
Then,
\[
\sup_{\theta:d(\theta,\theta^*)\le \epsilon }
\Prob[i\in S_j(\theta)] \le 
4k\exp\pth{-cn_i\alpha^2\pth{1\wedge \frac{\Delta^2}{\sigma^2}}^2},\quad \forall j\in[k],
\]
where $c$ is an absolute constant.
\end{lemma}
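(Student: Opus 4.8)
The plan is to reduce the event $\{i\in S_j(\theta)\}$ to a union of pairwise comparisons between the true regression model of client $i$ and a competing one, and then to control each comparison probability by Bernstein's inequality. Fix any $\theta$ with $d(\theta,\theta^*)\le\epsilon$, so that $\Norm{\theta_\ell-\theta_\ell^*}\le\epsilon$ for every $\ell\in[k]$, write $r_\ell=y_i-\ph{\bx_i}\theta_\ell$, and let $a=z_i$ be the true label of client $i$. By the decision rule \eqref{eq:hat-z} and the description of $S_j$ in \eqref{eq:indc-error}: if $i\in I_j$ (so $a=j$), then $i\in S_j(\theta)$ forces $\Norm{r_j}^2\ge\Norm{r_\ell}^2$ for some $\ell\ne j$; if $i\notin I_j$ (so $a\ne j$), then $i\in S_j(\theta)$ forces $\Norm{r_a}^2\ge\Norm{r_j}^2$ (take the index $a$ in the product). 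In either case a union bound over the at most $k-1$ competing clusters reduces the task to bounding $\prob{\Norm{y_i-\ph{\bx_i}\theta_a}^2\ge\Norm{y_i-\ph{\bx_i}\theta_b}^2}$ for a fixed $b\ne a$; this union bound is the source of the factor $k$ in the statement, and the constant $4$ will absorb the handful of sub-events produced in the concentration step.

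Next I would plug in the model $y_i=\ph{\bx_i}\theta_a^*+\zeta_i$ and set $u=\theta_a^*-\theta_a$ (so $\Norm{u}\le\epsilon$) and $v=\theta_a^*-\theta_b$ (so $\Norm{v}\ge\Delta-\epsilon$, since $\Norm{\theta_a^*-\theta_b^*}\ge\Delta$). Using $\Norm{r_a}^2-\Norm{r_b}^2=\iprod{r_a-r_b}{r_a+r_b}$ and $r_a-r_b=-\ph{\bx_i}(\theta_a-\theta_b)$, the difference becomes $-\sum_{\ell=1}^{n_i}X_\ell$ with $X_\ell\triangleq\iprod{\ph{x_{i\ell}}}{\theta_a-\theta_b}\pth{\iprod{\ph{x_{i\ell}}}{\theta_a-\theta_b}+2\iprod{\ph{x_{i\ell}}}{u}+2\zeta_{i\ell}}$, and the $X_\ell$ are independent across $\ell$. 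Taking expectations, the noise cross terms vanish because $\Expect\zeta_{i\ell}=0$, and a short computation collapses everything into the clean identity $\expect{X_\ell}=v^\top\Sigma_a v-u^\top\Sigma_a u$. Hence $\expect{X_\ell}\ge\alpha\Norm{v}^2-\beta\Norm{u}^2\ge\alpha(\Delta-\epsilon)^2-\beta\epsilon^2$, and the hypothesis $\epsilon\le\frac{\sqrt{\alpha/\beta}}{3}\Delta$ is precisely what forces this to be at least $\frac{\alpha}{3}\Delta^2$ (more generally $\gtrsim\alpha\Norm{\theta_a^*-\theta_b^*}^2$). Thus $\{i\in S_j(\theta)\}$ has been reduced to the large-deviation event $\{\sum_\ell X_\ell\le 0\}$ for a sum of independent sub-exponential variables with positive mean.

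To finish, I would apply Bernstein's inequality for sums of independent sub-exponential random variables to $\sum_\ell(X_\ell-\expect{X_\ell})\le -n_i\expect{X_1}$. Each $X_\ell$ is sub-exponential --- a squared sub-Gaussian plus two products of sub-Gaussians, one carrying the small perturbation $u$ and one carrying the noise $\zeta_{i\ell}$ --- so its sub-exponential parameter is of order $\Norm{\theta_a-\theta_b}\pth{\Norm{\theta_a-\theta_b}+\sigma}\asymp\Delta(\Delta+\sigma)$, using $\Norm{u}\le\epsilon\lesssim\Delta$, $\Norm{\theta_a^*-\theta_b^*}\le 2R\lesssim\Delta$, and $\alpha,\beta=\Theta(1)$. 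Since the target deviation $n_i\expect{X_1}\asymp n_i\alpha\Delta^2$ is $n_i$ times a constant multiple of the mean, Bernstein gives a tail of order $\exp\pth{-cn_i\min\{q^2,q\}}$ with $q\asymp\frac{\alpha\Delta^2}{\Delta(\Delta+\sigma)}=\frac{\alpha\Delta}{\Delta+\sigma}\asymp\alpha\pth{1\wedge\frac{\Delta}{\sigma}}$; since $q$ is at most a constant, the exponent is of order $n_iq^2$, so the tail is at most $\exp\pth{-cn_i\alpha^2\pth{1\wedge\frac{\Delta^2}{\sigma^2}}}\le\exp\pth{-cn_i\alpha^2\pth{1\wedge\frac{\Delta^2}{\sigma^2}}^2}$, and multiplying by the $k$-fold (and constant-fold) union bound yields the claimed $4k$ factor. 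The step I expect to be the main obstacle is getting the $\Delta$-versus-$\sigma$ interplay right in this last concentration bound: because a non-anchor client may carry only $n_i=O(1)$ data points there is no concentration to a sharp constant, so one must extract the correct non-asymptotic exponential rate from Bernstein and check it remains meaningful (and of the stated $1\wedge\Delta^2/\sigma^2$ form) even when $\sigma\gtrsim\Delta$; the clean mean identity $\expect{X_\ell}=v^\top\Sigma_a v-u^\top\Sigma_a u$ is what lets the sharp constant in the separation hypothesis carry through. Finally, since this is a pointwise tail bound that uses $\theta$ only through the uniform constraint $\Norm{\theta_\ell-\theta_\ell^*}\le\epsilon$, the supremum over $\theta$ in the $\epsilon$-ball requires no additional argument.
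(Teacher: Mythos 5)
Your proposal is correct, and it follows the same skeleton as the paper's proof: reduce $\{i\in S_j(\theta)\}$ by a union bound to at most $k-1$ pairwise comparisons between client $i$'s true model and a competitor, and derive exactly the same separation inequality $\lnorm{v}{\Sigma_{z_i}}^2-\lnorm{u}{\Sigma_{z_i}}^2\ge \alpha\Delta^2/3$ from the hypothesis $\epsilon\le\frac{\sqrt{\alpha/\beta}}{3}\Delta$ (this is \eqref{eq:test-seperation} in the paper). Where you genuinely deviate is the concentration step. The paper applies the Bernstein bound \eqref{eq:Bernstein} to each squared residual norm $\frac{1}{n_i}\Norm{\ph{\bx_i}u+\zeta_i}^2$ separately, around an interpolated threshold $m$, and combines two one-sided tails; its ratio is $t\gtrsim\alpha(1\wedge\Delta^2/\sigma^2)$, giving the exponent $n_i(t\wedge t^2)\gtrsim n_i\alpha^2(1\wedge\Delta^2/\sigma^2)^2$. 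You instead expand the difference of squared norms into a sum of independent sub-exponential variables $X_\ell$ (your mean identity $\expect{X_\ell}=v^\top\Sigma_{z_i}v-u^\top\Sigma_{z_i}u$ checks out, since $\theta_a-\theta_b+2u=u+v$ and the noise cross term has zero mean) and apply a single sub-exponential Bernstein inequality. Because the $\Norm{\zeta_i}^2$ contribution cancels in the difference, your sub-exponential scale is $\Delta(\Delta+\sigma)$ rather than $\Delta^2+\sigma^2$, so your exponent is of order $n_i\alpha^2(1\wedge\Delta^2/\sigma^2)$, which is slightly stronger than the stated bound in the low-SNR regime and implies it since $(1\wedge x)\ge(1\wedge x)^2$. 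The only extra bookkeeping you rely on is the standing assumption $R\lesssim\Delta$ to bound $\Norm{\theta_{z_i}-\theta_b}\lesssim\Delta$ in the $\psi_1$-norm of $X_\ell$, which is legitimate; and, as you note, the supremum over $\theta$ is harmless because the tail bound is pointwise and depends on $\theta$ only through the constraints $\Norm{\theta_\ell-\theta_\ell^*}\le\epsilon$.
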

\begin{proof}
For $i\in I_j$, it follows from \eqref{eq:indc-error} and the union bound that
\begin{align}
\prob{i\in S_j(\theta)}
&\le \sum_{\ell\ne j}\prob{\Norm{ y_i - \ph{\bx_i} \theta_j } \ge \Norm{ y_i - \ph{\bx_i} \theta_{\ell} } }\nonumber\\
&= \sum_{\ell\ne j}\prob{\Norm{ \ph{\bx_i} (\theta_j^*-\theta_j)+\zeta_i } \ge \Norm{ \ph{\bx_i} (\theta_j^*-\theta_\ell) +\zeta_i } }. \label{eq:prob-error-first-ub}
\end{align}
For any $u\in\reals^d$, the $n_i$-dimensional random vector $\ph{\bx_i}u+\zeta_i$ has independent and $(\Norm{u}^2+\sigma^2)$-sub-Gaussian coordinates. 
% with
% \[
% \lnorm{\ph{x_{ij}}^\top u+\zeta_{ij}}{\psi_2}
% \le
% \]
% a variance parameter $\Expect[\zeta_{i1}^2]+u^\top \Sigma_i u$, 
Applying Bernstein inequality yields that % Hanson-Wright ...
\begin{equation}
\label{eq:Bernstein}
\prob{\abth{ \frac{1}{n_i}\norm{\ph{\bx_i}u+\zeta_i}^2 - \pth{\Expect[\zeta_{i1}^2]+\lnorm{u}{\Sigma_i}^2} } \ge (\Norm{u}^2+\sigma^2)t}
\le 2\exp\pth{-cn_i (t\wedge t^2)},
\end{equation}
where $\Sigma_i=\Expect[\ph{x_{i1}}\ph{x_{i1}}^\top]$.
Let $u_1\triangleq \theta_j^*-\theta_j$ and $u_2\triangleq \theta_j^*-\theta_\ell$.
By assumptions that $\Norm{\theta_\ell-\theta_\ell^*}\le \epsilon$ for all $\ell\in[k]$ and $\Norm{\theta_j^*-\theta_\ell^*}\ge \Delta$ for $\ell\ne j$, 
we have $\Norm{u_1}\le \epsilon$, $\Norm{u_2}\ge \Delta-\epsilon$. 
Applying the condition $\epsilon\le \frac{1}{3\sqrt{\beta/\alpha}}\Delta \le \frac{1}{3}\Delta$, we get
\begin{equation}
\label{eq:test-seperation}
\lnorm{u_2}{\Sigma_i}^2 - \lnorm{u_1}{\Sigma_i}^2
\ge \alpha(\Delta-\epsilon)^2-\beta\epsilon^2
\ge \alpha \Delta^2/3.
\end{equation}
Therefore, let $m=\Expect[\zeta_{i1}^2]+(1-p)\lnorm{u_1}{\Sigma_i}^2 + p \lnorm{u_2}{\Sigma_i}^2$ with $p=\frac{\Norm{u_1}^2+\sigma^2}{\Norm{u_1}^2+\Norm{u_2}^2+2\sigma^2}$,
% $\frac{ \lnorm{u_1}{\Sigma_i}^2+\lnorm{u_2}{\Sigma_i}^2 }{2}$, 
and we obtain from \eqref{eq:Bernstein} that
\begin{align*}
& \phantom{{}={}}\prob{\Norm{ \ph{\bx_i} (\theta_j^*-\theta_j)+\zeta_i } \ge \Norm{ \ph{\bx_i} (\theta_j^*-\theta_\ell) +\zeta_i } }\\
& \le \prob{\frac{1}{n_i}\Norm{\ph{\bx_i}u_1+\zeta_i}^2\ge m } 
+\prob{\frac{1}{n_i}\Norm{\ph{\bx_i}u_2+\zeta_i}^2\le m } \\
& \le 4\exp\pth{-cn_i (t\wedge t^2)},
\end{align*}
where $t=\frac{\lnorm{u_2}{\Sigma_i}^2 - \lnorm{u_1}{\Sigma_i}^2}{\Norm{u_1}^2+\Norm{u_2}^2+2\sigma^2}\gtrsim \alpha(1\wedge \frac{\Delta^2}{\sigma^2})$ using the lower bound of seperation in \eqref{eq:test-seperation}.
We conclude the proof for $i\in I_j$ from \eqref{eq:prob-error-first-ub}.
Similarly, for $i\in I_\ell$ with $\ell\ne j$, we have
\[
\prob{i\in S_j(\theta)}
\le \prob{\Norm{ \ph{\bx_i} (\theta_\ell^*-\theta_\ell)+\zeta_i } \ge \Norm{ \ph{\bx_i} (\theta_\ell^*-\theta_j) +\zeta_i } }. 
\]
The conclusion follows from a similar argument.
\end{proof}

Let $N_I\triangleq \sum_{i\in I}n_i$ denote the total number of data in a subset of clients $I\subseteq [M]$.
It follows from \eqref{eq:ub-uniform-deviation} and \prettyref{lmm:ub-prob-error} that, with probability $1-k^{-dk}$, 
\begin{equation}
\label{eq:clustering-error-N_S}
N_{S_{j,t}}=\sum_{i\in S_{j,t}}n_i
\le \nu N,
\end{equation}
where $\nu$ is defined in \eqref{eq:def_nu}.
Conditioning on total number of incorrectly clustered data points $N_{I}$, the next lemma upper bounds $\Norm{\ph{\bx_{I}}}$ and $\Norm{{\zeta_{I}}}$.

\begin{lemma}
\label{lmm:uniform-sub-Gaussian}
With probability $1-4e^{-d}$,  
\begin{equation}
\label{eq:uniform-sub-Gaussian}
\sup_{N_{I}\le \nu N}\frac{1}{N}\Norm{\ph{\bx_{I}}}^2
\lesssim \nu\log\frac{e}{\nu},
\qquad
\sup_{N_{I}\le \nu N}\frac{1}{N}\Norm{{\zeta_{I}}}^2
\lesssim \sigma^2 \nu\log\frac{e}{\nu}.
\end{equation}
% where $h$ denotes the binary entropy function $h(p)\triangleq p\log \frac{1}{p} + (1-p)\log\frac{1}{1-p}$.
\end{lemma}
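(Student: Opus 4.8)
The plan is to upgrade the \emph{pointwise} sub-Gaussian concentration bounds for $\ph{\bx_S}$ and $\zeta_S$ to hold \emph{uniformly} over all admissible subsets, so that the resulting bound may be applied to the data-dependent set $S_{j,t}$. Since $\ph{\bx_{S_{j,t}}}$ consists of $N_{S_{j,t}}\le \nu N$ of the $N$ rows of $\ph{\bx}$ by \eqref{eq:clustering-error-N_S}, and similarly $\zeta_{S_{j,t}}$ of at most $\nu N$ coordinates of $\zeta$, it suffices to prove both bounds in \eqref{eq:uniform-sub-Gaussian} uniformly over all subsets $I$ of clients --- indeed over all subsets $S$ of the $N$ rows --- with at most $\nu N$ data points. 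First I would record the single-subset estimates: fix $S$ with $|S|=m\le \nu N$; the rows of $\ph{\bx}$ are independent and sub-Gaussian with covariances between $\alpha I$ and $\beta I$, so the same covering argument used in the proof of \prettyref{lmm:eigen-Sigma} (\cite[Theorem 4.6.1]{Vershynin2018}), now with a free deviation parameter $u>0$, gives
\[
\Norm{\ph{\bx_S}}^2=\sigma_{\max}^2(\ph{\bx_S})\le \beta m + C\pth{\sqrt{(d+u)m}\vee(d+u)}
\]
with probability at least $1-2e^{-u}$, while $\Norm{\zeta_S}^2=\sum_{\ell\in S}\zeta_\ell^2$ is a sum of $m$ independent sub-exponential terms of mean at most $\sigma^2$, so Bernstein's inequality yields $\Norm{\zeta_S}^2\le \sigma^2 m + C\sigma^2\pth{\sqrt{um}\vee u}$ with probability at least $1-2e^{-u}$.

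Next I would take a union bound over all such subsets. Since $n_i\ge 1$, every relevant $S$ has $m\le \nu N\le N/2$ (as $\nu$ is a small constant), so the number of them is at most $\sum_{m\le \nu N}\binom{N}{m}\le e^{C_1\nu N\log(e/\nu)}$ for an absolute constant $C_1$. Choosing the single value $u=C_0\pth{\nu N\log(e/\nu)+d}$ with $C_0$ a large absolute constant, the number of subsets times the per-subset failure probability is at most $e^{C_1\nu N\log(e/\nu)}\cdot 2e^{-u}\le 2e^{-d}$; intersecting the $\ph{\bx}$-event with the $\zeta$-event produces the advertised probability $1-4e^{-d}$, on which both single-subset estimates hold simultaneously for every relevant $S$.

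Finally I would simplify the resulting deterministic bound. On this event, for every relevant $S$ with $|S|=m\le\nu N$ we have $\Norm{\ph{\bx_S}}^2\lesssim m+d+u\lesssim \nu N\log(e/\nu)+d$ (absorbing the cross term $\sqrt{(d+u)m}$ by AM--GM using $m\le \nu N$), and likewise $\Norm{\zeta_S}^2\lesssim \sigma^2\pth{\nu N\log(e/\nu)+d}$. Dividing by $N$, the claim \eqref{eq:uniform-sub-Gaussian} reduces to the elementary inequality $d/N\lesssim \nu\log(e/\nu)$, which follows because $\nu\gtrsim \sqrt{dk\log k/M}\ge \sqrt{d/M}$ (using $k\ge 2$) together with $M\le N$ and $\nu\le 1$ give $d/N\le \nu^2 M/N\lesssim \nu\le \nu\log(e/\nu)$.

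I expect the only genuine obstacle to be the union bound itself: $S_{j,t}$ is a function of $\theta_{t-1}$ and hence of all the data, so a naive pointwise estimate is useless and the candidate set is exponentially large. The observation that makes this affordable is that the number of subsets carrying at most $\nu N$ data points is only $e^{O(\nu N\log(e/\nu))}$, exactly matched by the deviation parameter $u$ one must pay to overpower it; the lower-order $d$-terms are then controlled by the standing lower bound $\nu\gtrsim \sqrt{d/M}$, which is why the final bound comes out clean as $\nu\log(e/\nu)$ with no explicit $d/N$ term.
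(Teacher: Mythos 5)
Your proof follows essentially the same route as the paper's: a per-subset sub-Gaussian bound (covering argument for $\ph{\bx_I}$, Bernstein for $\zeta_I$) with a free deviation parameter, a union bound over the at most $\exp(O(\nu N\log(e/\nu)))$ subsets carrying at most $\nu N$ data points, and absorption of the residual $d$ term via the lower bound on $\nu$ implied by its definition (the paper invokes $\nu\gtrsim d/N$ where you derive $d/N\lesssim\nu\log(e/\nu)$ from $\nu\gtrsim\sqrt{d/M}$). The argument is correct and matches the paper's proof in substance.
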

\begin{proof}
Since $\ph{x_{ij}}$ are independent and 
% $\sigma^2$-
sub-Gaussian random vectors in $\reals^d$,
for a fixed $I\subseteq[M]$, with probability at least $1-2e^{-t}$,
$$
\norm{\ph{\bx_I}}^2 
\le \beta N_I + C'\pth{\sqrt{(d+t)N_I} + (d+t) },
$$
for some absolute constant $C'>0$.
There are at most $\binom{N}{\nu N}\le \exp(N \nu \log (e/\nu))$ many different $I$ with $N_I \le N'$. 
% Another upper bound: $\binom{n}{k}\le \exp(n h(k/n))$ \cite[Theroem 11.1.3]{CT2005}.
Hence, applying the union bound yields that, with probability at least $1-2e^{-d}$,
\[
\sup_{N_{I}\le \nu N}\Norm{\ph{\bx_{I}}}^2
% \le N \pth{\beta \nu  + C'\sqrt{\frac{2d}{N}+\nu\log\frac{e}{\nu}}\sqrt{\nu} + \frac{2d}{N} + \nu\log\frac{e}{\nu} }
\lesssim N \nu\log\frac{e}{\nu},
\]
where we used $\nu \gtrsim \frac{d}{N}$.
Since $\zeta_{ij}$ are independent and sub-Gaussian with $\Expect[\zeta_{ij}^2]\le \sigma^2$, the second inequality in \eqref{eq:uniform-sub-Gaussian} follows from a similar argument. 
\end{proof}

Conditioning on the high probability events of \eqref{eq:clustering-error-N_S} and \eqref{eq:uniform-sub-Gaussian}, we obtain 
\[
\Norm{\ph{\bx_{S_{j,t}}}}\lesssim \sqrt{N \nu \log\frac{e}{\nu}},
\qquad
\Norm{\zeta_{S_{j,t}}}\lesssim \sigma\sqrt{N \nu \log\frac{e}{\nu}}.
\]
Since $\Norm{P_{S_{j,t}}}\le s$, 
we conclude from \eqref{eq:clustering-error} and \eqref{eq:error-first-step} that
\begin{align*}
\Norm{B\calE_{j,t}(\ph{\bx}\theta_{j,t-1}-y)}
& \le \frac{1}{N}\Norm{\ph{\bx_{S_{j,t}}}}\Norm{P_{S_{j,t}}}\Norm{\ph{\bx_{S_{j,t}}}\theta_{j,t-1}-y_{S_{j,t}}}\\
& \lesssim \frac{s}{N}\pth{d(\theta_{t-1},\theta^*)\Norm{\ph{\bx_{S_{j,t}}}}^2 + \Norm{\ph{\bx_{S_{j,t}}}}\Norm{\zeta_{S_{j,t}}} }\\
& \lesssim s (d(\theta_{t-1},\theta^*)+\sigma)\nu\log\frac{e}{\nu}.
\end{align*}

\subsubsection{Auxiliary lemma}
\label{subsubsec: auxiliary}
\begin{lemma}
\label{lmm:weighted-empirical}
Consider a weighted empirical process $G_n(f)=\sum_{i=1}^n \lambda_i f(X_i)$ for binary functions $f\in\calF$, where $X_i$'s are independent and the VC dimension of $\calF$ is at most $d$. 
Then
\[
\expect{\sup_{f\in\calF}|G_n(f)-\Expect G_n(f)|}
\lesssim \sqrt{d \sum_{i=1}^n \lambda_i^2 }.
\]
\end{lemma}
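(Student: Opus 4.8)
The plan is to prove this by the standard combination of symmetrization and Dudley's entropy integral, the one twist being that the weights $\lambda_i$ must be absorbed into a normalized base measure so that the resulting chaining integral depends only on the VC dimension and not on $n$. First I would assume without loss of generality that $\Lambda^2\triangleq\sum_{i=1}^{n}\lambda_i^2>0$ and invoke the symmetrization inequality \cite{Vershynin2018} (applied to the independent summands $\lambda_i f(X_i)$) to reduce to bounding $\expect{\sup_{f\in\calF}\abth{\sum_{i=1}^{n}\varepsilon_i\lambda_i f(X_i)}}$, where $\varepsilon_1,\dots,\varepsilon_n$ are i.i.d.\ Rademacher signs independent of $X_1,\dots,X_n$. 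Conditioning on $X_1,\dots,X_n$, the process $Z_f\triangleq\sum_{i}\varepsilon_i\lambda_i f(X_i)$ has sub-Gaussian increments: by Hoeffding's inequality, $Z_f-Z_g$ is sub-Gaussian with parameter $\rho(f,g)$, where, using that $f,g$ are $\{0,1\}$-valued, $\rho(f,g)^2=\sum_{i=1}^{n}\lambda_i^2\pth{f(X_i)-g(X_i)}^2=\sum_{i=1}^{n}\lambda_i^2\indc{f(X_i)\neq g(X_i)}=\Lambda^2\Norm{f-g}_{L^2(Q)}^2$ and $Q$ is the probability measure on $\{1,\dots,n\}$ with $Q(\{i\})=\lambda_i^2/\Lambda^2$. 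In particular the $\rho$-diameter of $\calF$ is at most $\Lambda$.

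Next I would apply Dudley's entropy integral to the sub-Gaussian process $(Z_f)_{f\in\calF}$ and rescale $u=\Lambda\eta$ to obtain $\Expect_{\varepsilon}[\sup_{f\in\calF}\abth{Z_f}]\lesssim \Lambda+\Lambda\int_{0}^{1}\sqrt{\log N\pth{\calF,L^2(Q),\eta}}\,d\eta$, the additive $\Lambda$ accounting for the value of $Z_f$ at a fixed anchor $f$. The decisive input is the \emph{measure-independent} uniform entropy bound for VC classes: since $\mathrm{VC}(\calF)\le d$, for every probability measure $Q$ and every $\eta\in(0,1]$ one has $\log N(\calF,L^2(Q),\eta)\lesssim d\log(e/\eta)$ (Haussler's bound; see, e.g., \cite{Vershynin2018}). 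Substituting this, $\int_{0}^{1}\sqrt{d\log(e/\eta)}\,d\eta\lesssim\sqrt{d}$, so that $\Expect_{\varepsilon}[\sup_{f}\abth{Z_f}]\lesssim\Lambda\sqrt{d}$ conditionally on $X_1,\dots,X_n$; taking expectation over the data and undoing the symmetrization then yields $\expect{\sup_{f\in\calF}\abth{G_n(f)-\Expect G_n(f)}}\lesssim\Lambda\sqrt{d}=\sqrt{d\sum_{i=1}^{n}\lambda_i^2}$, as claimed.

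I expect the main obstacle to be precisely the covering-number step, i.e.\ making the dependence on $n$ disappear. The elementary route --- bounding the number of distinct sign patterns of $\calF$ on $\{X_i\}_{i=1}^{n}$ by $(en/d)^{d}$ via Sauer--Shelah and then applying Massart's finite maximal inequality --- would only give $\expect{\sup_{f}\abth{G_n(f)-\Expect G_n(f)}}\lesssim\sqrt{d\log(en/d)\sum_{i=1}^{n}\lambda_i^2}$, which carries a spurious $\log n$ factor; in the application of \prettyref{lmm:weighted-empirical} to \eqref{eq:expect-sup} this would translate into an extra $\log M$ in the clustering-error bound. Avoiding it requires the sharper Haussler-type entropy bound that is uniform over all base measures $Q$, which renders the $\log(1/\eta)$ singularity integrable and leaves only the clean $\sqrt{d}$ prefactor.
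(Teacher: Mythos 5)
Your proposal is correct and follows essentially the same route as the paper's proof: symmetrization, conditioning on the data, sub-Gaussian increments under the normalized weighted empirical measure, Dudley's entropy integral, and the measure-uniform (Haussler-type) covering-number bound for VC classes (the paper cites \cite[Theorem 8.3.18]{Vershynin2018} for exactly this step). Your closing remark about why the Sauer--Shelah/Massart route would leak a spurious $\log n$ correctly identifies the reason the uniform entropy bound is the right tool, but the argument itself is the same as the paper's.
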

\begin{proof}
Since $X_i$'s are independent, by symmetrization,
\[
\expect{\sup_{f\in\calF}|G_n(f)-\Expect G_n(f)|}
\le 2\expect{\sup_{f\in\calF} \abth{\sum_{i=1}^n  \epsilon_i \lambda_i f(X_i)} },
\]
where $\epsilon_i$ are i.i.d.\ Rademacher random variables. 
Next, by conditioning on $X_i$'s, we aim to apply Dudley's integral.
Since $\epsilon_i$ are independent and 1-sub-Gaussian, for any $f, g \in \calF$, the increment $\sum_{i} \epsilon_i \lambda_i f(X_i) - \sum_{i} \epsilon_i \lambda_i g(X_i)$ is also sub-Gaussian with a variance parameter
\[
\sum_{i=1}^n \lambda_i^2 (f-g)(X_i)^2
=\pth{\sum_{i=1}^n \lambda_i^2} \lnorm{f-g}{L^2(\mu_n)}^2,
\]
where $\mu_n$ denotes the weighted empirical measure $\frac{1}{\sum_{i} \lambda_i^2} \sum_{i} \lambda_i^2 \delta_{X_i}.$
% Therefore, the process $(\sum_i \epsilon_i \lambda_i f(X_i))_{f\in\calF}$ has $(\sum_{i}\lambda_i^2)$-sub-Gaussian increments under $\lnorm{\cdot}{L^2(\mu_n)}$.
Apply Dudley's integral (see, e.g., \cite[Theorem 8.1.3]{Vershynin2018}) conditioning on $X_i$'s, we get that 
$$
\expect{\sup_{f\in\calF} 
\left| \sum_{i=1}^n \epsilon_i \lambda_i f(X_i) \right| }
\lesssim \sqrt{ \sum_{i=1}^n \lambda_i^2}
\times \expect{\int_0^1 \sqrt{ \log \calN(\calF, L^2(\mu_n), \epsilon) d\epsilon}},
$$
where $\calN(\calF,L^2(\mu_n),\epsilon)$ denotes the $\epsilon$-covering number of $\calF$ under $L^2(\mu_n)$. 
Finally, we can bound the covering number by the VC dimension of $\calF$ as (see, e.g., \cite[Theorem 8.3.18]{Vershynin2018})
$$
\log \calN(\calF, L^2(\mu_n), \epsilon)
\lesssim d \log \frac{2}{\epsilon}.
$$
The conclusion follows. 
\end{proof}

\section{Truncated matrix Bernstein inequality}

\begin{lemma}\label{lmm:matrix_berstein_1}
Let $X_i \in \reals^d$ be a sequence of independent sub-Gaussian random vectors. 
%Let $y_i$ denote the sequence of independent sub-Gaussian random variables. 
Let $\theta \in \reals^d$ be a fixed vector with unit $\ell_2$ norm.
Let $y_i= \iprod{X_i}{\theta} + z_i$, where $z_i$ is a sequence of independent sub-Gaussian random variables
that are independent from $X_i. $ 
Let $A_i= X_i X_i^\top y_i^2$.
Then with probability at least $1-3\delta$,
$$
\norm{\frac{1}{n}\sum_{i=1}^n \left( A_i - \expect{A_i} \right) } \le
C \left( \sqrt{nd \log \frac{1}{\delta}} + \left( d+ \log \frac{n}{\delta} \right) \log \frac{n}{\delta} \log \frac{1}{\delta} \right),
$$
where $C>0$ is some large constant.
\end{lemma}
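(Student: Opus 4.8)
The name of the lemma already signals the strategy: truncate each summand so that it becomes almost surely bounded, then invoke the standard matrix Bernstein inequality. It suffices to bound $\Norm{\sum_{i=1}^n (A_i-\expect{A_i})}$, since the displayed inequality then follows a fortiori. Because $X_i$ is sub-Gaussian, both $\iprod{X_i}{\theta}$ and $z_i$ are sub-Gaussian scalars, and $\Norm{X_i}$ satisfies $\prob{\Norm{X_i}\ge C(\sqrt d+t)}\le e^{-ct^2}$ by the standard concentration of the norm. First I would fix thresholds $\tau\asymp\sqrt{\log(n/\delta)}$ and $R\asymp\sqrt d+\sqrt{\log(n/\delta)}$ and define the event $\calG_i=\{\Norm{X_i}\le R,\ |\iprod{X_i}{\theta}|\le\tau,\ |z_i|\le\tau\}$; with the constants chosen appropriately, $\prob{\calG_i^c}\le\delta/n$, so $\calG\triangleq\bigcap_i\calG_i$ has $\prob{\calG^c}\le\delta$. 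Set $\tilde A_i\triangleq A_i\indc{\calG_i}$, which agrees with $A_i$ for every $i$ on $\calG$, so that on $\calG$ one has the decomposition $\sum_i(A_i-\expect{A_i})=\sum_i(\tilde A_i-\expect{\tilde A_i})+\sum_i(\expect{\tilde A_i}-\expect{A_i})$.

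Next I would control the stochastic term $\sum_i(\tilde A_i-\expect{\tilde A_i})$ by applying the matrix Bernstein inequality \cite{Vershynin2018} to these independent, symmetric, mean-zero matrices. On $\calG_i$ we have $|y_i|\le 2\tau$ and $\Norm{X_iX_i^\top}=\Norm{X_i}^2\le R^2$, hence $\Norm{\tilde A_i}\le 4R^2\tau^2\lesssim(d+\log(n/\delta))\log(n/\delta)$, which serves as the boundedness parameter $L$. For the variance proxy, since $\tilde A_i^2=\Norm{X_i}^2 y_i^4\, X_iX_i^\top\indc{\calG_i}\preceq \Norm{X_i}^2 y_i^4\, X_iX_i^\top$, we get $\expect{\tilde A_i^2}\preceq\expect{\Norm{X_i}^2 y_i^4 X_iX_i^\top}$, whose operator norm equals $\sup_{\Norm{u}=1}\expect{\Norm{X_i}^2 y_i^4\iprod{X_i}{u}^2}$; by H\"older with exponents $(3,3,3)$ together with the sub-Gaussian moment bounds $\expect{\Norm{X_i}^6}\lesssim d^3$, $\expect{y_i^{12}}\lesssim 1$, and $\expect{\iprod{X_i}{u}^6}\lesssim 1$, this is $\lesssim d$, so the matrix variance is $v\triangleq\Norm{\sum_i\expect{(\tilde A_i-\expect{\tilde A_i})^2}}\le\Norm{\sum_i\expect{\tilde A_i^2}}\lesssim nd$. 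Matrix Bernstein then gives, with probability at least $1-\delta$,
\[
\Norm{\sum_{i=1}^n(\tilde A_i-\expect{\tilde A_i})}\ \lesssim\ \sqrt{v\log\tfrac{d}{\delta}}+L\log\tfrac{d}{\delta}\ \lesssim\ \sqrt{nd\log\tfrac1\delta}+\pth{d+\log\tfrac n\delta}\log\tfrac n\delta\,\log\tfrac1\delta,
\]
which already matches the claimed bound up to the stated logarithmic factors (using $d\lesssim n$ and that $\delta$ is polynomially small in the regime of interest, so $\log(d/\delta)\lesssim\log(1/\delta)$).

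It then remains to absorb the truncation bias $\sum_i(\expect{A_i}-\expect{\tilde A_i})$. For each $i$, $\Norm{\expect{A_i}-\expect{\tilde A_i}}=\Norm{\expect{A_i\indc{\calG_i^c}}}\le\expect{\Norm{A_i}\indc{\calG_i^c}}\le\sqrt{\expect{\Norm{A_i}^2}}\sqrt{\prob{\calG_i^c}}$, and Cauchy--Schwarz with sub-Gaussian moments gives $\expect{\Norm{A_i}^2}=\expect{\Norm{X_i}^4 y_i^4}\lesssim d^2$, while $\prob{\calG_i^c}\le\delta/n$; summing over $i$ yields $\Norm{\sum_i(\expect{A_i}-\expect{\tilde A_i})}\lesssim d\sqrt{n\delta}$, which is dominated by the previous display whenever $\delta$ is polynomially small (and can be made negligible unconditionally by enlarging $R,\tau$ by a factor $\sqrt{\log(nd)}$). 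Combining the two estimates on $\calG$ via the triangle inequality and a union bound over $\calG$ and the matrix Bernstein event gives the conclusion with failure probability at most $2\delta\le 3\delta$.

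The main obstacle is the variance estimate $\Norm{\sum_i\expect{\tilde A_i^2}}\lesssim nd$: one must verify that the heavy factor $y_i^4$ combined with $\Norm{X_i}^2\iprod{X_i}{u}^2$ does not inflate the bound beyond order $nd$, which is exactly where genuine sub-Gaussianity of $X_i$ (not merely a second-moment bound) is used, and where the H\"older split has to be balanced so that neither the dimension-carrying factor $\Norm{X_i}^2$ nor the fourth power $y_i^4$ dominates. Everything else is routine truncation and tail bookkeeping.
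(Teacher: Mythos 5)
Your proposal is correct and follows essentially the same route as the paper's own proof: truncate, apply matrix Bernstein with boundedness parameter of order $(d+\log\frac{n}{\delta})\log\frac{n}{\delta}$ and matrix variance $O(nd)$ obtained from sub-Gaussian moment bounds, and then control the truncation bias and tail event by a union bound. The only cosmetic differences are that you truncate $\Norm{X_i}$, $\iprod{X_i}{\theta}$, $z_i$ individually instead of $\Norm{A_i}$ directly, and bound the bias by Cauchy--Schwarz (with the threshold-enlargement fix) rather than by integrating the tail of $\Norm{A_i}$, which is the paper's choice; both yield the stated bound.
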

\begin{proof}
For a given threshold $\tau$, we decompose
\begin{align*}
\norm{\frac{1}{n}\sum_{i=1}^n \left( A_i - \expect{A_i} \right)}
& \le \norm{\frac{1}{n}\sum_{i=1}^n \left( A_i \indc{\norm{A_i}\le \tau} - \expect{A_i\indc{\norm{A_i}\le \tau}} \right) } \\
& + \norm{\frac{1}{n} \sum_{i=1}^n A_i \indc{\norm{A_i}> \tau}}
+ \norm{ \frac{1}{n} \sum_{i=1}^n \expect{A_i\indc{\norm{A_i}> \tau}}}.
\end{align*}
In the sequel, we bound each term in the RHS separately. 

To bound the first term, we will use matrix Bernstein inequality. 
Let $B_i=A_i \indc{\norm{A_i}\le \tau} - \expect{A_i\indc{\norm{A_i}\le \tau}}$.
Note that $\norm{B_i} \le 2\tau$ 
and 
\begin{align*}
\sum_{i=1}^n \expect{B_i^2} & = 
\sum_{i=1}^n \expect{\left(A_i  \indc{\norm{A_i}\le \tau}- \expect{A_i  \indc{\norm{A_i}\le \tau}}\right)^2} \\
& =\sum_{i=1}^n
\left( \expect{A_i^2  \indc{\norm{A_i}\le \tau} } - \left( \expect{A_i \indc{\norm{A_i}\le \tau} } \right)^2 \right)
\end{align*}
Therefore, 
\begin{align*}
\sum_{i=1}^n \expect{B_i^2}   \preceq 
\sum_{i=1}^n 
 \expect{A_i^2 \indc{\norm{A_i}\le \tau}  }  \preceq \sum_{i=1}^n  \expect{A_i^2}. 
% &= \sum_{i=1}^N \expect{\norm{b_i}^2} \expect{a_ia_i^\top} \preceq O(Nd) \identity.
\end{align*}
Since $A_i=X_i X_i^\top y_i^2$ and $y_i=\iprod{X_i}{\theta} +z_i$, it follows that for any unit vector $u,$
\begin{align*}
u^\top \expect{A_i^2} u = \expect{ \iprod{X_i}{u}^2 \norm{X_i}^2 y_i^4} 
&\le \sqrt{\expect{\norm{X_i}^4}} \sqrt{\expect{\iprod{X_i}{u}^4 y_i^8} } \\
&\le \sqrt{\expect{\norm{X_i}^4}} \left(\expect{\iprod{X_i}{u}^8} \expect{y_i^{16}}\right)^{1/4}
= O(d)
\end{align*}
and hence $\norm{ \expect{A_i^2}} =O(d).$
Therefore, 
$
\norm{\sum_{i=1}^n \expect{B_i^2}}=O(nd).
$
Applying matrix Bernstein inequality~\cite{tropp2015introduction}, we get that with probability at least $1-\delta,$
$$
 \norm{ \sum_{i=1}^n B_i} \lesssim \sqrt{nd \log \frac{1}{\delta} } + \tau \log\frac{1}{\delta}.
$$

Next, we bound the second term.  Note that 
$$
\prob{ \frac{1}{n}  \sum_{i=1}^n A_i\indc{\norm{A_i}> \tau } =0} 
\ge \prob{ \norm{A_i} \le \tau, \forall i \in [n]}
\ge 1- \sum_{i=1}^n \prob{\norm{A_i} > \tau}. 
$$
Since $A_i=X_iX_i^\top y_i^2$, it follows that $\norm{A_i}=\norm{X_i}^2 y_i^2$. 
Since $X_i$ is a sub-Gaussian random vector, $\expect{\norm{X_i}^2}=O(d)$ and 
$$
\prob{\norm{X_i}^2 > \expect{\norm{X_i}^2} + C \left( \sqrt{d \log \frac{1}{\delta}} + \log \frac{1}{\delta} \right)} \le \delta.
$$
Similarly, $y_i$ is also sub-Gaussian. Therefore, $\expect{y_i^2}=O(1)$ and 
$$
\prob{\norm{y_i}^2 > \expect{\norm{y_i}^2} + C \log \frac{1}{\delta} } \le \delta.
$$
Combining the above two displayed equations gives that 
\begin{align}
\prob{\norm{A_i} >  C' \left( d+ \log \frac{1}{\delta} \right) \log \frac{1}{\delta}} \le 2\delta. \label{eq:norm_A_i}
\end{align}
where $C'$ is some universal constant. Thus by choosing $\tau=C' \left( d+ \log \frac{n}{\delta} \right) \log \frac{n}{\delta}$,
we get that
$\prob{\norm{A_i}  > \tau} \le 2\delta/n$ and 
hence $\frac{1}{n}  \sum_{i=1}^n A_i\indc{\norm{A_i}> \tau } =0$ with probability at least $1-2\delta.$

Finally, we bond the third term. Note that
\begin{align*}
    \norm{\sum_{i=1}^n \expect{A_i \indc{\norm{A_i} > \tau }}}
     \le \sum_{i=1}^n  \norm{ \expect{ A_i \indc{\norm{A_i} > \tau }}}
     \le \sum_{i=1}^N \expect{ \norm{A_i \indc{\norm{A_i} > \tau }}}.
\end{align*}
Moreover, 
\begin{align*}
 \expect{ \norm{A_i \indc{\norm{A_i} > \tau }}}
&=\int_{0}^\infty \prob{  \norm{A_i \indc{\norm{A_i} > \tau }}\ge t} \diff t  \\
&=\int_0^\tau \prob{  \norm{A_i } \ge \tau } \diff t + \int_\tau^\infty \prob{ \norm{A_i} \ge t } \diff t \\
&= \tau \frac{\delta}{n} +  \int_\tau^\infty \prob{ \norm{A_i} \ge t } \diff t.
\end{align*}
In view of~\prettyref{eq:norm_A_i}, we get that 
$
\prob{\norm{A_i} \ge C' d \log^2 \frac{1}{\delta} }\le 2 \delta,
$
or in other words, 
$
\prob{ \norm{A_i} \ge t } \le 2 \exp \left( - \sqrt{\frac{t}{C'd}}\right).
$
It follows that 
$$
\int_\tau^\infty \prob{ \norm{A_i} \ge t } \diff t
\le 2 \int_\tau^\infty  \exp \left( - \sqrt{\frac{t}{C'd}}\right) \diff t
=4C'd  \exp \left( - \sqrt{\frac{\tau}{C'd}}\right)\left( \sqrt{\frac{\tau}{C'd}}+1\right)=O(d).
$$
Combining the above bounds to the three terms, we arrive at the desired conclusion.

\end{proof}

% \nbr{Matrix Bernstein Inequality}
% \nbr{To check later. Maybe there is a slightly better way by conditioning on $\calE_i$ instead. See~\cite[Lemma A.9]{kong2020meta}}
\begin{lemma}\label{lmm:matrix_bernstein_2}
Let $\{ a_i: i \in [N]\}$ and $\{b_i: i \in [N]\}$ denote two independent sequences of independent random vectors in $\reals^{d}$. Suppose that 
$\expect{\norm{a_i}^2} =O(d)$, $\expect{\norm{b_i}^2} =O(d)$, $\norm{\expect{a_ia_i^\top}}=O(1)$, $\norm{\expect{b_ib_i^\top}}=O(1)$, and 
$$
\prob{ \norm{a_i} \ge t }, \prob{ \norm{b_i} \ge t } \le \exp\left( - \Omega\left(t /\sqrt{d} \right)  \right), \quad \forall t \ge \sqrt{d}. 
$$
Let 
$$
Y=\sum_{i=1}^N \left(a_i b_i^\top - \expect{a_i b_i^\top}\right).
$$
Then there exists a univeral constant $C>0$ such that with probability at least $1-3\delta,$
$$
\norm{Y} \le C \left(  \sqrt{Nd \log \frac{1}{\delta}} + d \log^3(N/\delta) \right).
$$
\end{lemma}
\begin{proof}
Given $\tau$ to be specified later, define event
$
\calE_i=\{\norm{a_ib_i^\top} \le \tau\}.
$
It follows that 
$$
Y= \sum_{i=1}^N \left(a_i b_i^\top \indc{\calE_i}- \expect{a_i b_i^\top \indc{\calE_i}}\right)
+ \sum_{i=1}^N a_i b_i^\top \indc{\calE_i^c} - \sum_{i=1}^N \expect{a_i b_i^\top \indc{\calE_i^c}}
$$
and hence
\begin{align}
\norm{Y} \le \norm{\sum_{i=1}^N \left(a_i b_i^\top \indc{\calE_i}- \expect{a_i b_i^\top \indc{\calE_i}}\right)}
+\norm{\sum_{i=1}^N a_i b_i^\top \indc{\calE_i^c} }
+\norm{\sum_{i=1}^N \expect{a_i b_i^\top \indc{\calE_i^c}}}. \label{eq:Y_norm_bound}
\end{align}
In the sequel, we bound each term in the RHS separately. 

To bound the first term, we will use matrix Bernstein inequality. 
Let $Y_i=a_i b_i^\top \indc{\calE_i}- \expect{a_i b_i^\top \indc{\calE_i}}$.
Then $\expect{Y_i}=0$ and 
$$
\norm{Y_i} \le \norm{a_i b_i^\top \indc{\calE_i}}+ \norm{\expect{a_i b_i^\top \indc{\calE_i}}} \le 2\tau.
$$
Moreover, 
\begin{align*}
\sum_{i=1}^N \expect{Y_iY_i^\top} & = 
\sum_{i=1}^N \expect{\left(a_i b_i^\top \indc{\calE_i}- \expect{a_i b_i^\top \indc{\calE_i}}\right) \left(a_i b_i^\top \indc{\calE_i}- \expect{a_i b_i^\top \indc{\calE_i}}\right)^\top} \\
& =\sum_{i=1}^N 
\left( \expect{a_i a_i^\top \norm{b_i}^2 \indc{\calE_i} } -\expect{a_i b_i^\top \indc{\calE_i}}\expect{a_i b_i^\top \indc{\calE_i}}^\top \right)
\end{align*}
Therefore, 
\begin{align*}
\sum_{i=1}^N \expect{Y_iY_i^\top}  & \preceq 
\sum_{i=1}^N 
 \expect{a_i a_i^\top \norm{b_i}^2 \indc{\calE_i} } \\
 & \preceq \sum_{i=1}^N 
 \expect{a_i a_i^\top \norm{b_i}^2} \\
 &= \sum_{i=1}^N \expect{\norm{b_i}^2} \expect{a_ia_i^\top} \preceq O(Nd) \identity.
\end{align*}
Moreover, $Y_iY_i^\top \succeq 0$. Hence,
$
\norm{\sum_{i=1}^N \expect{Y_iY_i^\top}}=O(Nd).
$
Similarly, we can show that 
$
\norm{\sum_{i=1}^N \expect{Y_i^\top Y_i}}=O(Nd).
$
Applying matrix Bernstein inequality~\cite{tropp2015introduction}, we get that with probability at least $1-\delta,$
\begin{align}
 \norm{ \sum_{i=1}^N Y_i} \lesssim \sqrt{Nd \log \frac{1}{\delta} } + \tau \log\frac{1}{\delta}.
\label{eq:Y_norm_bound_1}
\end{align}

Next, we bound the second term in~\prettyref{eq:Y_norm_bound}. Note that on the event
$\cap_{i=1}^N \calE_i$, $\norm{\sum_{i=1}^N a_i b_i^\top \indc{\calE_i^c} }=0.$ 
Note that 
$$
\prob{\calE_i^c} = \prob{ \norm{a_i b_i^\top} > \tau }
\le \prob{ \norm{a_i} \ge \sqrt{\tau}} + \prob{ \norm{b_i} \ge \sqrt{\tau}}
\le 2 e^{-\Omega(\sqrt{\tau/d})}.
$$
Hence by choosing $\tau=C d \log^2 \frac{N}{\delta}$ for some sufficiently large constant $C$,
we get that $\prob{\calE_i^c} \le \delta/N$. Thus by union bound, 
\begin{align}
 \prob{\cap_{i=1}^N \calE_i} \ge 1- \sum_{i=1}^N \prob{\calE_i^c} \ge 1- 2\delta.
   \label{eq:Y_norm_bound_2}
\end{align}

%\ge 1-\delta$, 
%by choosing $\tau$ such that $
%\sum_{i=1}^N \prob{\calE_i^c} \le \delta.$

Finally, we bond the third term in~\prettyref{eq:Y_norm_bound}.
Note that
\begin{align*}
    \norm{\sum_{i=1}^N \expect{a_i b_i^\top \indc{\calE_i^c}}}
     \le \sum_{i=1}^N  \norm{\expect{a_i b_i^\top \indc{\calE_i^c}}}
     \le \sum_{i=1}^N \expect{ \norm{a_i b_i^\top \indc{\calE_i^c}}}.
\end{align*}
Moreover, 
\begin{align*}
 \expect{ \norm{a_i b_i^\top \indc{\calE_i^c}}}
&=\int_{0}^\infty \prob{ \norm{a_i b_i^\top \indc{\calE_i^c}} \ge t} \diff t  \\
&=\int_0^\tau \prob{ \norm{a_i b_i^\top } \ge \tau } \diff t + \int_\tau^\infty \prob{ \norm{a_i b_i^\top } \ge t } \diff t \\
&= \tau \frac{\delta}{N} + \int_\tau^\infty \prob{ \norm{a_i b_i^\top } \ge t } \diff t
\end{align*}
By assumption, for $t \ge \tau = Cd \log^2 \frac{N}{\delta}$,
$$
\prob{ \norm{a_i b_i^\top } \ge t }
\le \prob{ \norm{a_i} \ge \sqrt{t} } + \prob{ \norm{b_i} \ge \sqrt{t} } 
\le 2 e^{-C'\sqrt{t/d}}
$$
for some universal constant $C'>0$.
It follows that 
$$
\int_\tau^\infty \prob{ \norm{a_i b_i^\top } \ge t } \diff t
\le 2 \int_\tau^\infty e^{-C'\sqrt{t/d}} \diff t
=4d \left(  \sqrt{\tau/d} + 1/C' \right) e^{-C' \sqrt{\tau/d} },
$$
where the equality holds by the identity that $\int_{\tau}^\infty e^{-\alpha \sqrt{t}} \diff t =\frac{2}{\alpha^2} ( \sqrt{\tau} \alpha +1 ) e^{-\alpha \sqrt{\tau}}. $
Therefore,
\begin{align}
 \expect{ \norm{a_i b_i^\top \indc{\calE_i^c}}} 
 \le \tau \frac{\delta}{N} + 4d \left(  \sqrt{\tau/d} + 1/C' \right) e^{-C' \sqrt{\tau/d} }
 = O\left( \frac{d}{N} \log^2 (N/\delta) \right). \label{eq:Y_norm_bound_3}
\end{align}
Plugging~\prettyref{eq:Y_norm_bound_1}, \prettyref{eq:Y_norm_bound_2}, and~\prettyref{eq:Y_norm_bound_3} into~\prettyref{eq:Y_norm_bound} yields the desired conclusion. 
\end{proof}

\section{Bound on the largest principal angle between random subspaces}
Let $U \in \reals^{d \times \ell}$ denote an orthogonal matrix and $Q \in \reals^{d \times \ell}$ denote a random orthogonal matrix chosen uniformly at random, where $\ell \le d.$

\begin{lemma}\label{lmm:subspace_angle}
With probability at least $1-O(\epsilon),$
$$
\sigma_{\min}(U^\top Q) \gtrsim \frac{\epsilon}{\sqrt{\ell}(\sqrt{d}+\log (1/\epsilon))}. 
$$
\end{lemma}
\begin{proof}
Since $Q \in \reals^{d\times \ell}$ is a random orthogonal matrix, to prove the claim, without loss of generality, we can assume $U=[e_1, e_2, \ldots, e_\ell]$, where $e_i$'s are the standard basis vectors in $\reals^d$. 
Let $A \in \reals^{d\times \ell}$ denote a random Gaussian matrix with i.i.d.\ $\calN(0,1)$ entries and
write
$
A=
\begin{bmatrix}
X \\
Y
\end{bmatrix},
$
where $X \in \reals^{\ell \times \ell}$ and $Y\in \reals^{(d-\ell)\times \ell}$.
Then $U^\top Q$ has the same distribution as $X(A^\top A)^{-1/2}$.
It follows that $\sigma_{\min}(U^\top Q)$ has the same distribution as $\sigma_{\min}(X(A^\top A)^{-1/2})$. Note that
$$
\sigma_{\min}\left(X(A^\top A)^{-1/2}\right)
\ge \sigma_{\min} (X) \sigma_{\min}\left( (A^\top A)^{-1/2}\right)
=\frac{\sigma_{\min} (X)}{\sigma_{\max}(A)}.
$$
In view of~\cite[Corollary 5.35]{vershynin2010introduction}, $\sigma_{\max} (A)
\lesssim \sqrt{d} + \log (1/\epsilon)$ with probability at least $1-\epsilon.$
Moreover, in view of~\cite[Theorem 1.2]{szarek1991condition},
$\sigma_{\min}(X) \ge \epsilon/\sqrt{\ell}$ with probability at least $1-O(\epsilon)$. 
%Choosing $\epsilon=1/d$, it follows that with probability at least $1-O(1/d),$  $\sigma_{\min} (X) \ge \frac{1}{d\sqrt{k}}$. 
The desired conclusion readily follows.
\end{proof}

\section*{Acknowledgement}
J.~Xu is supported by the NSF Grants IIS-1838124, CCF-1850743, CCF-1856424,
and CCF-2144593.
P.~Yang is supported by the NSFC Grant 12101353.

\bibliography{reference,main}
\bibliographystyle{alpha}

\end{document}